\pdfoutput=1

\documentclass[11pt]{article}
\ifdefined\pdfsuppressptexinfo\pdfsuppressptexinfo=-1\fi
\usepackage[preprint]{acl}
\usepackage{times}
\usepackage{latexsym}
\usepackage[T1]{fontenc}
\usepackage[utf8]{inputenc}
\usepackage{microtype}
\IfFileExists{inconsolata.sty}{\usepackage{inconsolata}}{}
\usepackage{graphicx}
\usepackage{amsmath,amssymb,amsthm}
\usepackage{booktabs}
\usepackage{adjustbox}  %
\usepackage{multirow}
\usepackage{xcolor}
\usepackage{colortbl}
\renewcommand{\topfraction}{0.9}\renewcommand{\bottomfraction}{0.8}\renewcommand{\textfraction}{0.07}
\renewcommand{\floatpagefraction}{0.75}\renewcommand{\dbltopfraction}{0.9}\renewcommand{\dblfloatpagefraction}{0.75}
\definecolor{oursbg}{HTML}{FCE9DC}   %
\definecolor{basebg}{HTML}{E4EFF7}   %
\definecolor{gaincol}{HTML}{1A7A3A}  %
\definecolor{losscol}{HTML}{B2182B}  %
\definecolor{cigray}{HTML}{767676}   %
\definecolor{refgray}{HTML}{555555}  %
\newcommand{\best}[1]{\textbf{#1}}
\newcommand{\second}[1]{\underline{#1}}
\newcommand{\up}[1]{\textcolor{gaincol}{\textbf{#1}}}
\newcommand{\dn}[1]{\textcolor{losscol}{\textbf{#1}}}
\newcommand{\ns}[1]{\textcolor{cigray}{#1}}
\newcommand{\tabnote}[1]{\par\vspace{2pt}{\footnotesize\raggedright #1\par}}
\newcommand{\dstack}[2]{\begin{tabular}[t]{@{}c@{}}#1\\[-1.5pt]{\scriptsize\color{cigray}#2}\end{tabular}}
\newcommand{\rf}[1]{\textcolor{refgray}{#1}}
\newcommand{\grp}[1]{\textsc{#1}}
\newcommand{\oursrow}{\rowcolor{oursbg}}
\newcommand{\baserow}{\rowcolor{basebg}}
\usepackage{float}
\usepackage{placeins}
\makeatletter\newcommand{\captionof}[1]{\def\@captype{#1}\caption}\makeatother  %
\usepackage{enumitem}
\usepackage{pgfplots}
\pgfplotsset{compat=1.18}
\usepgfplotslibrary{groupplots}

\newcommand{\method}{ReMaRk}         %
\newif\ifarmunion
\newif\ifarmrw
\newif\ifarmmean
\armuniontrue      %

\newcommand{\E}{\mathbb{E}}
\newcommand{\1}{\mathbf{1}}
\newcommand{\phat}{\hat p}
\newcommand{\Var}{\mathrm{Var}}
\newcommand{\Cov}{\mathrm{Cov}}
\newcommand{\tr}{\mathrm{tr}}
\newcommand{\gS}{\hat g_{\mathrm S}}
\newcommand{\gM}{\hat g_{\mathrm M}}
\newcommand{\Jrow}{J_{\mathrm{row}}}
\newcommand{\Jreq}{J_{\mathrm{req}}}
\newtheorem{proposition}{Proposition}
\newtheorem{lemma}{Lemma}

\newfloat{algorithm}{tbp}{loa}
\floatname{algorithm}{Algorithm}
\setlist{nosep,leftmargin=*}

\title{Many Right Answers, One Training Example:\\ Removing Responder Noise from GRPO for LLM Recommenders}

\author{Xin Li \quad Hao Jiang \quad Jinghao Guo \quad Weisi Lin \quad Chau Yuen \\
  Nanyang Technological University, Singapore \\
  \url{https://lixin.ai/ReMaRk}}

\begin{document}
\maketitle

\begin{abstract}
A recommendation request on Reddit draws several replies, each naming different titles. When an LLM recommender is
trained on such threads with reinforcement learning (RL), which replies should a generated list be rewarded against?
The state-of-the-art recipe, Rank-GRPO, picks one reply per training example, so the advantage mostly records which
reply was drawn (85\% of its variance) rather than how good the list is. We argue that RL should train on the request,
not on each answer, and propose \method{}: it scores every list once against the union of all replies' titles and
samples each request equally. This makes RL cheaper. On seven backbones and three datasets, \method{} reaches the
validation recall of 2,000 per-reply RL steps in 25 to 53\% of the steps and leads at that budget; trained until
neither improves, it is never significantly behind Rank-GRPO and in four of seven pairs reaches Rank-GRPO's best with
fewer GPU-hours. The same reward helps on ProtoQA, where about 100 people answer each question, which points to a
simple rule for RL with several valid human answers: reward them jointly.
\end{abstract}

\section{Introduction}
\begin{figure}[t!]
\centering
\includegraphics[width=\columnwidth]{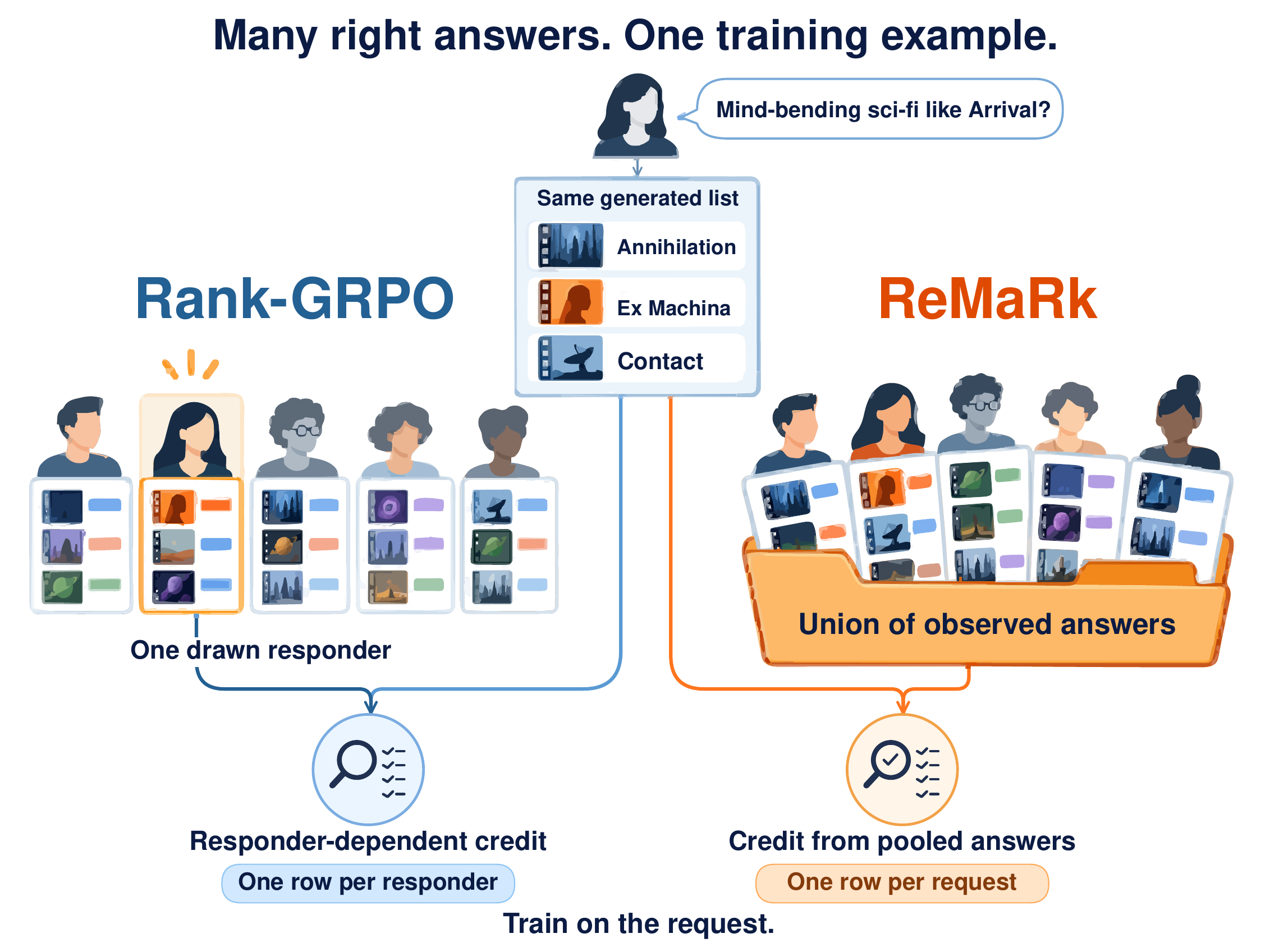}
\caption{One request, five responders and one generated list. Rank-GRPO credits the list against one drawn
responder, so its reward depends on which one was drawn; \method{} credits it once against the union of all
responders' items.}
\label{fig:teaser}
\end{figure}

\label{sec:intro}

People often ask online communities for recommendations. A Reddit user who wants ``mind-bending sci-fi like
\textit{Arrival}, not action-heavy'' gets replies from several people, each suggesting a few films
(Figure~\ref{fig:teaser}). Large language models (LLMs) can learn to answer such requests by writing a ranked list of
titles \citep{he2023large}. The strongest recipe, Rank-GRPO \citep{zhu2026rankgrpo}, first fine-tunes a small LLM on
example lists and then improves it with reinforcement learning (RL): the model writes several candidate lists for a
request, each list earns a reward for every title that people actually suggested, and training shifts the model toward
the lists that scored higher. Its data, Reddit-v2 \citep{he2023large,zhu2025crag}, have 5.3 people replying to a
request on average (\emph{responders}). Which of their answers should a list be rewarded against?

Rank-GRPO picks one: it turns each responder into a separate training example and rewards a list only for the titles
that this responder named. Responders rarely agree, and most suggested titles are named by a single person,
so a good list scores high or low depending on which responder happened to be drawn, much as
if an exam were graded against one of several valid answer keys picked at random. The disagreement is legitimate
variation, not annotation error, but the estimator turns it into noise: which responder was drawn accounts for 85\% of
the variance of the advantages, and in more than half of the comparisons between the eight lists every list gets the
same score, so the model learns nothing from them (Table~\ref{tab:data}).

We argue that RL should train on the request, not on each answer. The responder is a nuisance variable of how the
data were collected, and averaging the reward over it is a Rao--Blackwellization: at a fixed advantage scale the
expected update is unchanged and a variance term disappears (Proposition~\ref{prop:rb}). GRPO's per-group scaling
complicates this, because an averaged reward is fractional and dividing by a group's standard deviation can amplify
one responder's vote $n$-fold, whereas the scale of a 0/1 reward stays bounded (Proposition~\ref{prop:norm}). \method{}
(\textbf{Re}sponder-\textbf{Ma}rginalized \textbf{R}an\textbf{k}-level RL) therefore keeps a 0/1 reward: it rewards
each position of a list against the union of all the request's responders' titles and samples each request equally
instead of once per responder (Figure~\ref{fig:teaser}); nothing else in Rank-GRPO changes. Lower variance predicts the
same end point reached sooner, so we measure the RL compute needed to reach a target, the cost that dominates RL
post-training \citep{devvrit2026the}.

We find:
\begin{enumerate}[leftmargin=*,itemsep=2pt,topsep=2pt]
\item \textbf{The same quality with less RL compute.} In all ten backbone and dataset settings \method{} reaches the
2,000-step validation R@20 of its baseline in 25 to 53\% of the steps and leads by 0.8 to 1.6 points at 2,000 steps
(Table~\ref{tab:main}). Trained until neither improves, it is never significantly behind on test and in four of seven
pairs reaches Rank-GRPO's best with 18 to 63\% of the GPU-hours. The final level is a weaker claim: at 9B and on Reddit-Books,
Rank-GRPO's validation peak ends 0.08 to 0.36 points higher (not significant).
\item \textbf{Pooling the responders matters, not the aggregate.} Once list lengths are controlled, the averaged reward
is level with the union at 2B ($-0.08$ $[-0.45,+0.29]$, three seeds), and per-group scaling helps both
(Table~\ref{tab:ablation}).
\item \textbf{The gain follows the variance removed.} The exact Rao--Blackwellization of Rank-GRPO's update gains
$+0.93$ $[+0.83,+1.04]$ (three seeds), and the gain grows with the number of responders and with the share of
responder noise removed (Figure~\ref{fig:noise}).
\item \textbf{Beyond recommendation, the answers should be rewarded jointly.} The union also beats per-responder rows
under ordinary sequence-level GRPO (Appendix~\ref{app:mech}) and, on ProtoQA \citep{boratko-etal-2020-protoqa}, where
about 100 people answer each question, gains $+3.67$ $[+2.64,+4.69]$ recall@10 at 250 steps (three seeds). Unlike
Mult-DPO \citep{zhu2026multdpo} and A$^2$Search \citep{zhang2026a2search}, which pool answers offline or never split
them by annotator, we keep the online GRPO update and change only its training unit. We release Reddit-Books and
Reddit-Music as IDs and labels.
\end{enumerate}

\section{Related Work}
\label{sec:related}

\paragraph{LLM conversational recommenders.}
LLMs recommend zero-shot from dialogue \citep{he2023large,zhu2025crag,xie2024nbcrs}; trained recommenders combine
supervised fine-tuning (SFT) with RL that assigns credit per rank position \citep{zhu2026rankgrpo}, simulated users
\citep{feng-etal-2025-expectation,xiang2026smtpo} or other rewards \citep{yue2026rar,lin2025recr,xie2025recllmr1,zheng2025deeprec,wen2025usbrec}.
\method{} changes the reward and training unit of such GRPO trainers and could be combined with them; with one
recommender per request (ReDial, INSPIRED; \citealp{li2018redial,hayati-etal-2020-inspired}) it reduces to Rank-GRPO.
\citet{xu-etal-2025-beyond} counter single labels in CRS data by generating extra positives with an LLM; we use the
labels that several human responders already give, inside the RL reward. Mult-DPO \citep{zhu2026multdpo} also uses every item any
responder named, but trains offline with a set-wise loss and ranks the catalog; an offline DPO counterpart with the
same pooled positives trails by 5.5 to 6.9 points (\S\ref{sec:exp}).

\begin{figure*}[t]
\centering
\includegraphics[width=\textwidth]{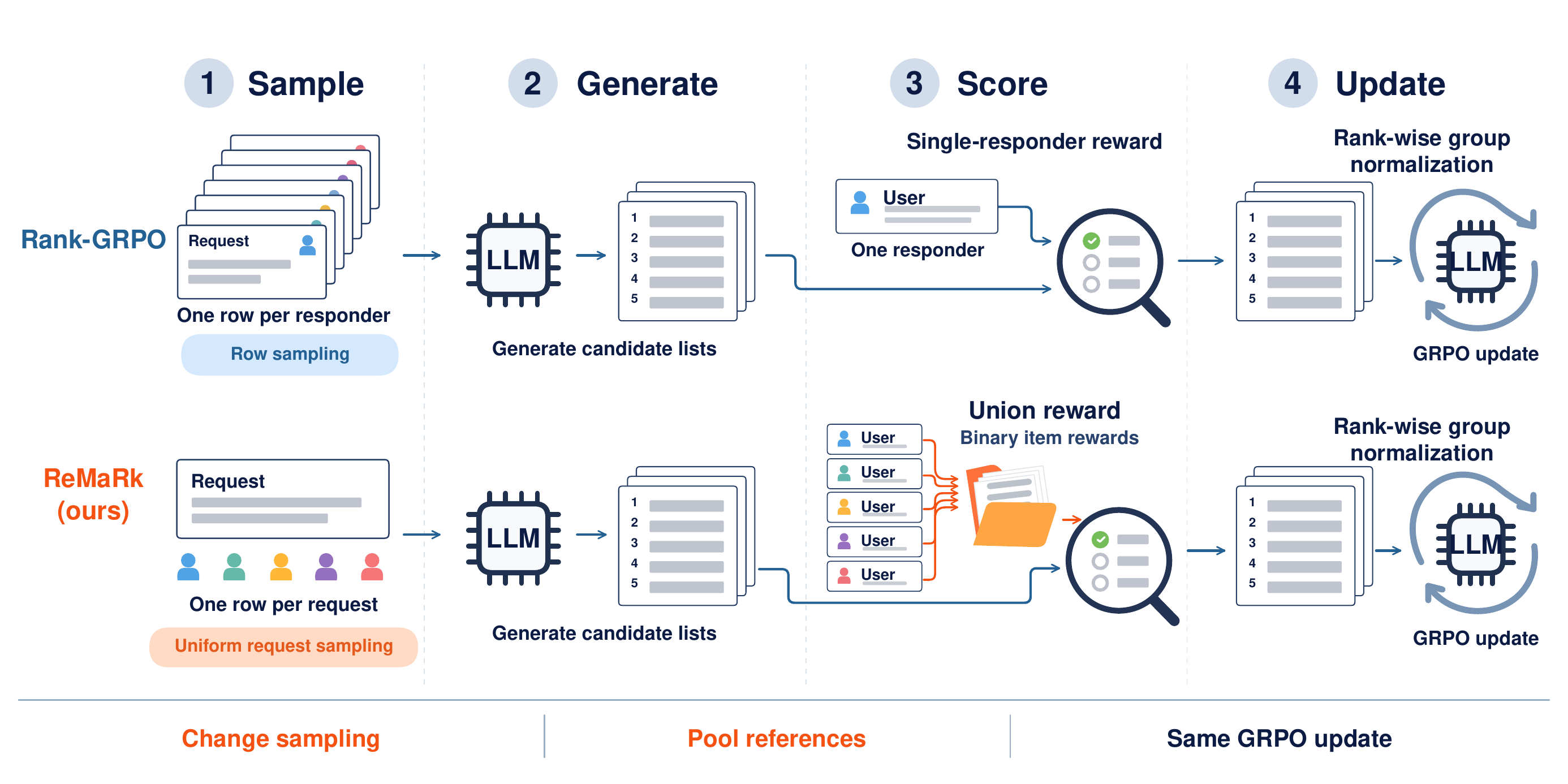}
\caption{\textbf{Rank-GRPO scores each sampled list against one drawn responder; \method{} scores it once
against all of them.} Top: one row per responder, drawn by row sampling, rewarded by that responder alone. Bottom: one row per request,
drawn uniformly, with binary item rewards against the union of the responders' items. Both keep Rank-GRPO's rank-wise
group normalization (per-group scaling) and GRPO update.}
\label{fig:method}
\end{figure*}

\paragraph{Multiple references and label variation.}
Disagreement between annotators carries information \citep{plank-2022-problem,uma2021learning,davani2022dealing} beyond what
aggregation into one label keeps \citep{dawid1979maximum,raykar2010learning}, and soft labels from
several annotators improve and stabilize supervised training \citep{peterson2019human,pmlr-v139-menon21a}, the
supervised analogue of Proposition~\ref{prop:rb}. Scoring an output against all references at once is standard in
multi-reference training \citep{zheng2018multi}, evaluation \citep{surana2026musicrs} and self-critical training with CIDEr
\citep{rennie2017self,vedantam2015cider}, and crowd or annotator distributions have served as rewards
\citep{li2024aligning,wu2026shala,siththaranjan2024distributional}. Our contribution is narrower: the common practice
of turning one prompt into a row per reference costs GRPO most of its signal, and pooling the references recovers it. Rao--Blackwellization of policy gradients is classical
\citep{casella1996rao,greensmith2004variance,kool2020estimating,ciosek2018expected}; VeriFree \citep{zhou2026verifree} and \citet{tang2025beyond} apply it to the
sampled answer in LLM RL, and we apply it to the responder. A$^2$Search \citep{zhang2026a2search} rewards QA answers
with an F1 against a mined answer set that is not split by annotator. Methods for noisy rewards estimate the noise
\citep{elmansouri2025noise,wang2020perturbed}; here its source is observed. Other GRPO variants average over the
policy's own samples \citep{wang2026grpoma,ichihara2026consensus,zuo2025ttrl}, not over human references.

\paragraph{Advantage normalization.}
GRPO divides group-centred rewards by the group standard deviation \citep{shao2024deepseekmath}; RLOO does not scale
\citep{kool2019buy,ahmadian-etal-2024-back}, and per-question scaling biases the update \citep{liu2025understanding}.
Batch-level scaling \citep{hu2025reinforcepp,liu2026tricks,rastogi2025magistral}, the inflation of fractional rewards
by small group deviations \citep{salmani-zarchi-etal-2026-mdp,wang2025prefgrpo,ding2026advantage}, filtering of tied
groups \citep{yu2025dapo} and the benefit of group scaling for binary rewards \citep{chu2025gpg} are known; we add a
bound that separates binary from fractional rewards. Reward variance governs how fast policy gradients learn
\citep{razin2025makes}, which is why groups without signal matter. ShapE-GRPO \citep{ai2026shapegrpo} splits a
set-level reward among the candidates in one output; it assigns credit across candidates, \method{} removes variance
across references, and the two could compose.

\section{Task and Setup}
\label{sec:setup}

\paragraph{Task.}
Each example is a Reddit request and the people who answered it. A \emph{request} $x$ is the text of the conversation
up to a recommendation turn: the seeker's post and any earlier replies on the same path. The recommender generates a list $L=(L_1,\dots,L_K)$ of $K=20$ lines of the form
``Title (Year)''. Line $k$ counts if it parses, is in the catalog, does not repeat an earlier line, and does not name a
title already mentioned in the request; $e_k(L;x)\in\{0,1\}$ indicates this. A request has $n_x\ge1$
\emph{responders}, users who replied with recommendations, and responder $r$ named the item set $S_r$. Following
\citet{zhu2026rankgrpo}, position $k$ earns the per-responder reward
\begin{equation}
q_k(L;S_r)=e_k(L;x)\,\1\{L_k\in S_r\},
\label{eq:single}
\end{equation}
where $L_k\in S_r$ means that the normalized titles match and the release years differ by at most two. Figure~\ref{fig:method} shows how Rank-GRPO turns these data into training rows and how
\method{} changes them.

\begin{table}[t]
\centering\footnotesize
\setlength{\tabcolsep}{2.8pt}
\begin{tabular}{@{}l rrr@{}}
\toprule
 & Reddit-v2 & Books & Music \\
\midrule
training requests & 72,233 & 32,539 & 19,159 \\
training rows & 383,013 & 149,679 & 142,396 \\
validation requests & 1,752 & 1,254 & 1,966 \\
test requests & 2,050 & 1,150 & 1,951 \\
responders: mean (median) & 5.30 (2) & 4.60 (3) & 7.43 (5) \\
single-responder (\%) & 41.0 & 18.1 & 18.3 \\
items per responder & 2.42 & 1.90 & 1.82 \\
items named once (\%) & 88.0 & 95.2 & 95.9 \\
catalog items & 7,404 & 74,474 & 38,881 \\
\midrule
\multicolumn{4}{@{}l}{\grp{Reward noise at the 2B SFT policy}} \\
s.d.\ across responders & 0.175 & 0.154 & 0.100 \\
s.d.\ across lists & 0.022 & 0.007 & 0.0005 \\
no signal: own responder (\%) & 55.0 & 62.9 & 80.1 \\
no signal: all responders (\%) & 12.4 & 22.2 & 38.4 \\
responder share of adv.\ (\%) & 85 & 66 & 79 \\
\bottomrule
\end{tabular}
\caption{Training-split statistics and the cost of per-responder rows (definitions in Appendix~\ref{app:repro}).}
\label{tab:data}
\end{table}

\paragraph{Data.}\looseness=-1
Table~\ref{tab:data} summarizes the datasets; each row is a (request, responder) pair, and all are split in time.
We use Reddit-v2 \citep{he2023large,zhu2025crag} as released with Rank-GRPO \citep{zhu2026rankgrpo}, where a Reddit
submission yields one request per reply path (48,618 training submissions). We build Reddit-Books, in the spirit of forum-based book suggestion data \citep{koolen2012socialbook}, from
r/suggestmeabook (training 2014-05 to 2017-07, validation 2017-08, test 2017-09): responders are distinct
top-level commenters other than the submitter, and an LLM extracts the books they recommend, which we ground to the
UCSD Goodreads book graph \citep{wan2018item,wan-etal-2019-fine} as ``Title (Year)'' with the first-publication year
(precision 0.997, recall 0.969, year agreement 0.903; Appendix~\ref{app:books}). Reddit-Music applies the same
pipeline to r/musicsuggestions with MusicBrainz artists as items (Appendix~\ref{app:music}). Both add catalog
grounding, responder structure and a temporal split to earlier collections of such threads.

\section{Method}
\label{sec:method}
We first restate Rank-GRPO's update and then change two of its parts, the reward and the sampling
(Figure~\ref{fig:method}).

\paragraph{Rank-GRPO.}
Rank-GRPO samples $G$ lists per request and, at each rank position, compares every list's reward with the group
average; a list that beats the average at a position becomes more likely there. Formally, for request $x$ the policy
samples $G$ lists. Position $k$ of list $g$ receives a reward $R^g_k$ and the advantage
$A^g_k=(R^g_k-\bar R_k)/\sigma_k$, with $\bar R_k$ the group mean and $\sigma_k$ the group's standard deviation at
position $k$ (plus $10^{-4}$), applied to the tokens of line $k$ in a clipped surrogate with a KL penalty
\citep{schulman2017ppo}; on-policy the update is $\hat g=\frac1G\sum_{g,k}A^g_k\,s^g_k$ with $s^g_k$ the score
function of line $k$ \citep{williams1992simple}. Rank-GRPO rewards $R^g_k=q_k(L^g;S_r)$, a hit against the item set
$S_r$ of the responder $r$ of the sampled row.

\subsection{Integrating out the responder}
\label{sec:rmr}
Averaging the reward over all of a request's responders removes its dependence on which one was drawn. Let $c_i$ be
the number of the request's responders who named item $i$ and $\phat_i=c_i/n_x$. With $r$ uniform over
the responders,
\begin{equation}
\E_{r}\big[q_k(L;S_r)\big]=e_k(L;x)\,\phat_{L_k}(x),
\label{eq:rmr}
\end{equation}
so the $n_x$ rows of a request become one training example whose reward no longer depends on which responder was drawn
(Figure~\ref{fig:method}). With $\sigma$ fixed, $\hat g$ is linear in the item reward (Lemma~\ref{lem:linear},
Appendix~\ref{app:proofs}): write
$\gS$ for the estimator with $Y(i)=\1\{i\in S_r\}$ and $\gM$ for the one with $Y(i)=\phat_i$.

\begin{proposition}[Rao--Blackwellization over responders]
\label{prop:rb}
Fix $x$ and the lists $L^{1:G}$, let $r\sim\mathrm{Unif}\{1,\dots,n_x\}$ independently of the lists, and let $\sigma$
not depend on this group's rewards. Then $\E_r[\gS\mid x,L^{1:G}]=\gM$, and over the draw of requests, lists and
responders $\tr\Var(\gS)-\tr\Var(\gM)\ge0$, with equality when every generated item was named by all or by none of
the responders, in particular when $n_x=1$.
\end{proposition}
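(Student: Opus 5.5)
The plan is to use the linearity of $\hat g$ in the item reward (Lemma~\ref{lem:linear}) to turn both claims into statements about the random vector $Y=(\1\{i\in S_r\})_i$, and then apply the law of total variance.

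\textbf{Conditional mean.} With $x$, $L^{1:G}$ and $\sigma$ fixed, Lemma~\ref{lem:linear} writes $\hat g$ as
\[
\hat g=\sum_{g,k} w^g_k\,Y(L^g_k),
\]
where each coefficient $w^g_k$ is a deterministic vector. It is built from $e_k(L^g;x)$, $1/\sigma_k$, the centering weights $(\delta_{gg'}-1/G)/G$ and the score functions $s^{g'}_k$. Because $\sigma$ does not depend on this group's rewards, the centering by $\bar R_k$ is also linear in $Y$. Since $r$ is independent of the lists, taking $\E_r$ passes through the sum. Each term then uses $\E_r[\1\{i\in S_r\}]=c_i/n_x=\phat_i$, which is exactly (\ref{eq:rmr}). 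This gives $\E_r[\gS\mid x,L^{1:G}]=\gM$.

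\textbf{Variance comparison.} Let $\mathcal F$ be the $\sigma$-field generated by $(x,L^{1:G})$. Total variance, applied coordinatewise and summed, gives
\[
\tr\Var(\gS)=\tr\Var\big(\E[\gS\mid\mathcal F]\big)+\E\big[\tr\Var(\gS\mid\mathcal F)\big]=\tr\Var(\gM)+\E\big[\tr\Var_r(\gS\mid\mathcal F)\big].
\]
The difference is therefore an expectation of conditional traces, each of which is nonnegative, so the difference is $\ge0$.

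\textbf{Equality case.} Write $\gS-\gM=\sum_{g,k} w^g_k\big(\1\{L^g_k\in S_r\}-\phat_{L^g_k}\big)$. If every generated item $L^g_k$ has $c_i\in\{0,n_x\}$, then $\1\{L^g_k\in S_r\}=\phat_{L^g_k}$ for every $r$. Hence $\gS=\gM$ almost surely given $\mathcal F$, the conditional variance vanishes, and equality holds. When $n_x=1$, every $c_i$ is $0$ or $1=n_x$, so this is a special case. If the condition is needed only on the support of the lists with nonzero weight, I will note that the sufficient condition as stated is enough.

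\textbf{Main obstacle.} The delicate step is justifying exact linearity. Group centering by $\bar R_k$ is linear, but $\sigma_k$ normally depends on the rewards. That is why the hypothesis that $\sigma$ does not depend on this group's rewards is essential: it lets Lemma~\ref{lem:linear} apply to the fixed-$\sigma$ estimator. I will state explicitly that $\sigma$ is treated as $\mathcal F$-measurable, for example a fixed or batch-external scale. I will also check that integrability holds, since score functions are bounded for a finite vocabulary and finite lists, so the variances exist.
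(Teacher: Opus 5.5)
Your proposal is correct and is essentially the paper's own proof. The paper likewise uses Lemma~\ref{lem:linear} to write $\gS=\sum_i\1\{i\in S_r\}U_i$ with coefficients depending only on $(x,L^{1:G})$, gets item~1 from $\E_r\1\{i\in S_r\}=\phat_i$, and obtains the gap from the law of total covariance as $\E\|\sum_i(\1\{i\in S_r\}-\phat_i)U_i\|^2\ge0$, which vanishes when every generated item has $c_i\in\{0,n_x\}$.

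One small wording fix: the centering by $\bar R_k$ is linear in the rewards whatever $\sigma$ is. Only the division by $\sigma_k$ needs the fixed-scale hypothesis, as you say correctly in your last paragraph.
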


Marginalizing also gives more groups a signal: a group's rewards are all zero only if no list hits any responder's
item (Table~\ref{tab:data}). The exact Rao--Blackwellization of Rank-GRPO's own group-scaled update averages its
per-responder advantages, at one advantage computation per responder, and gains $+0.93$ points (\S\ref{sec:exp}); a
single averaged reward per list is cheaper but meets per-group scaling, which Proposition~\ref{prop:rb} does not cover.

\subsection{Fractional rewards under per-group scaling}
\label{sec:norm}
GRPO divides each group's advantages by the group's standard deviation. For 0/1 rewards this is harmless; for the
averaged reward $\phat$, whose values lie on a grid of spacing $1/n_x$, it is not:
\begin{proposition}[binary versus fractional rewards]
\label{prop:norm}
With $G=8$, per-group scaling is within a factor 1.51 of a constant at each position when rewards are 0/1, but for
rewards on a grid of spacing $1/n_x$ it can amplify one responder's vote $n_x$-fold.
\end{proposition}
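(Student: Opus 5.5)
The plan is to make both halves explicit through the per-position group standard deviation. At a fixed position $k$ the group has $G=8$ rewards $R^1_k,\dots,R^8_k$, and the advantage is $A^g_k=(R^g_k-\bar R_k)/(\sigma_k+10^{-4})$. The claim ``within a factor 1.51 of a constant'' is read as follows: over all groups that carry signal, i.e.\ where not all rewards are equal, the multiplier $1/\sigma_k$ ranges over an interval whose endpoints differ by at most a factor $1.51$. Tied groups have zero centred rewards and contribute nothing whatever the scale, so they are set aside.

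For the binary case, suppose $m$ of the 8 rewards equal 1, with $1\le m\le 7$. The population standard deviation is $\sigma=\sqrt{p(1-p)}$ with $p=m/8$. If Rank-GRPO uses the unbiased version instead, this changes only by the common factor $\sqrt{8/7}$, which cancels in the ratio. The quantity $p(1-p)$ is maximized at $m=4$, giving $\sigma=1/2$, and minimized at $m\in\{1,7\}$, giving $\sigma=\sqrt{7}/8\approx0.331$. The ratio of extremes is $(1/2)/(\sqrt7/8)=4/\sqrt7\approx1.512$, which is the stated constant. The $10^{-4}$ stabilizer only shrinks this ratio slightly, so I will note it and not track it. One consequence is that every advantage with signal equals the unscaled centred reward times a factor lying in $[2,\,8/\sqrt7]$, which is what ``harmless'' means here.

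For the fractional case I will use an explicit construction, which I expect to be the only step needing care in phrasing. Take rewards on the grid $\{0,1/n_x,\dots,1\}$, and let exactly one list, at position $k$, hit an item named by a single responder, so its reward is $1/n_x$, while the other seven lists score $0$. The centred rewards are then $(1/n_x)(7/8,-1/8,\dots)$ and $\sigma=(1/n_x)\sqrt7/8$. The scaled advantages are therefore $(7/8)/(\sqrt7/8)=\sqrt7$ and $-1/\sqrt7$, exactly as if the list had hit an item named by all $n_x$ responders (reward $1$, same pattern).

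Comparing this with the unscaled, or batch- or constant-scaled, advantage, the one-responder vote is multiplied by $n_x$ relative to its raw weight $1/n_x$. Relative to a full-consensus hit it is equally large, so its influence is amplified $n_x$-fold. In contrast with the binary case, $\sigma$ for grid rewards ranges down to order $1/n_x$, so the ratio of extreme scales is at least of order $n_x$ and no constant bound independent of $n_x$ holds. I would close by stating this as the lower bound $\sup\sigma/\inf\sigma\ge n_x\cdot\text{const}$, obtained by comparing the construction with a binary-valued group whose $\sigma$ is about $1/2$. This contrast is what the proposition asserts.
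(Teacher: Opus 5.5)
Your proof is correct and follows the paper's argument. For the binary case you count the hits $m$, so the group spread is proportional to $\sqrt{m(G-m)}$ with extremes at $m=1$ and $m=G/2$, which gives the ratio $G/(2\sqrt{G-1})=4/\sqrt7\approx1.51$; for the fractional case you take the group with one reward a single step $\delta=1/n_x$ above seven equal ones, whose scaled advantages match a full binary hit, an amplification of $1/\delta=n_x$. The one difference is that you use the population standard deviation while the paper uses the Bessel-corrected one (its advantages are $(7/\sqrt8,-1/\sqrt8,\dots)=(2.47,-0.35,\dots)$ rather than your $(\sqrt7,-1/\sqrt7,\dots)$), and as you note this cancels in every ratio the proposition asserts.
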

A group in which one list hits an item named by one of 40 responders has a tiny standard deviation, so that single
vote receives the same advantage as a hit on an item all 40 named. Without length control the mean with per-group
scaling diverged on Llama-3.2-3B (KL rose from 0.6 to 47 after about 850 steps); with the length control of
\S\ref{sec:training} it trains on Qwen3.5-2B and is level with the union (\S\ref{sec:exp}, Appendix~\ref{app:p0}), so
Proposition~\ref{prop:norm} describes a worst case, not a typical outcome. With a batch-level scale (the standard
deviation of position $k$ over the batch) the mean keeps improving and Proposition~\ref{prop:rb} holds up to the
$O(1/B)$ effect of one group ($B=384$ lists per step), but per-group scaling helps both the union and the mean at 2B.
We use the 0/1 union because its per-group scale is bounded.

\subsection{\method{}: the union, sampled per request}
\label{sec:agg}
\method{} rewards a title if any of the request's responders named it, the \emph{bounded sum} $\min(c_i,1)$: the
union of the responders' items. It keeps Rank-GRPO's 0/1 rewards and per-group scaling, removes the responder draw from
the reward, and equals the sum of the responders' hits on 89\% of union items over all training requests.
Requests are sampled uniformly rather than once per responder. With a shared scale, the sum with uniform request
sampling has a constant multiple of Rank-GRPO's expected update, and the mean with uniform sampling targets the
per-request objective (Proposition~\ref{prop:agg}, Appendix~\ref{app:proofs}); the union approximates the former.
The union is not the marginal of Proposition~\ref{prop:rb}: it changes the target, scoring an item named by one
responder like one named by all, and under per-group scaling neither the unbiasedness nor the variance bound carries
over. Averaging explains and removes the responder noise; the union is the practical design that keeps the pooled
support and the 0/1 values that per-group scaling handles, and its gains are established by experiment (\S\ref{sec:exp}).
Everything else is unchanged (Algorithm~1, Appendix~\ref{app:alg}). We call each combination of reward, advantage
scale and sampling an \emph{arm}.

\subsection{Training details}
\label{sec:training}
Every arm starts from the same SFT checkpoint and uses Rank-GRPO's hyperparameters ($G=8$, 48 requests per step,
2,000 steps; Appendix~\ref{app:hparams}). A length control (small length-shaping rewards and a 320-token cap, 384 on Reddit-Books), applied identically to
every arm, keeps Qwen3.5 lists near 20 items; because it is scaled by each run's mean advantage scale, its effective
weight is 6\% lower for \method{} than for Rank-GRPO in every measured pair (Appendix~\ref{app:p0}; \S\ref{sec:exp} matches it). We train with TRL \citep{vonwerra2020trl} and vLLM rollouts \citep{kwon2023vllm}.

\begin{table*}[t]
\centering\small
\renewcommand{\arraystretch}{1.1}
\setlength{\tabcolsep}{3.4pt}
\renewcommand{\dstack}[2]{\begin{tabular}[t]{@{}c@{}}#1\\[-1.5pt]{\scriptsize\color{refgray}#2}\end{tabular}}
\begin{tabular}{@{}l ccccc cc cc c@{}}
\toprule
 & \multicolumn{7}{c}{Reddit-v2} & \multicolumn{2}{c}{Reddit-Books} & Music \\
\cmidrule(lr){2-8}\cmidrule(lr){9-10}\cmidrule(l){11-11}
 & \multicolumn{5}{c}{Qwen3.5} & \multicolumn{2}{c}{Llama} & \multicolumn{2}{c}{Qwen3.5} & Qwen3.5 \\
\cmidrule(lr){2-6}\cmidrule(lr){7-8}\cmidrule(lr){9-10}\cmidrule(l){11-11}
 & 2B & 4B & 9B & 27B & 35B-A3B & 3B & 8B & 2B & 4B & 2B \\
\midrule
SFT & 15.40 & 16.53 & 17.23 & 17.90 & 17.34 & 15.38 & 16.84 & 20.67 & 21.73 & 9.31 \\
\baserow Rank-GRPO & \second{20.97} & \second{23.56} & \second{24.05} & \second{25.17} & \second{24.74} & 21.93 & \second{24.22} & 29.34 & 30.40 & 13.40 \\
Per-resp., batch & 20.43 & 23.16 & 23.95 & 23.96 & 24.58 & \second{22.28} & 23.89 & \second{29.35} & \second{30.51} & \second{13.71} \\
\oursrow \method{} (ours) & \best{22.53} & \best{25.10} & \best{25.51} & \best{26.62} & \best{26.20} & \best{23.56} & \best{25.22} & \best{30.16} & \best{31.49} & \best{14.48} \\
\addlinespace[1pt]
$\Delta$ per row & \dstack{\up{+1.57}}{[1.10,2.06]} & \dstack{\up{+1.54}}{[1.10,2.00]} & \dstack{\up{+1.46}}{[1.02,1.93]} & \dstack{\up{+1.45}}{[0.98,1.94]} & \dstack{\up{+1.47}}{[1.03,1.90]} & \dstack{\up{+1.28}}{[0.86,1.71]} & \dstack{\up{+1.00}}{[0.55,1.44]} & \dstack{\up{+0.81}}{[0.19,1.46]} & \dstack{\up{+0.98}}{[0.35,1.64]} & \dstack{\up{+0.78}}{[0.38,1.17]} \\
$\Delta$ per request & \dstack{\up{+2.29}}{[1.43,3.17]} & \dstack{\up{+2.58}}{[1.82,3.41]} & \dstack{\up{+2.40}}{[1.64,3.22]} & \dstack{\up{+1.46}}{[0.68,2.25]} & \dstack{\up{+1.95}}{[1.14,2.78]} & \dstack{\up{+1.73}}{[0.92,2.52]} & \dstack{\up{+1.35}}{[0.55,2.16]} & \dstack{\ns{+0.70}}{[$-$0.15,1.52]} & \dstack{\up{+1.36}}{[0.51,2.27]} & \dstack{\up{+1.08}}{[0.49,1.69]} \\
steps to match & \dstack{0.52}{[0.42,0.64]} & \dstack{0.51}{[0.35,0.69]} & \dstack{0.42}{[0.36,0.60]} & \dstack{0.25}{[0.22,0.46]} & \dstack{0.35}{[0.32,0.47]} & \dstack{0.53}{[0.47,0.62]} & \dstack{0.31}{[0.24,0.41]} & \dstack{0.48}{[0.36,0.66]} & \dstack{0.47}{[0.43,0.75]} & \dstack{0.33}{[0.29,0.37]} \\
\addlinespace[2pt]
$\Delta$, batch scaling & \dstack{\up{+0.87}}{[0.47,1.28]} & \dstack{\up{+1.46}}{[0.97,1.92]} & \dstack{\up{+1.47}}{[1.03,1.93]} & \dstack{\up{+1.15}}{[0.68,1.67]} & \dstack{\up{+1.42}}{[0.98,1.91]} & \dstack{\up{+1.15}}{[0.70,1.62]} & \dstack{\up{+1.45}}{[0.98,1.91]} & \dstack{\up{+1.25}}{[0.65,1.91]} & \dstack{\up{+0.90}}{[0.22,1.62]} & \dstack{\up{+0.80}}{[0.39,1.19]} \\
\midrule
\multicolumn{11}{@{}l}{\grp{Reference (no training)}} \\
Popularity & \multicolumn{7}{c}{\rf{2.75}} & \multicolumn{2}{c}{\rf{5.77}} & \rf{7.97} \\
kNN prior & \multicolumn{7}{c}{\rf{15.94}} & \multicolumn{2}{c}{\rf{21.60}} & \rf{9.17} \\
GPT-5.6 zero-shot & \multicolumn{7}{c}{\rf{22.63}} & \multicolumn{2}{c}{\rf{23.66}} & \rf{7.40} \\
\bottomrule
\end{tabular}

\caption{Per-row test R@20 (\%), 2,000 steps; $\Delta$: \method{} minus the better per-responder arm; seed 3407 (App.~\ref{app:repro}).}
\label{tab:main}
\end{table*}

\begin{figure*}[t]
\begin{minipage}[c]{0.48\textwidth}
\centering
\includegraphics[width=\linewidth]{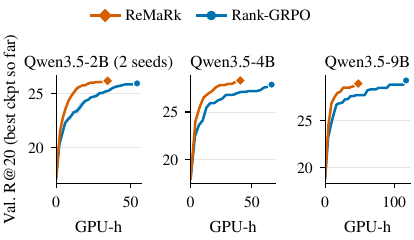}
\caption{Best validation R@20 so far against training GPU-hours for Reddit-v2 pairs trained until neither improves; each line ends at the selected checkpoint.}
\label{fig:gpuh}
\end{minipage}\hfill
\begin{minipage}[c]{0.48\textwidth}
\vspace{24pt}
\centering\footnotesize
\setlength{\tabcolsep}{4pt}
\begin{adjustbox}{max width=\linewidth}
\begin{tabular}{@{}l ccc@{}}
\toprule
 & 2B & 4B & 9B \\
\midrule
1k vs.\ 2k steps & \begin{tabular}[t]{@{}c@{}}\ns{+0.35}\\[-1.5pt]{\scriptsize\color{cigray}[$-$0.07,0.77]}\\[-1.5pt]{\scriptsize\color{cigray}7 / 13 GPU-h}\end{tabular} & \begin{tabular}[t]{@{}c@{}}\up{+0.87}\\[-1.5pt]{\scriptsize\color{cigray}[0.42,1.33]}\\[-1.5pt]{\scriptsize\color{cigray}14 / 26 GPU-h}\end{tabular} & \begin{tabular}[t]{@{}c@{}}\up{+0.60}\\[-1.5pt]{\scriptsize\color{cigray}[0.16,1.10]}\\[-1.5pt]{\scriptsize\color{cigray}17 / 33 GPU-h}\end{tabular} \\
2k vs.\ 4k steps & \begin{tabular}[t]{@{}c@{}}\up{+0.63}\\[-1.5pt]{\scriptsize\color{cigray}[0.18,1.04]}\\[-1.5pt]{\scriptsize\color{cigray}13 / 25 GPU-h}\end{tabular} & \begin{tabular}[t]{@{}c@{}}\up{+0.80}\\[-1.5pt]{\scriptsize\color{cigray}[0.35,1.25]}\\[-1.5pt]{\scriptsize\color{cigray}28 / 51 GPU-h}\end{tabular} & -- \\
best vs.\ best & \begin{tabular}[t]{@{}c@{}}\ns{+0.14}\\[-1.5pt]{\scriptsize\color{cigray}[$-$0.30,0.59]}\\[-1.5pt]{\scriptsize\color{cigray}30 / 43 GPU-h}\end{tabular} & \begin{tabular}[t]{@{}c@{}}\up{+0.83}\\[-1.5pt]{\scriptsize\color{cigray}[0.39,1.29]}\\[-1.5pt]{\scriptsize\color{cigray}41 / 66 GPU-h}\end{tabular} & \begin{tabular}[t]{@{}c@{}}\ns{+0.08}\\[-1.5pt]{\scriptsize\color{cigray}[$-$0.34,0.50]}\\[-1.5pt]{\scriptsize\color{cigray}48 / 117 GPU-h}\end{tabular} \\
\bottomrule
\end{tabular}

\end{adjustbox}
\captionof{table}{Compute-matched test comparisons on Reddit-v2: \method{} minus Rank-GRPO (R@20), GPU-hours \method{} / Rank-GRPO; --: not run.}
\label{tab:compute}
\end{minipage}
\end{figure*}

\section{Experiments}
\label{sec:exp}

\paragraph{Evaluation.}
We follow Rank-GRPO's protocol. The model writes one list per test request with greedy decoding, and each (request,
responder) row scores that list against the responder's items: Recall@$K$ (R@$K$) is the share of those items in the
top $K$, and NDCG@$K$ (N@$K$) also credits placing them early.
These \emph{per-row} metrics count a request once per responder (the average test row belongs to a request with 17.2
responders, the average test request has 5.35), so we also report \emph{per-request} metrics, which average within each
request first \citep{manning2008introduction}. Metrics are in \%, and differences in points with 95\% CIs from a
paired bootstrap over test requests \citep{koehn-2004-statistical} (2,000 resamples, seed 42). Checkpoints are
selected on validation every 250 steps and tested once (Appendix~\ref{app:repro}).

\paragraph{Models and arms.}
We train Qwen3.5 2B, 4B, 9B and 35B-A3B (mixture of experts) base models and the post-trained Qwen3.5-27B \citep{qwen35}, each after our SFT; Llama-3.2-3B-Instruct \citep{grattafiori2024llama}
from the SFT checkpoint released with Rank-GRPO; and Llama-3.1-8B-Instruct with Rank-GRPO's SFT recipe.
An arm is a triple (aggregation, scale, sampling), listed in Table~\ref{tab:ablation}; for a request-level reward,
sampling by row draws requests in proportion to their responders. Rank-GRPO is (per-responder, group, row), retrained
with the same budget and length control, and \method{} is (bounded sum, group, request). The primary comparison is
\method{} minus the better of the two per-responder arms with row sampling (chosen on validation), in per-row test
R@20. We chose the bounded sum on 2B validation among three aggregations (all within their CIs).
Seeds: 3407, then 42 and 1234.
Reference systems are popularity, a kNN prior (Appendix~\ref{app:repro}), zero-shot GPT-5.6 and
Qwen3.5-27B, the released Rank-GRPO checkpoint and GPT-4o \citep{zhu2026rankgrpo}.

\begin{table*}[t]
\begin{minipage}[t]{0.5\textwidth}
\vspace{0pt}\centering\footnotesize
\setlength{\tabcolsep}{1.7pt}
\renewcommand{\arraystretch}{1.08}
\begin{tabular}{@{}r ll r l r@{}}
\toprule
 & reward & sampling & R@20 & $\Delta$ to row 1 & req. \\
\midrule
0 & \multicolumn{2}{@{}l}{SFT (no RL)} & 15.40 & & 15.32 \\
\multicolumn{6}{@{}l}{\grp{group scaling}} \\
\baserow 1 & per-responder & row & 20.97 &  & 20.94 \\
8 & per-responder & request & 21.13 & \ns{+0.16} {\scriptsize\color{cigray}[$-$0.33,0.66]} & 22.13 \\
9 & mean & request & \best{22.53} & \up{+1.56} {\scriptsize\color{cigray}[1.10,2.05]} & \second{23.41} \\
\oursrow 7 & bounded sum & request & \best{22.53} & \up{+1.57} {\scriptsize\color{cigray}[1.10,2.06]} & 23.24 \\
\multicolumn{6}{@{}l}{\grp{batch scaling}} \\
2 & per-responder & row & 20.43 & \dn{$-$0.53} {\scriptsize\color{cigray}[$-$0.91,$-$0.14]} & 19.92 \\
3 & per-responder & request & 20.93 & \ns{$-$0.03} {\scriptsize\color{cigray}[$-$0.52,0.45]} & 22.19 \\
4 & mean & row & 21.79 & \up{+0.82} {\scriptsize\color{cigray}[0.37,1.27]} & 22.25 \\
5 & mean & request & \second{21.95} & \up{+0.98} {\scriptsize\color{cigray}[0.51,1.49]} & \best{23.60} \\
6 & bounded sum & request & 21.84 & \up{+0.87} {\scriptsize\color{cigray}[0.47,1.28]} & 22.38 \\
\bottomrule
\end{tabular}

\caption{Ablations on Qwen3.5-2B (Reddit-v2 test, R@20 in \%). Row 1 is Rank-GRPO, row 7 \method{}.}
\label{tab:ablation}
\end{minipage}\hfill
\begin{minipage}[t]{0.46\textwidth}
\vspace{0pt}\centering\small
\setlength{\tabcolsep}{2.5pt}
\begin{tabular}{@{}l cc@{}}
\toprule
\method{} minus & 2B & 4B \\
\midrule
Rank-GRPO, $G=16$ lists & \dstack{\up{+1.81}}{[1.35,2.29]} & \dstack{\up{+2.06}}{[1.62,2.53]} \\
Rank-GRPO, dynamic sampling & \dstack{\up{+1.34}}{[0.92,1.74]} & \dstack{\up{+1.31}}{[0.91,1.68]} \\
Rank-GRPO, Rao--Blackwellized & \dstack{\up{+0.63}}{[0.23,1.00]} & \dstack{\ns{+0.31}}{[$-$0.07,0.70]} \\
SFT on pooled positives & \dstack{\up{+6.59}}{[5.90,7.26]} & \dstack{\up{+7.85}}{[7.10,8.60]} \\
DPO on pooled positives & \dstack{\up{+5.45}}{[4.87,6.02]} & \dstack{\up{+6.93}}{[6.25,7.59]} \\
GPT-5.6 zero-shot & \dstack{\ns{$-$0.10}}{[$-$1.22,1.01]} & \dstack{\up{+2.47}}{[1.39,3.49]} \\
\addlinespace[2pt]
Rank-GRPO, both arms $G=16$ & \dstack{\up{+1.43}}{[0.98,1.89]} & \dstack{\up{+1.65}}{[1.21,2.11]} \\
\bottomrule
\end{tabular}
\caption{Stronger baselines: \method{} minus each system (Reddit-v2, per-row test R@20).}
\label{tab:checks}
\end{minipage}
\end{table*}

\begin{figure*}[t]
\centering
\includegraphics[width=\textwidth]{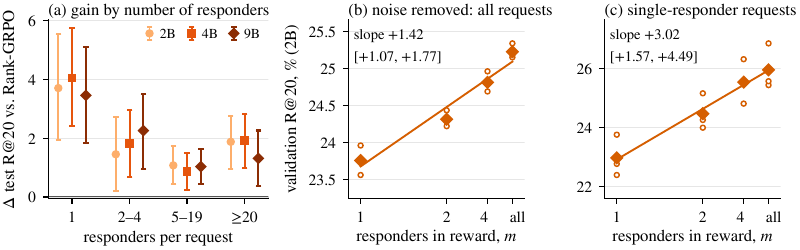}
\caption{The gain follows the responder noise removed. (a) Test gain of \method{} over Rank-GRPO by the request's
number of responders (seed 3407, 95\% CIs). (b, c) Dose-response at 2B on validation, batch scaling: R@20 when the reward averages
$m=1,2,4$ or all responders, spaced by the share of noise removed (three seeds; line: least squares), for all
and for single-responder requests.}
\label{fig:noise}
\end{figure*}

\paragraph{\method{} reaches the same quality with about half the RL compute.}
Figure~\ref{fig:gpuh} plots validation recall against GPU-hours for pairs trained until neither improves: \method{}
climbs faster at every scale. At the 2,000-step budget it reaches the baseline's 2,000-step score in a quarter to half of the steps in all
ten settings (Table~\ref{tab:main}, steps to match) and is ahead at nearly every checkpoint validated for both arms
(Appendix~\ref{app:robust}). Matched for GPU-hours, \method{} with half of Rank-GRPO's steps is level at 2B and ahead at
4B and 9B on test, and at 2B and 4B it beats Rank-GRPO trained twice as long (Table~\ref{tab:compute}); the GPU-hours include
\method{}'s 4 to 11\% slower step, and validation adds 2 to 4\% to either arm (Appendix~\ref{app:repro}).

The final level is a weaker claim: in all seven plateau pairs \method{} is never significantly behind on test, and four reach Rank-GRPO's best sooner; on
Reddit-Books and at 9B Rank-GRPO's validation peak stays slightly higher, at 9B after 2.4 to 2.5 times the GPU-hours
(Appendix~\ref{app:compute}). \method{} can peak early and decline (Figure~\ref{fig:gpuh-raw}), so the saving relies
on validation-based selection; locked at the first checkpoint matching Rank-GRPO's 2,000-step score, it is level or
ahead on test at 2B, 4B and 9B. At 2B, neither coverage nor the step size explains the speed-up:
request sampling alone never reaches Rank-GRPO's 2,000-step score, a larger learning rate speeds Rank-GRPO early
but leaves it lower, and with a cosine schedule \method{} reaches Rank-GRPO's best in 24 to 30\% of its steps.

\begin{figure}[t]
\centering
\includegraphics[width=\columnwidth]{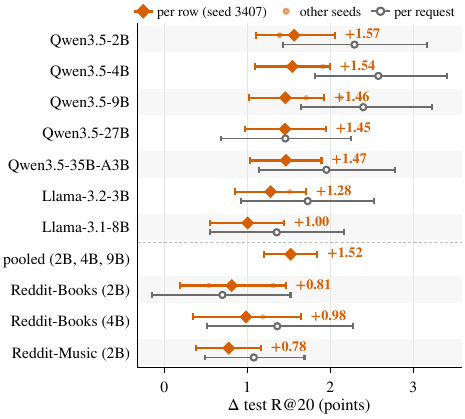}
\caption{Test gain of \method{} over the better per-responder baseline in every setting, with 95\% CIs. Filled: per
row (seed 3407); small dots: other seeds (Llama-3.2-3B seed 42: over Rank-GRPO); open: per request.}
\label{fig:main}
\end{figure}

\paragraph{At a fixed budget \method{} leads at every scale and on every dataset.}
With the same SFT start, length control and 2,000 RL steps, \method{} is ahead of Rank-GRPO with all seven backbones
and on all three datasets, by about 1.5 recall@20 points with Qwen3.5 on Reddit-v2 (Table~\ref{tab:main}; per-setting gains in Figure~\ref{fig:main}). The lead holds for further seeds, under resampling of whole submissions and at other cutoffs
(Appendix~\ref{app:robust}). With batch scaling
instead, \method{} still leads by $+0.80$ to $+1.47$ points (Table~\ref{tab:main}). On a later test set built after all checkpoints were selected (Reddit-Books threads from late 2017), the 4B pair gains $+0.67$ $[+0.34,+1.02]$ (Appendix~\ref{app:late}). On Llama-3.2-3B, 2,000 steps of \method{} come within 0.09 points of the released Rank-GRPO checkpoint trained for 15.8k steps (23.56 against 23.65).

\paragraph{More samples, filtering or offline training do not close the gap.}
Doubling Rank-GRPO's lists per request does not help it, nor do twice the requests per step at equal GPU-hours,
and DAPO-style filtering of groups without signal \citep{yu2025dapo} gains little at three times the time per step (Table~\ref{tab:checks},
Appendix~\ref{app:baselines}); \method{} turns those groups into signal instead. Offline SFT and DPO on all responders'
titles stay far below \method{} (Appendix~\ref{app:dpo}). Nor does the
length control explain the gain: with its effective weight matched to Rank-GRPO's (\S\ref{sec:training}), \method{}
stays $+1.18$ $[+0.73,+1.61]$ ahead ($+1.45$ $[+1.01,+1.88]$ without it; Appendix~\ref{app:p0}).

\paragraph{Both the reward and the sampling contribute.}
Table~\ref{tab:ablation} moves from Rank-GRPO to \method{} one component at a time. Sampling requests uniformly
leaves per-row recall nearly unchanged but raises per-request recall; the union reward then adds most of the per-row
gain ($+1.40$ $[+0.93,+1.86]$). The reward step is positive under either sampling and either aggregation, on test and
on validation (Tables~\ref{tab:ablation-diffs}, \ref{tab:ablation-val} and~\ref{tab:ablation-full}).
Even inside Rank-GRPO, the exact Rao--Blackwellization of its own update gains
$+0.93$ $[+0.83,+1.04]$ over three seeds, and \method{} stays above it, significantly at 2B (Table~\ref{tab:checks}). Per-group scaling helps both the union
($+0.69$, row 7 against 6) and the mean ($+0.58$, row 9 against 5) at 2B, and is
neutral for the union at larger sizes (Appendix~\ref{app:robust}). The averaged reward with per-group scaling (row 9) is
level with \method{} over three seeds ($-0.08$ $[-0.45,+0.29]$): the gain
comes from pooling the responders, not from the aggregate (Appendix~\ref{app:p0}).

\paragraph{The gain grows with the responder noise removed.}
The gain is positive for every number of responders and largest on single-responder requests
(Figure~\ref{fig:noise}a). There the union equals Rank-GRPO's reward and the gain comes from request sampling: row
sampling visits a request once per responder and sees only a quarter of the single-responder requests in 2,000 steps.
On requests with several responders the gain comes from the reward, whose step grows with the number of
responders (Table~\ref{tab:buckets}). Under batch scaling, controlling the noise directly agrees: averaging
the reward over more responders raises validation recall with the share of noise removed, by 1.4 points from none
to all (Figure~\ref{fig:noise}b), more on single-responder requests, whose own reward does not change
(Figure~\ref{fig:noise}c). With one fixed scale for all arms the slope is $+1.38$ $[+0.97,+1.78]$ (two seeds); per-group scaling erases it (Appendix~\ref{app:mech}).

\begin{table}[t]
\centering\small
\setlength{\tabcolsep}{4pt}
\begin{tabular}{@{}l cc@{}}
\toprule
 & 9B, 2k steps & 4B, plateau \\
\midrule
\multicolumn{3}{@{}l}{\grp{pairwise preference for \method{}}} \\
GPT-5.6 & \dstack{\up{0.56}}{[0.52,0.61]} & \dstack{\ns{0.54}}{[0.49,0.59]} \\
Gemini 3.1 Pro & \dstack{\up{0.58}}{[0.54,0.63]} & \dstack{\ns{0.52}}{[0.48,0.57]} \\
\multicolumn{3}{@{}l}{\grp{precision@20, \method{} $-$ Rank-GRPO}} \\
GPT-5.6 & \dstack{\up{+1.27}}{[0.16,2.53]} & \dstack{\ns{+0.58}}{[$-$0.40,1.61]} \\
Gemini 3.1 Pro & \dstack{\ns{+1.03}}{[$-$0.28,2.36]} & \dstack{\ns{+0.82}}{[$-$0.61,2.11]} \\
\bottomrule
\end{tabular}
\caption{Blind LLM judges on Reddit-v2 test: share of pairwise wins (ties 0.5) and precision@20 difference (points).}
\label{tab:judges}
\end{table}

\paragraph{Judges of two model families agree, and the recall gain does not cost precision.}\label{sec:analysis}
Recall counts only titles that responders named, so we also ask LLM judges, which never see the responders. Blind
pairwise judges from two families prefer 9B \method{}'s lists to Rank-GRPO's and rate the 4B models trained to their
plateaus level (Table~\ref{tab:judges}). Asked to mark each listed title as fitting the request or not, both judges
find \method{}'s lists at least as precise (Appendix~\ref{app:judge}). Pooling responders does not trade single answers for consensus: 92 to 95\% of test union
items are named by one responder, and \method{} recalls more of them in every setting (by 2.1 to 2.4 points at 2B to 9B,
0.8 on Reddit-Books and 1.3 on Reddit-Music, all CIs above zero).
The union ignores how many
responders named an item, and its gain is larger on head than on tail items (Appendix~\ref{app:judge}). A bonus
for items named by two or more responders does not help (Appendix~\ref{app:mech}).

\paragraph{The principle carries over beyond Rank-GRPO.}
ProtoQA \citep{boratko-etal-2020-protoqa} records how many of about 100 people gave each answer to a
commonsense prompt; each respondent can be a row, as each responder is in Rank-GRPO. Over three seeds
at 2B, the union gains $+3.67$ $[+2.64,+4.69]$ per-respondent recall@10 over per-respondent rows, exact
Rao--Blackwellization about half as much ($+1.94$), and the batch-scaled mean is not significantly
better (Table~\ref{tab:protoqa}). With longer training the per-respondent arm catches up on test, but the union reaches that arm's
best validation recall in 42\% of the steps.

\section{Conclusion}
\label{sec:conclusion}

When a request has several valid human answers, RL should train on the request, not on each answer. Per-responder rows
make the responder draw the main source of variance in Rank-GRPO's update; \method{} pools the responders into one 0/1
reward and samples each request equally; at 2B the gain comes from the pooling, not from the aggregate. On seven
backbones and three datasets it reaches the recall of 2,000 per-responder steps in 25 to 53\% of the steps; trained
until neither improves, it is never significantly behind on test and in four of seven pairs reaches Rank-GRPO's best
with 18 to 63\% of the GPU-hours. Its 2,000-step lead holds on a later test set and with matched length shaping. The
change touches only how training rows are built and scored, so any GRPO data with one row per annotator can use it:
reward the answers jointly.

\clearpage
\section*{Limitations}

\paragraph{Scope.} Every aggregation equals the per-responder reward on requests with one responder, so on datasets
with one recommender per request (ReDial, INSPIRED) \method{} reduces to Rank-GRPO. Its value grows with the share of
requests that have several responders.

\paragraph{Data.} All three datasets come from Reddit. Reddit-Books and Reddit-Music rely on LLM extraction and
catalog grounding (accuracy in Appendices~\ref{app:books} and~\ref{app:music}).

\paragraph{Metric and reward share a target.} The marginalized reward's expectation is the recall target, so gains are
measured against the same responders the reward integrates over. Responders name few items, and recall against them
undercounts acceptable unnamed items for every method; judges from two LLM families that do not see the responders
address this only in part (Table~\ref{tab:judges}).

\paragraph{Budget and selection.} The matched-budget comparisons stop at 2,000 steps, where our arms still improve;
the released Rank-GRPO checkpoint trained for 15.8k steps. On Reddit-Books and at 9B \method{} does not reach
Rank-GRPO's validation peak, so there the speed claim holds up to Rank-GRPO's 2,000-step level. Long runs of both
methods show instabilities, so the speed-up depends on choosing the checkpoint on validation. Several backbones have a
single seed.

\paragraph{Hyperparameters and theory.} Hyperparameters were transferred from Rank-GRPO and not tuned for \method{};
other learning rates at 2B do not change the ordering (Appendix~\ref{app:robust}). Propositions~\ref{prop:rb}
and~\ref{prop:agg} do not apply under the per-group scaling that \method{} uses, and all propositions describe
one update under a fixed policy, not training dynamics.

\section*{Ethics Statement}

\paragraph{Data.} Reddit-v2 is a public research dataset built from public Reddit posts. Reddit-Books is built from
public r/suggestmeabook posts and comments retrieved through the Arctic Shift archive; we exclude deleted and removed
content and bot accounts. We will release only submission and comment IDs and the derived item labels, not post text or
user names, and we do not model individual users. Reddit's terms do not allow redistributing post text, so users rehydrate
the text from the IDs through the archive; deleted content stays unavailable. The UCSD Goodreads book graph is used for catalog grounding under its
academic-use terms and is not redistributed. Reddit-Music is built the same way from r/musicsuggestions and grounded to
MusicBrainz (CC0). ProtoQA (CC BY 4.0) holds questions and aggregated answer counts, with
no data on individual respondents.

\paragraph{LLM use.} A proprietary LLM (GPT-5.6) extracted book and artist mentions and wrote the SFT target lists for
Reddit-Books, Reddit-Music and ProtoQA. AI assistants (Claude, Anthropic; Codex, OpenAI) were used for coding and for
drafting and editing the text; the authors checked all content and take responsibility for it. LLMs that
see item titles may have memorized Reddit threads. A few-shot probe of the Qwen3.5 base models names at least one of
the responders' items from the request alone for 5.8\%, 12.6\% and 18.6\% of Reddit-v2 test requests (2B, 4B, 9B), and
\method{}'s gain over Rank-GRPO holds both on those requests and on the rest; on Reddit-Books and Reddit-Music the
base models recall 2 to 13\% of test requests and the gain holds on the rest (Appendix~\ref{app:memo}).

\paragraph{Risks.} Recommenders trained on crowd answers can amplify popular items; we report recall by item popularity
(Table~\ref{tab:pop}, Appendix~\ref{app:judge}). The union reward raises recall more on popular than on tail items.

\paragraph{Compute.} Training and evaluation used at least 4,423 B200 GPU-hours (4,268 for training over 156 runs,
156 for 1,927 evaluations), counted from logged step times; staging and idle time are not included.

\bibliography{references}

\begin{thebibliography}{69}
\providecommand{\natexlab}[1]{#1}

\bibitem[{Ahmadian et~al.(2024)Ahmadian, Cremer, Gall{\'e}, Fadaee, Kreutzer,
  Pietquin, {\"U}st{\"u}n, and Hooker}]{ahmadian-etal-2024-back}
Arash Ahmadian, Chris Cremer, Matthias Gall{\'e}, Marzieh Fadaee, Julia
  Kreutzer, Olivier Pietquin, Ahmet {\"U}st{\"u}n, and Sara Hooker. 2024.
\newblock \href {https://doi.org/10.18653/v1/2024.acl-long.662} {Back to
  basics: Revisiting {REINFORCE}-style optimization for learning from human
  feedback in {LLM}s}.
\newblock In \emph{Proceedings of the 62nd Annual Meeting of the Association
  for Computational Linguistics (Volume 1: Long Papers)}, pages 12248--12267.
  Association for Computational Linguistics.

\bibitem[{Ai et~al.(2026)Ai, Pan, Simchi-Levi, and Wang}]{ai2026shapegrpo}
Rui Ai, Yu~Pan, David Simchi-Levi, and Chonghuan Wang. 2026.
\newblock {ShapE-GRPO}: Shapley-enhanced reward allocation for multi-candidate
  {LLM} training.
\newblock \emph{arXiv preprint arXiv:2603.29871}.

\bibitem[{Boratko et~al.(2020)Boratko, Li, O'Gorman, Das, Le, and
  McCallum}]{boratko-etal-2020-protoqa}
Michael Boratko, Xiang~Lorraine Li, Tim O'Gorman, Rajarshi Das, Dan Le, and
  Andrew McCallum. 2020.
\newblock {P}roto{QA}: A question answering dataset for prototypical
  common-sense reasoning.
\newblock In \emph{Proceedings of the 2020 Conference on Empirical Methods in
  Natural Language Processing (EMNLP)}, pages 1122--1136.

\bibitem[{Casella and Robert(1996)}]{casella1996rao}
George Casella and Christian~P. Robert. 1996.
\newblock \href {https://doi.org/10.1093/biomet/83.1.81} {Rao-blackwellisation
  of sampling schemes}.
\newblock \emph{Biometrika}, 83(1):81--94.

\bibitem[{Chu et~al.(2026)Chu, Huang, Zhang, Wei, and Wang}]{chu2025gpg}
Xiangxiang Chu, Hailang Huang, Xiao Zhang, Fei Wei, and Yong Wang. 2026.
\newblock {GPG}: A simple and strong reinforcement learning baseline for model
  reasoning.
\newblock In \emph{International Conference on Learning Representations
  (ICLR)}, pages 59637--59659.

\bibitem[{Ciosek and Whiteson(2018)}]{ciosek2018expected}
Kamil Ciosek and Shimon Whiteson. 2018.
\newblock \href {https://doi.org/10.1609/aaai.v32i1.11607} {Expected policy
  gradients}.
\newblock In \emph{Proceedings of the AAAI Conference on Artificial
  Intelligence}, volume~32.

\bibitem[{Dawid and Skene(1979)}]{dawid1979maximum}
A.~P. Dawid and A.~M. Skene. 1979.
\newblock Maximum likelihood estimation of observer error-rates using the {EM}
  algorithm.
\newblock \emph{Journal of the Royal Statistical Society. Series C (Applied
  Statistics)}, 28(1):20--28.

\bibitem[{Ding(2026)}]{ding2026advantage}
Fei Ding. 2026.
\newblock Advantage scale calibration imbalance in group-relative optimization
  under low-variance rewards: Diagnosis and bounded recovery.
\newblock \emph{arXiv preprint arXiv:2609.19164}.

\bibitem[{Elmansouri et~al.(2026)Elmansouri, Izzati, Seddik, and
  Lahlou}]{elmansouri2025noise}
Omar Elmansouri, Fathinah Izzati, Mohamed El~Amine Seddik, and Salem Lahlou.
  2026.
\newblock \href {https://icml.cc/virtual/2026/poster/61804} {Noise-corrected
  {GRPO}: From noisy rewards to unbiased gradients}.
\newblock In \emph{Forty-third International Conference on Machine Learning}.

\bibitem[{Feng et~al.(2025)Feng, Zhang, Tang, Li, Cai, Chen, Dai, Zhu, and
  Dong}]{feng-etal-2025-expectation}
Xueyang Feng, Jingsen Zhang, Jiakai Tang, Wei Li, Guohao Cai, Xu~Chen, Quanyu
  Dai, Yue Zhu, and Zhenhua Dong. 2025.
\newblock \href {https://doi.org/10.18653/v1/2025.findings-acl.307}
  {Expectation confirmation preference optimization for multi-turn
  conversational recommendation agent}.
\newblock In \emph{Findings of the Association for Computational Linguistics:
  ACL 2025}, pages 5896--5914.

\bibitem[{Grattafiori et~al.(2024)Grattafiori, Dubey, Jauhri, Pandey, Kadian
  et~al.}]{grattafiori2024llama}
Aaron Grattafiori, Abhimanyu Dubey, Abhinav Jauhri, Abhinav Pandey, Abhishek
  Kadian, et~al. 2024.
\newblock The {Llama} 3 herd of models.
\newblock \emph{arXiv preprint arXiv:2407.21783}.

\bibitem[{Greensmith et~al.(2004)Greensmith, Bartlett, and
  Baxter}]{greensmith2004variance}
Evan Greensmith, Peter~L. Bartlett, and Jonathan Baxter. 2004.
\newblock Variance reduction techniques for gradient estimates in reinforcement
  learning.
\newblock \emph{Journal of Machine Learning Research}, 5:1471--1530.

\bibitem[{Hayati et~al.(2020)Hayati, Kang, Zhu, Shi, and
  Yu}]{hayati-etal-2020-inspired}
Shirley~Anugrah Hayati, Dongyeop Kang, Qingxiaoyang Zhu, Weiyan Shi, and Zhou
  Yu. 2020.
\newblock \href {https://doi.org/10.18653/v1/2020.emnlp-main.654} {{INSPIRED}:
  Toward sociable recommendation dialog systems}.
\newblock In \emph{Proceedings of the 2020 Conference on Empirical Methods in
  Natural Language Processing (EMNLP)}, pages 8142--8152.

\bibitem[{He et~al.(2023)He, Xie, Jha, Steck, Liang, Feng, Majumder, Kallus,
  and McAuley}]{he2023large}
Zhankui He, Zhouhang Xie, Rahul Jha, Harald Steck, Dawen Liang, Yesu Feng,
  Bodhisattwa~Prasad Majumder, Nathan Kallus, and Julian McAuley. 2023.
\newblock \href {https://doi.org/10.1145/3583780.3614949} {Large language
  models as zero-shot conversational recommenders}.
\newblock In \emph{Proceedings of the 32nd ACM International Conference on
  Information and Knowledge Management (CIKM)}, pages 720--730.

\bibitem[{Hu et~al.(2025)Hu, Liu, Xu, and Shen}]{hu2025reinforcepp}
Jian Hu, Jason~Klein Liu, Haotian Xu, and Wei Shen. 2025.
\newblock {REINFORCE++}: Stabilizing critic-free policy optimization with
  global normalization.
\newblock \emph{arXiv preprint arXiv:2501.03262}.

\bibitem[{Ichihara et~al.(2026)Ichihara, Jinnai, Ariu, and
  Uchibe}]{ichihara2026consensus}
Yuki Ichihara, Yuu Jinnai, Kaito Ariu, and Eiji Uchibe. 2026.
\newblock Consensus group relative policy optimization for text generation.
\newblock \emph{arXiv preprint arXiv:2602.03102}.

\bibitem[{Khatri et~al.(2026)Khatri, Madaan, Tiwari, Bansal, Duvvuri, Zaheer,
  Dhillon, Brandfonbrener, and Agarwal}]{devvrit2026the}
Devvrit Khatri, Lovish Madaan, Rishabh Tiwari, Rachit Bansal, Venkata Sai
  Surya~Subramanyam Duvvuri, Manzil Zaheer, Inderjit Dhillon, David
  Brandfonbrener, and Rishabh Agarwal. 2026.
\newblock \href {https://openreview.net/forum?id=FMjeC9Msws} {The art of
  scaling reinforcement learning compute for {LLM}s}.
\newblock In \emph{The Fourteenth International Conference on Learning
  Representations}.

\bibitem[{Koehn(2004)}]{koehn-2004-statistical}
Philipp Koehn. 2004.
\newblock \href {https://aclanthology.org/W04-3250/} {Statistical significance
  tests for machine translation evaluation}.
\newblock In \emph{Proceedings of the 2004 Conference on Empirical Methods in
  Natural Language Processing}, pages 388--395, Barcelona, Spain. Association
  for Computational Linguistics.

\bibitem[{Kool et~al.(2019)Kool, van Hoof, and Welling}]{kool2019buy}
Wouter Kool, Herke van Hoof, and Max Welling. 2019.
\newblock \href {https://openreview.net/forum?id=r1lgTGL5DE} {Buy 4 {REINFORCE}
  samples, get a baseline for free!}
\newblock In \emph{Deep Reinforcement Learning Meets Structured Prediction,
  ICLR 2019 Workshop}.

\bibitem[{Kool et~al.(2020)Kool, van Hoof, and Welling}]{kool2020estimating}
Wouter Kool, Herke van Hoof, and Max Welling. 2020.
\newblock Estimating gradients for discrete random variables by sampling
  without replacement.
\newblock In \emph{International Conference on Learning Representations
  (ICLR)}.

\bibitem[{Koolen et~al.(2012)Koolen, Kamps, and Kazai}]{koolen2012socialbook}
Marijn Koolen, Jaap Kamps, and Gabriella Kazai. 2012.
\newblock \href {https://doi.org/10.1145/2396761.2396788} {Social book search:
  Comparing topical relevance judgements and book suggestions for evaluation}.
\newblock In \emph{Proceedings of the 21st ACM International Conference on
  Information and Knowledge Management (CIKM)}, pages 185--194.

\bibitem[{Kwon et~al.(2023)Kwon, Li, Zhuang, Sheng, Zheng, Yu, Gonzalez, Zhang,
  and Stoica}]{kwon2023vllm}
Woosuk Kwon, Zhuohan Li, Siyuan Zhuang, Ying Sheng, Lianmin Zheng, Cody~Hao Yu,
  Joseph~E. Gonzalez, Hao Zhang, and Ion Stoica. 2023.
\newblock \href {https://doi.org/10.1145/3600006.3613165} {Efficient memory
  management for large language model serving with {PagedAttention}}.
\newblock In \emph{Proceedings of the 29th Symposium on Operating Systems
  Principles (SOSP)}, pages 611--626.

\bibitem[{Li et~al.(2024)Li, Zhang, Dong, Deik, Tang, and Liu}]{li2024aligning}
Dexun Li, Cong Zhang, Kuicai Dong, Derrick Goh~Xin Deik, Ruiming Tang, and Yong
  Liu. 2024.
\newblock Aligning crowd feedback via distributional preference reward
  modeling.
\newblock \emph{arXiv preprint arXiv:2402.09764}.

\bibitem[{Li et~al.(2018)Li, Ebrahimi~Kahou, Schulz, Michalski, Charlin, and
  Pal}]{li2018redial}
Raymond Li, Samira Ebrahimi~Kahou, Hannes Schulz, Vincent Michalski, Laurent
  Charlin, and Chris Pal. 2018.
\newblock Towards deep conversational recommendations.
\newblock In \emph{Advances in Neural Information Processing Systems},
  volume~31.

\bibitem[{Lin et~al.(2025)Lin, Wang, and Qian}]{lin2025recr}
Jiacheng Lin, Tian Wang, and Kun Qian. 2025.
\newblock \href {https://openreview.net/forum?id=YBRU9MV2vE} {Rec-r1: Bridging
  generative large language models and user-centric recommendation systems via
  reinforcement learning}.
\newblock \emph{Transactions on Machine Learning Research}.

\bibitem[{Liu et~al.(2025)Liu, Chen, Li, Qi, Pang, Du, Lee, and
  Lin}]{liu2025understanding}
Zichen Liu, Changyu Chen, Wenjun Li, Penghui Qi, Tianyu Pang, Chao Du, Wee~Sun
  Lee, and Min Lin. 2025.
\newblock \href {https://openreview.net/forum?id=5PAF7PAY2Y} {Understanding
  r1-zero-like training: A critical perspective}.
\newblock In \emph{Second Conference on Language Modeling}.

\bibitem[{Liu et~al.(2026)Liu, Liu, He, Wang, Liu, Pan, Hu, Xiong, Huang, Hu,
  Huang, Yang, Wang, Su, and Zheng}]{liu2026tricks}
Zihe Liu, Jiashun Liu, Yancheng He, Weixun Wang, Jiaheng Liu, Ling Pan, Xinyu
  Hu, Shaopan Xiong, Ju~Huang, Jian Hu, Shengyi Huang, Siran Yang, Jiamang
  Wang, Wenbo Su, and Bo~Zheng. 2026.
\newblock \href {https://openreview.net/forum?id=R0JM3BWP7W} {Tricks or traps?
  a deep dive into {RL} for {LLM} reasoning}.
\newblock In \emph{The Fourteenth International Conference on Learning
  Representations}.

\bibitem[{Manning et~al.(2008)Manning, Raghavan, and
  Sch{\"u}tze}]{manning2008introduction}
Christopher~D. Manning, Prabhakar Raghavan, and Hinrich Sch{\"u}tze. 2008.
\newblock \href {https://doi.org/10.1017/CBO9780511809071} {\emph{Introduction
  to Information Retrieval}}.
\newblock Cambridge University Press.

\bibitem[{Menon et~al.(2021)Menon, Rawat, Reddi, Kim, and
  Kumar}]{pmlr-v139-menon21a}
Aditya~K Menon, Ankit~Singh Rawat, Sashank Reddi, Seungyeon Kim, and Sanjiv
  Kumar. 2021.
\newblock A statistical perspective on distillation.
\newblock In \emph{Proceedings of the 38th International Conference on Machine
  Learning}, volume 139 of \emph{Proceedings of Machine Learning Research},
  pages 7632--7642. PMLR.

\bibitem[{{Mistral-AI} et~al.(2025){Mistral-AI}, Rastogi, Jiang, Lo, Berrada,
  Lample, Rute, Barmentlo, Yadav, Khandelwal, Chandu
  et~al.}]{rastogi2025magistral}
{Mistral-AI}, Abhinav Rastogi, Albert~Q. Jiang, Andy Lo, Gabrielle Berrada,
  Guillaume Lample, Jason Rute, Joep Barmentlo, Karmesh Yadav, Kartik
  Khandelwal, Khyathi~Raghavi Chandu, et~al. 2025.
\newblock Magistral.
\newblock \emph{arXiv preprint arXiv:2506.10910}.

\bibitem[{Mostafazadeh~Davani et~al.(2022)Mostafazadeh~Davani, D{\'i}az, and
  Prabhakaran}]{davani2022dealing}
Aida Mostafazadeh~Davani, Mark D{\'i}az, and Vinodkumar Prabhakaran. 2022.
\newblock Dealing with disagreements: Looking beyond the majority vote in
  subjective annotations.
\newblock \emph{Transactions of the Association for Computational Linguistics},
  10:92--110.

\bibitem[{Peterson et~al.(2019)Peterson, Battleday, Griffiths, and
  Russakovsky}]{peterson2019human}
Joshua~C. Peterson, Ruairidh~M. Battleday, Thomas~L. Griffiths, and Olga
  Russakovsky. 2019.
\newblock \href {https://doi.org/10.1109/ICCV.2019.00971} {Human uncertainty
  makes classification more robust}.
\newblock In \emph{Proceedings of the IEEE/CVF International Conference on
  Computer Vision (ICCV)}, pages 9616--9625.

\bibitem[{Plank(2022)}]{plank-2022-problem}
Barbara Plank. 2022.
\newblock \href {https://doi.org/10.18653/v1/2022.emnlp-main.731} {The
  ``problem'' of human label variation: On ground truth in data, modeling and
  evaluation}.
\newblock In \emph{Proceedings of the 2022 Conference on Empirical Methods in
  Natural Language Processing}, pages 10671--10682, Abu Dhabi, United Arab
  Emirates. Association for Computational Linguistics.

\bibitem[{{Qwen Team}(2026)}]{qwen35}
{Qwen Team}. 2026.
\newblock {Qwen3.5}: Towards native multimodal agents.
\newblock \url{https://qwen.ai/blog?id=qwen3.5}.

\bibitem[{Raykar et~al.(2010)Raykar, Yu, Zhao, Valadez, Florin, Bogoni, and
  Moy}]{raykar2010learning}
Vikas~C. Raykar, Shipeng Yu, Linda~H. Zhao, Gerardo~Hermosillo Valadez, Charles
  Florin, Luca Bogoni, and Linda Moy. 2010.
\newblock Learning from crowds.
\newblock \emph{Journal of Machine Learning Research}, 11:1297--1322.

\bibitem[{Razin et~al.(2025)Razin, Wang, Strauss, Wei, Lee, and
  Arora}]{razin2025makes}
Noam Razin, Zixuan Wang, Hubert Strauss, Stanley Wei, Jason~D. Lee, and Sanjeev
  Arora. 2025.
\newblock What makes a reward model a good teacher? an optimization
  perspective.
\newblock In \emph{Advances in Neural Information Processing Systems
  (NeurIPS)}.

\bibitem[{Rennie et~al.(2017)Rennie, Marcheret, Mroueh, Ross, and
  Goel}]{rennie2017self}
Steven~J. Rennie, Etienne Marcheret, Youssef Mroueh, Jerret Ross, and Vaibhava
  Goel. 2017.
\newblock \href {https://doi.org/10.1109/CVPR.2017.131} {Self-critical sequence
  training for image captioning}.
\newblock In \emph{Proceedings of the IEEE Conference on Computer Vision and
  Pattern Recognition (CVPR)}.

\bibitem[{Salmani-Zarchi et~al.(2026)Salmani-Zarchi, Rahimi, Faili, and
  Dousti}]{salmani-zarchi-etal-2026-mdp}
Mohammad~Mahdi Salmani-Zarchi, Zahra Rahimi, Heshaam Faili, and Mohammad~Javad
  Dousti. 2026.
\newblock \href {https://doi.org/10.18653/v1/2026.acl-long.1982} {{MDP}-{GRPO}:
  Stabilized group relative policy optimization for multi-constraint
  instruction following}.
\newblock In \emph{Proceedings of the 64th Annual Meeting of the {A}ssociation
  for {C}omputational {L}inguistics (Volume 1: Long Papers)}, pages
  42784--42797. Association for Computational Linguistics.

\bibitem[{Schulman et~al.(2017)Schulman, Wolski, Dhariwal, Radford, and
  Klimov}]{schulman2017ppo}
John Schulman, Filip Wolski, Prafulla Dhariwal, Alec Radford, and Oleg Klimov.
  2017.
\newblock Proximal policy optimization algorithms.
\newblock \emph{arXiv preprint arXiv:1707.06347}.

\bibitem[{Shao et~al.(2024)Shao, Wang, Zhu, Xu, Song, Bi, Zhang, Zhang, Li, Wu,
  and Guo}]{shao2024deepseekmath}
Zhihong Shao, Peiyi Wang, Qihao Zhu, Runxin Xu, Junxiao Song, Xiao Bi, Haowei
  Zhang, Mingchuan Zhang, Y.~K. Li, Y.~Wu, and Daya Guo. 2024.
\newblock Deepseekmath: Pushing the limits of mathematical reasoning in open
  language models.
\newblock \emph{arXiv preprint arXiv:2402.03300}.

\bibitem[{Siththaranjan et~al.(2024)Siththaranjan, Laidlaw, and
  Hadfield-Menell}]{siththaranjan2024distributional}
Anand Siththaranjan, Cassidy Laidlaw, and Dylan Hadfield-Menell. 2024.
\newblock \href {https://openreview.net/forum?id=0tWTxYYPnW} {Distributional
  preference learning: Understanding and accounting for hidden context in
  {RLHF}}.
\newblock In \emph{International Conference on Learning Representations
  (ICLR)}.

\bibitem[{Surana et~al.(2026)Surana, Namburi, Mundada, Lal, Novack, McAuley,
  and Wu}]{surana2026musicrs}
Rohan Surana, Amit Namburi, Gagan Mundada, Abhay Lal, Zachary Novack, Julian
  McAuley, and Junda Wu. 2026.
\newblock \href {https://doi.org/10.1109/ICASSP55912.2026.11462411} {{MusiCRS}:
  Benchmarking audio-centric conversational recommendation}.
\newblock In \emph{ICASSP 2026 - IEEE International Conference on Acoustics,
  Speech and Signal Processing}, pages 12412--12416.

\bibitem[{Tang et~al.(2025)Tang, Wang, Madaan, and Munos}]{tang2025beyond}
Yunhao Tang, Sid Wang, Lovish Madaan, and R{\'e}mi Munos. 2025.
\newblock \href {https://openreview.net/forum?id=pc6M9h3T9m} {Beyond verifiable
  rewards: Scaling reinforcement learning in language models to unverifiable
  data}.
\newblock In \emph{Advances in Neural Information Processing Systems
  (NeurIPS)}.

\bibitem[{Uma et~al.(2021)Uma, Fornaciari, Hovy, Paun, Plank, and
  Poesio}]{uma2021learning}
Alexandra~N. Uma, Tommaso Fornaciari, Dirk Hovy, Silviu Paun, Barbara Plank,
  and Massimo Poesio. 2021.
\newblock \href {https://doi.org/10.1613/jair.1.12752} {Learning from
  disagreement: A survey}.
\newblock \emph{Journal of Artificial Intelligence Research}, 72:1385--1470.

\bibitem[{Vedantam et~al.(2015)Vedantam, Zitnick, and
  Parikh}]{vedantam2015cider}
Ramakrishna Vedantam, C.~Lawrence Zitnick, and Devi Parikh. 2015.
\newblock \href {https://doi.org/10.1109/CVPR.2015.7299087} {{CIDEr}:
  Consensus-based image description evaluation}.
\newblock In \emph{Proceedings of the IEEE Conference on Computer Vision and
  Pattern Recognition (CVPR)}, pages 4566--4575.

\bibitem[{von Werra et~al.(2020)von Werra, Belkada, Tunstall, Beeching, Thrush,
  Lambert, Huang, Rasul, and Gallou{\'e}dec}]{vonwerra2020trl}
Leandro von Werra, Younes Belkada, Lewis Tunstall, Edward Beeching, Tristan
  Thrush, Nathan Lambert, Shengyi Huang, Kashif Rasul, and Quentin
  Gallou{\'e}dec. 2020.
\newblock {TRL}: Transformers reinforcement learning.
\newblock \url{https://github.com/huggingface/trl}.

\bibitem[{Wan and McAuley(2018)}]{wan2018item}
Mengting Wan and Julian McAuley. 2018.
\newblock \href {https://doi.org/10.1145/3240323.3240369} {Item recommendation
  on monotonic behavior chains}.
\newblock In \emph{Proceedings of the 12th ACM Conference on Recommender
  Systems (RecSys)}, pages 86--94.

\bibitem[{Wan et~al.(2019)Wan, Misra, Nakashole, and
  McAuley}]{wan-etal-2019-fine}
Mengting Wan, Rishabh Misra, Ndapa Nakashole, and Julian McAuley. 2019.
\newblock \href {https://doi.org/10.18653/v1/P19-1248} {Fine-grained spoiler
  detection from large-scale review corpora}.
\newblock In \emph{Proceedings of the 57th Annual Meeting of the Association
  for Computational Linguistics}, pages 2605--2610, Florence, Italy.
  Association for Computational Linguistics.

\bibitem[{Wang et~al.(2026)Wang, Huang, Wang, Ren, and Dong}]{wang2026grpoma}
Hongcheng Wang, Yinuo Huang, Sukai Wang, Guanghui Ren, and Hao Dong. 2026.
\newblock Why tree-style branching matters for thought advantage estimation in
  {GRPO}.
\newblock In \emph{International Conference on Machine Learning (ICML)}.

\bibitem[{Wang et~al.(2020)Wang, Liu, and Li}]{wang2020perturbed}
Jingkang Wang, Yang Liu, and Bo~Li. 2020.
\newblock Reinforcement learning with perturbed rewards.
\newblock In \emph{Proceedings of the AAAI Conference on Artificial
  Intelligence}, volume~34, pages 6202--6209.

\bibitem[{Wang et~al.(2025)Wang, Li, Zang, Zhou, Bu, Wang, Lu, Jin, and
  Wang}]{wang2025prefgrpo}
Yibin Wang, Zhimin Li, Yuhang Zang, Yujie Zhou, Jiazi Bu, Chunyu Wang, Qinglin
  Lu, Cheng Jin, and Jiaqi Wang. 2025.
\newblock Pref-{GRPO}: Pairwise preference reward-based {GRPO} for stable
  text-to-image reinforcement learning.
\newblock \emph{arXiv preprint arXiv:2508.20751}.

\bibitem[{Wen et~al.(2025)Wen, Wang, Yan, Cai, Jiang, and
  Zhang}]{wen2025usbrec}
Jianyu Wen, Jingyun Wang, Cilin Yan, Jiayin Cai, Xiaolong Jiang, and Ying
  Zhang. 2025.
\newblock \href {https://doi.org/10.1145/3705328.3748089} {{USB-Rec}: An
  effective framework for improving conversational recommendation capability of
  large language model}.
\newblock In \emph{Proceedings of the Nineteenth ACM Conference on Recommender
  Systems}, pages 472--481.

\bibitem[{Williams(1992)}]{williams1992simple}
Ronald~J. Williams. 1992.
\newblock \href {https://doi.org/10.1007/BF00992696} {Simple statistical
  gradient-following algorithms for connectionist reinforcement learning}.
\newblock \emph{Machine Learning}, 8(3--4):229--256.

\bibitem[{Wu et~al.(2026)Wu, Wang, Ong, Liang, and Picard}]{wu2026shala}
Jingyao Wu, Ashley Wang, Keane Ong, Paul~Pu Liang, and Rosalind Picard. 2026.
\newblock {SHALA-LLM}: Smartly handling ambiguous labels in aligning {LLMs}.
\newblock \emph{arXiv preprint arXiv:2606.05376}.

\bibitem[{Xiang et~al.(2026)Xiang, Pan, and Wei}]{xiang2026smtpo}
Xingyuan Xiang, Xiangchen Pan, and Wei Wei. 2026.
\newblock User simulator-guided multi-turn preference optimization for
  reasoning {LLM}-based conversational recommendation.
\newblock \emph{arXiv preprint arXiv:2604.03671}.

\bibitem[{Xie et~al.(2025)Xie, Ren, Qi, Hu, and Shan}]{xie2025recllmr1}
Yu~Xie, Xingkai Ren, Ying Qi, Yao Hu, and Lianlei Shan. 2025.
\newblock {RecLLM-R1}: A two-stage training paradigm with reinforcement
  learning and chain-of-thought v1.
\newblock \emph{arXiv preprint arXiv:2506.19235}.

\bibitem[{Xie et~al.(2024)Xie, Wu, Jeon, He, Steck, Jha, Liang, Kallus, and
  McAuley}]{xie2024nbcrs}
Zhouhang Xie, Junda Wu, Hyunsik Jeon, Zhankui He, Harald Steck, Rahul Jha,
  Dawen Liang, Nathan Kallus, and Julian McAuley. 2024.
\newblock \href {https://doi.org/10.1145/3640457.3688191} {Neighborhood-based
  collaborative filtering for conversational recommendation}.
\newblock In \emph{Proceedings of the 18th ACM Conference on Recommender
  Systems (RecSys)}, pages 1045--1050.

\bibitem[{Xu et~al.(2025)Xu, Wang, Lv, and Zheng}]{xu-etal-2025-beyond}
Haozhe Xu, Xiaohua Wang, Changze Lv, and Xiaoqing Zheng. 2025.
\newblock \href {https://doi.org/10.18653/v1/2025.acl-long.758} {Beyond single
  labels: Improving conversational recommendation through {LLM}-powered data
  augmentation}.
\newblock In \emph{Proceedings of the 63rd Annual Meeting of the Association
  for Computational Linguistics (Volume 1: Long Papers)}, pages 15573--15590.

\bibitem[{Yu et~al.(2025)Yu, Zhang, Zhu, Yuan, Zuo, Yue, Dai, Fan, Liu, Liu,
  Liu, Liu, Lin, Lin, Ma, Sheng, Tong, Zhang, Zhang, Zhang, Zhang, Zhu, Zhu,
  Chen, Chen, Wang, Yu, Song, Wei, Zhou, Liu, Ma, Zhang, Yan, Wu, and
  Wang}]{yu2025dapo}
Qiying Yu, Zheng Zhang, Ruofei Zhu, Yufeng Yuan, Xiaochen Zuo, Yu~Yue, Weinan
  Dai, Tiantian Fan, Gaohong Liu, Juncai Liu, LingJun Liu, Xin Liu, Haibin Lin,
  Zhiqi Lin, Bole Ma, Guangming Sheng, Yuxuan Tong, Chi Zhang, Mofan Zhang, and
  17 others. 2025.
\newblock \href {https://openreview.net/forum?id=2a36EMSSTp} {{DAPO}: An
  open-source {LLM} reinforcement learning system at scale}.
\newblock In \emph{The Thirty-ninth Annual Conference on Neural Information
  Processing Systems}.

\bibitem[{Yue et~al.(2026)Yue, Zhuang, Qin, He, Zeng, McAuley, and
  Wang}]{yue2026rar}
Zhenrui Yue, Honglei Zhuang, Zhen Qin, Zhankui He, Huimin Zeng, Julian McAuley,
  and Dong Wang. 2026.
\newblock Retrieval augmented conversational recommendation with reinforcement
  learning.
\newblock \emph{arXiv preprint arXiv:2604.04457}.

\bibitem[{Zhang et~al.(2026)Zhang, Niu, Ying, Lin, Hao, Fan, Huang, Keung,
  Chen, and Lin}]{zhang2026a2search}
Fengji Zhang, Xinyao Niu, Chengyang Ying, Guancheng Lin, Zhongkai Hao, Zhou
  Fan, Chengen Huang, Jacky Keung, Bei Chen, and Junyang Lin. 2026.
\newblock {A$^2$Search}: Ambiguity-aware question answering with reinforcement
  learning.
\newblock In \emph{International Conference on Learning Representations
  (ICLR)}.

\bibitem[{Zheng et~al.(2025{\natexlab{a}})Zheng, Wang, Liu, Wang, Hongyu, Chen,
  Zhao, and Wen}]{zheng2025deeprec}
Bowen Zheng, Xiaolei Wang, Enze Liu, Xi~Wang, Lu~Hongyu, Yu~Chen, Wayne~Xin
  Zhao, and Ji-Rong Wen. 2025{\natexlab{a}}.
\newblock {DeepRec}: Towards a deep dive into the item space with large
  language model based recommendation.
\newblock \emph{arXiv preprint arXiv:2505.16810}.

\bibitem[{Zheng et~al.(2025{\natexlab{b}})Zheng, Liu, Li, Chen, Yu, Gao, Dang,
  Liu, Men, Yang, Zhou, and Lin}]{zheng2025gspo}
Chujie Zheng, Shixuan Liu, Mingze Li, Xiong-Hui Chen, Bowen Yu, Chang Gao, Kai
  Dang, Yuqiong Liu, Rui Men, An~Yang, Jingren Zhou, and Junyang Lin.
  2025{\natexlab{b}}.
\newblock Group sequence policy optimization.
\newblock \emph{arXiv preprint arXiv:2507.18071}.

\bibitem[{Zheng et~al.(2018)Zheng, Ma, and Huang}]{zheng2018multi}
Renjie Zheng, Mingbo Ma, and Liang Huang. 2018.
\newblock Multi-reference training with pseudo-references for neural
  translation and text generation.
\newblock In \emph{Proceedings of the 2018 Conference on Empirical Methods in
  Natural Language Processing}, pages 3188--3197.

\bibitem[{Zhou et~al.(2026)Zhou, Liu, Sims, Wang, Pang, Li, Wang, Lin, and
  Du}]{zhou2026verifree}
Xiangxin Zhou, Zichen Liu, Anya Sims, Haonan Wang, Tianyu Pang, Chongxuan Li,
  Liang Wang, Min Lin, and Chao Du. 2026.
\newblock \href {https://openreview.net/forum?id=nnwvwge40d} {Reinforcing
  general reasoning without verifiers}.
\newblock In \emph{International Conference on Learning Representations
  (ICLR)}.

\bibitem[{Zhu et~al.(2026{\natexlab{a}})Zhu, Steck, Liang, He, Ostuni, Li, and
  Kallus}]{zhu2026rankgrpo}
Yaochen Zhu, Harald Steck, Dawen Liang, Yinhan He, Vito~Claudio Ostuni, Jundong
  Li, and Nathan Kallus. 2026{\natexlab{a}}.
\newblock \href {https://openreview.net/forum?id=Xgw2D9cALS} {Rank-{GRPO}:
  Training {LLM}-based conversational recommender systems with reinforcement
  learning}.
\newblock In \emph{The Fourteenth International Conference on Learning
  Representations}.

\bibitem[{Zhu et~al.(2026{\natexlab{b}})Zhu, Steck, McInerney, Sinha, He,
  Kallus, and Li}]{zhu2026multdpo}
Yaochen Zhu, Harald Steck, James McInerney, Aditya Sinha, Yinhan He, Nathan
  Kallus, and Jundong Li. 2026{\natexlab{b}}.
\newblock Mult-{DPO}: Multinomial direct preference optimization for
  recommender systems.
\newblock \emph{arXiv preprint arXiv:2606.10078}.

\bibitem[{Zhu et~al.(2025)Zhu, Wan, Steck, Liang, Feng, Kallus, and
  Li}]{zhu2025crag}
Yaochen Zhu, Chao Wan, Harald Steck, Dawen Liang, Yesu Feng, Nathan Kallus, and
  Jundong Li. 2025.
\newblock \href {https://doi.org/10.1145/3696410.3714908} {Collaborative
  retrieval for large language model-based conversational recommender systems}.
\newblock In \emph{Proceedings of the ACM Web Conference 2025}, pages
  3323--3334.

\bibitem[{Zuo et~al.(2025)Zuo, Zhang, Sheng, Qu, Cui, Zhu, Li, Zhang, Long,
  Hua, Qi, Sun, Ma, Yuan, Ding, and Zhou}]{zuo2025ttrl}
Yuxin Zuo, Kaiyan Zhang, Li~Sheng, Shang Qu, Ganqu Cui, Xuekai Zhu, Haozhan Li,
  Yuchen Zhang, Xinwei Long, Ermo Hua, Biqing Qi, Youbang Sun, Zhiyuan Ma,
  Lifan Yuan, Ning Ding, and Bowen Zhou. 2025.
\newblock {TTRL}: Test-time reinforcement learning.
\newblock In \emph{Advances in Neural Information Processing Systems}.

\end{thebibliography}

\appendix
\renewcommand{\topfraction}{0.9}\renewcommand{\dbltopfraction}{0.9}\renewcommand{\bottomfraction}{0.8}
\renewcommand{\textfraction}{0.07}\renewcommand{\floatpagefraction}{0.85}\renewcommand{\dblfloatpagefraction}{0.85}
\setcounter{topnumber}{4}\setcounter{bottomnumber}{3}\setcounter{totalnumber}{6}\setcounter{dbltopnumber}{3}
\newcommand{\apptocsec}[2]{\par\noindent\makebox[1.4em][l]{\ref{#1}}#2\dotfill\pageref{#1}}
\newcommand{\apptocsub}[2]{\par\noindent\hspace*{1.4em}\makebox[2.2em][l]{\ref{#1}}#2\dotfill\pageref{#1}}

\section*{About this appendix}
Appendix~\ref{app:proofs} proves the results of \S\ref{sec:method} (Lemma~\ref{lem:linear} and
Propositions~\ref{prop:rb}--\ref{prop:agg}), and Appendix~\ref{app:dpo} describes the offline baselines that train
on pooled positives. Appendix~\ref{app:extra} holds the algorithm and the additional results behind \S\ref{sec:exp}:
robustness to seeds, resampling, cutoffs and learning rates; compute and longer training; stronger baselines; where the
gain comes from; judges and popularity; and the full result tables. Appendices~\ref{app:p0}--\ref{app:music} specify
the length control, the hyperparameters and the construction of Reddit-Books and Reddit-Music, and
Appendices~\ref{app:repro}--\ref{app:protoqa} give reproducibility notes, the memorization probe and the ProtoQA
experiment that carries the reward beyond recommendation.

\medskip
{\small
\apptocsec{app:proofs}{Proofs}
\apptocsec{app:dpo}{Offline baselines on pooled positives}
\apptocsec{app:extra}{Algorithm and additional results}
\apptocsub{app:alg}{Algorithm}
\apptocsub{app:robust}{Robustness}
\apptocsub{app:compute}{Compute and longer training}
\apptocsub{app:baselines}{Stronger baselines}
\apptocsub{app:mech}{Where the gain comes from}
\apptocsub{app:judge}{Judges and popularity}
\apptocsub{app:full}{Full results}
\apptocsec{app:p0}{Length control}
\apptocsec{app:hparams}{Hyperparameters}
\apptocsec{app:books}{Reddit-Books}
\apptocsec{app:music}{Reddit-Music}
\apptocsec{app:repro}{Reproducibility notes}
\apptocsec{app:memo}{Memorization}
\apptocsec{app:protoqa}{ProtoQA}
\par}

\FloatBarrier
\section{Proofs}
\label{app:proofs}

\begin{table}[htbp]
\centering\small
\setlength{\tabcolsep}{4.5pt}
\begin{adjustbox}{max width=\columnwidth}
\begin{tabular}{@{}l ccccc l@{}}
\toprule
list $L$ (top 5) & A & B & C & D & E & items named \\
\midrule
row 1 (responder 1) & 1 & 0 & 0 & 0 & 0 & \{A\} \\
row 2 (responder 2) & 0 & 1 & 0 & 0 & 0 & \{B, F\} \\
row 3 (responder 3) & 0 & 0 & 0 & 0 & 0 & \{G\} \\
row 4 (responder 4) & 1 & 0 & 1 & 0 & 0 & \{A, C\} \\
row 5 (responder 5) & 0 & 0 & 0 & 0 & 0 & \{H\} \\
\midrule
sum & 2 & 1 & 1 & 0 & 0 & per-row objective \\
bounded sum (union) & 1 & 1 & 1 & 0 & 0 & $\approx$ per-row \\
mean & 0.4 & 0.2 & 0.2 & 0 & 0 & per-request \\
\bottomrule
\end{tabular}
\end{adjustbox}
\tabnote{One row per responder under Rank-GRPO, one reward per request after marginalizing; targets follow Proposition~\ref{prop:agg}.}
\caption{Per-position rewards for a toy request with five responders.}
\label{tab:example}
\end{table}

\paragraph{Setup.} For a request $x$ the policy draws $L^1,\dots,L^G$ and position $k$ of list $g$ receives $R^g_k$.
The estimator is $A^g_k=(R^g_k-\bar R_k)/\sigma_k$ with $\bar R_k=\frac1G\sum_hR^h_k$ and
$\hat g=\frac1G\sum_{g,k}A^g_k\,s^g_k$, where $s^g_k$ is the (reward-independent, loss-normalization-weighted) sum of
token score functions $\nabla_\theta\log\pi_\theta$ over the tokens of line $k$ (Table~\ref{tab:example} shows the
rewards of one list under each aggregation). This is the on-policy gradient of the
Rank-GRPO surrogate (importance ratio 1, before clipping); the KL term does not depend on the reward and is omitted.
The length-shaping terms do not depend on the item rewards under Rank-GRPO's fixed constants; under the length control of
Appendix~\ref{app:p0} their scale is a batch statistic of the rewards, which, like $\sigma_k$ under batch scaling,
moves by $O(1/B)$ with one group. $\sigma_k$ is $1$ (no scaling), $\mathrm{std}_g(R^g_k)+\epsilon$ (group), or the
standard deviation of position $k$'s rewards over the batch plus $\epsilon$ (batch); $\epsilon=10^{-4}$, Bessel-corrected
standard deviations.

\begin{lemma}[linearity]
\label{lem:linear}
If $\sigma_k$ does not depend on this group's rewards, then $\hat g=\sum_{g,k}R^g_k\,u^g_k$ with
$u^g_k=(s^g_k-\bar s_k)/(G\sigma_k)$ and $\bar s_k=\frac1G\sum_hs^h_k$. If moreover $R^g_k=e^g_k\,Y(L^g_k)$ for an
item-level reward $Y$, then $\hat g(Y)=\sum_iY(i)\,U_i$ with $U_i=\sum_{(g,k):L^g_k=i}e^g_k\,u^g_k$.
\end{lemma}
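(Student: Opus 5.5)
The plan is to substitute the definition of the advantage into $\hat g$ and then regroup the sum, first by list and position and then by item. Because $\sigma_k$ does not depend on this group's rewards, it acts as a constant once $L^{1:G}$ and the score functions are fixed. The only reward dependence left is then through $R^g_k$ and the group mean $\bar R_k$, and both enter linearly.

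First, write
\[
\hat g=\frac1G\sum_{g,k}\frac{R^g_k-\bar R_k}{\sigma_k}\,s^g_k
=\frac1G\sum_k\frac1{\sigma_k}\Bigl(\sum_g R^g_k s^g_k-\bar R_k\sum_g s^g_k\Bigr).
\]
Using $\sum_g s^g_k=G\bar s_k$ and $\bar R_k=\frac1G\sum_h R^h_k$, the second term equals $\sum_h R^h_k\,\bar s_k$. Renaming $h$ to $g$ gives
\[
\hat g=\sum_{g,k}R^g_k\,\frac{s^g_k-\bar s_k}{G\sigma_k}=\sum_{g,k}R^g_k u^g_k.
\]
This is the familiar fact that centring can be moved from the rewards onto the score functions. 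The one point to check is that $\sigma_k$ depends only on $k$ and not on $g$, so it can be pulled out of the sum over $g$. That holds under each of the three options: no scaling, group scaling with $\sigma$ treated as fixed by hypothesis, and batch scaling.

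Second, insert $R^g_k=e^g_k\,Y(L^g_k)$. This gives $\hat g=\sum_{g,k}e^g_k\,Y(L^g_k)\,u^g_k$. Partition the index pairs $(g,k)$ according to the item $i=L^g_k$; pairs whose line does not parse still fall in some class and have $e^g_k=0$, so they can be assigned to an arbitrary class or dropped. Then exchange the order of summation:
\[
\sum_i Y(i)\sum_{(g,k):L^g_k=i}e^g_k u^g_k=\sum_i Y(i)\,U_i.
\]

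There is no real obstacle. The only care needed is bookkeeping. Duplicate items across lists, or within one list, are handled automatically, because $U_i$ sums over all occurrences; repeated lines within a list have $e^g_k=0$ anyway. The $U_i$ depend only on the lists and the score functions and not on $Y$, and this is exactly the linearity in $Y$ that Proposition~\ref{prop:rb} will use.
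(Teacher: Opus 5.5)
Your proof is correct and follows the paper's argument: at each position you move the group centring from the rewards onto the score functions, which is legitimate because $\sigma_k$ is common to the whole group and, by hypothesis, does not depend on its rewards; you then regroup the terms by item. Your extra bookkeeping about unparsed and repeated lines, which carry $e^g_k=0$ and so contribute nothing, is a harmless elaboration of the paper's one-line ``group the terms by item.''
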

\begin{proof}
$\sum_g(R_g-\bar R)s_g=\sum_gR_gs_g-\bar R\sum_gs_g=\sum_gR_g(s_g-\bar s)$ at each position; then group the terms by
item.
\end{proof}

With $\sigma_k=1$, $\E[\hat g]=\frac{G-1}{G}h(\theta)$ with $h(\theta)=\E\sum_kR_k\nabla\log\pi_\theta(L_k\mid
x,L_{<k})$; the factor arises because the group mean includes the sample itself. $h$ is Rank-GRPO's immediate-credit
surrogate: it drops the terms $\E[R_j\nabla\log\pi_\theta(L_k\mid\cdot)]$, $j>k$, of the gradient of the expected
number of hits. The statements below compare rewards inside this fixed estimator and do not depend on this choice.

\paragraph{Proposition~\ref{prop:rb}, full statement.} Fix a request $x$ with $n$ responders whose rows share the
prompt and the mentioned-title set, and let $r\sim\mathrm{Unif}\{1,\dots,n\}$ independently of the lists. Let $\sigma$
not depend on this group's rewards. Then
\begin{enumerate}
  \item $\E_r[\gS\mid x,L^{1:G}]=\gM$; hence $\E[\gS]=\E[\gM]$ whenever both draw requests from the same distribution,
  for any policy.
  \item $\Cov(\gS)=\Cov(\gM)+\E\big[\Cov(\gS\mid x,L^{1:G})\big]$ with
  $\Cov(\gS\mid x,L^{1:G})=\sum_{i,i'}\hat C_{ii'}\,U_iU_{i'}^{\top}$ and
  $\hat C_{ii'}=\frac1n\sum_r\1\{i\in S_r\}\1\{i'\in S_r\}-\phat_i\phat_{i'}$. In trace form,
  $\tr\Var(\gS)-\tr\Var(\gM)=\E\|\sum_i(\1\{i\in S_r\}-\phat_i)U_i\|^2\ge0$. Ignoring cross-item covariances this is
  $\E\sum_i\phat_i(1-\phat_i)\|U_i\|^2$, the within-request reward variance weighted by the squared score
  coefficients; it vanishes if every generated item is named by all or none of the responders, in particular when
  $n=1$.
  \item For a batch of independently drawn requests both statements hold for the summed update, and the variance gaps
  add. Multiplying the reward by the constant $n$ (the sum) multiplies item~1 by $n$ and item~2 by $n^2$.
\end{enumerate}
\begin{proof}
By Lemma~\ref{lem:linear}, $\gS=\sum_i\1\{i\in S_r\}U_i$ with $U_i$ a function of $(x,L^{1:G})$ only, and
$\E_r\1\{i\in S_r\}=\phat_i$, which gives item~1. Item~2 is the law of total covariance applied to a linear map of the
indicator vector $(\1\{i\in S_r\})_i$, whose conditional covariance is $\hat C$. Item~3 follows from independence
across requests and linearity.
\end{proof}

\begin{proposition}[aggregation and objective]
\label{prop:agg}
With a shared fixed $\sigma$, the mean with requests drawn in proportion to $n_x$ has Rank-GRPO's expected update
(that of $\Jrow$) at no larger variance, the sum with requests drawn uniformly has a constant multiple of it, and the
mean with requests drawn uniformly has the expected update of $\Jreq$.
\end{proposition}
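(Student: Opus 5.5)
The plan is to reduce every claim to Lemma~\ref{lem:linear} and item~1 of Proposition~\ref{prop:rb}, with a shared fixed $\sigma$ throughout, so that $\hat g(Y)=\sum_iY(i)U_i$ is linear in the item reward and $U_i$ depends only on $(x,L^{1:G})$.

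First I make the two objectives explicit. Let $P$ be the base distribution over requests. Let $\Jrow$ be the objective whose expected update is Rank-GRPO's: draw a row uniformly from all (request, responder) pairs, so $x$ has probability $\propto n_xP(x)$ and then $r\sim\mathrm{Unif}\{1,\dots,n_x\}$. Let $\Jreq$ draw $x\sim P$ and then $r$ uniform within $x$. Conditional on $x$, the lists $L^{1:G}$ are drawn from the same policy under every scheme, so only the weighting over $x$ and the item reward differ.

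\emph{Mean with proportional sampling.} I condition on $(x,L^{1:G})$ and apply item~1 of Proposition~\ref{prop:rb}: $\E_r[\gS\mid x,L^{1:G}]=\gM$. Taking the outer expectation under the same $x$-distribution $\propto n_xP(x)$ gives $\E[\gM]=\E[\gS]$, which is the expected update of $\Jrow$. The variance claim is item~2 of Proposition~\ref{prop:rb} applied under this sampling distribution, giving $\tr\Var(\gS)-\tr\Var(\gM)\ge0$.

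\emph{Sum with uniform sampling.} The sum reward is $\sum_r\1\{i\in S_r\}=n_x\phat_i$, so by linearity its update is $\hat g_{\mathrm{sum}}=n_x\gM$. Under uniform sampling I compute
\[
\E_{x\sim P}\big[n_x\,\E[\gM\mid x]\big]=\sum_xP(x)\,n_x\,\E[\gM\mid x]=Z\,\E_{x\propto n_xP}\big[\E[\gM\mid x]\big],
\]
with $Z=\sum_xP(x)n_x$. By the previous step this equals $Z$ times Rank-GRPO's expected update, which is the required constant multiple.

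\emph{Mean with uniform sampling.} By definition, $\E_{x\sim P}\E[\gM\mid x]=\E_{x\sim P}\E_r[\gS\mid x]$, which is the expected update of $\Jreq$.

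The only delicate step is stating precisely what ``shared fixed $\sigma$'' means. The multiple $Z$ must be constant, so $\sigma$ cannot be recomputed from the sum rewards. Under batch scaling, rescaling the rewards would rescale $\sigma$ and cancel the factor, and that case is covered only up to the $O(1/B)$ caveat noted before Lemma~\ref{lem:linear}. I would state explicitly that $\sigma$ is held identical across the arms being compared, for instance fixed to $1$ or to a preset constant per position. I would also note that the sum arm's variance scales by $n_x^2$ (item~3 of Proposition~\ref{prop:rb}), which is why the no-larger-variance claim is made only for the mean.
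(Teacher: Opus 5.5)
Your proposal is correct and follows the paper's proof essentially step for step. You combine linearity from Lemma~\ref{lem:linear} with item~1 of Proposition~\ref{prop:rb} for the proportionally sampled mean, use the law of total variance for the ordering, write the sum's update as $n_x$ times the mean's so that uniform sampling gives the factor $\E_P[n_x]=N_R/N_T$, and show that the uniformly sampled mean reproduces the update of $\Jreq$; your conditioning on $(x,L^{1:G})$ rather than on the request alone is the precise form of the paper's ``conditional expectation given $j$''.
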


\paragraph{Proposition~\ref{prop:agg}, full statement.} The per-row and per-request objectives are
\begin{align*}
\Jrow&=\tfrac{1}{N_R}\textstyle\sum_j\sum_{r}\E_L\sum_k q_k(L;S_{jr}),\\
\Jreq&=\tfrac{1}{N_T}\textstyle\sum_j\tfrac{1}{n_j}\sum_{r}\E_L\sum_k q_k(L;S_{jr}),
\end{align*}
with $N_T$ requests and $N_R=\sum_jn_j$ rows (per-row recall divides hits by $|S_r|$; a weight $1/|S_r|$ per responder
carries through every statement, and like Rank-GRPO we use weight 1). Write $\bar g_j(Y)$ for the expected update on
request $j$ with item reward $Y$. With $\sigma$ fixed: (i) Rank-GRPO's estimator (rows drawn uniformly) has expectation
$\sum_j\frac{n_j}{N_R}\bar g_j(\phat_j)$, the update for $\Jrow$. (ii) The mean with requests drawn in proportion to
$n_j$ has the same expectation and is Rank-GRPO's estimator conditioned on the request, so at a shared per-position
scale its variance is never larger. (iii) The sum with requests drawn uniformly has expectation $\frac{N_R}{N_T}$ times
(i), the same direction; the request weight $n_j$ moves from the sampling distribution into the reward, which can raise
the variance across requests. (iv) The mean with requests drawn uniformly has expectation $\frac1{N_T}\sum_j\bar
g_j(\phat_j)$, the update for $\Jreq$, which gives single-responder requests 41\% of the weight instead of their 7.7\% in
$\Jrow$. The bounded sum equals the sum on items named by one responder (89\% of union items, 76\% of the sum's mass);
it removes the responder draw but is not itself a Rao--Blackwellization, since it undercounts items that several
responders name.

\paragraph{Proof of Proposition~\ref{prop:agg}.} Let $\bar g_j(Y)=\E[\hat g(Y)\mid x_j]$, linear in $Y$ by
Lemma~\ref{lem:linear}. (i) Rank-GRPO draws request $j$ with probability $n_j/N_R$ and then a responder uniformly; by
Proposition~\ref{prop:rb}.1 the conditional expectation given $j$ is $\bar g_j(\phat_j)$, so the expectation is
$\sum_j\frac{n_j}{N_R}\bar g_j(\phat_j)$. Since $\Jrow=\frac1{N_R}\sum_jn_j\,\E_L\sum_ke_k\phat_j(L_k)$, this is the
update for $\Jrow$ within the surrogate. (ii) Drawing $j$ with probability $n_j/N_R$ and using $\phat_j$ is the
conditional expectation of (i) given $j$; at a shared scale the law of total variance gives the variance ordering. (iii) With $j$ uniform
and reward $c_j=n_j\phat_j$, the expectation is $\frac1{N_T}\sum_jn_j\bar g_j(\phat_j)=\frac{N_R}{N_T}$ times (i). (iv)
With $j$ uniform and reward $\phat_j$ the expectation is $\frac1{N_T}\sum_j\bar g_j(\phat_j)$, and
$\Jreq=\frac1{N_T}\sum_j\E_L\sum_ke_k\phat_j(L_k)$. For the bounded sum $b_j(i)=\min(c_{ji},1)$,
$c_j-b_j=\sum_i(c_{ji}-1)_+\,\1_i$ is supported on items named by at least two responders, so the expected update
differs from (iii) only through those items.

\paragraph{Zero-signal groups.} A group contributes nothing at position $k$ iff its $G$ rewards are equal. Under any of
the three aggregations all rewards are zero iff no list hits any item of the union of responders' sets, so
$P(\text{all zero}\mid\text{aggregated})=P(\cap_r\{\text{all zero for }S_r\})\le\min_rP(\text{all zero for }S_r)$.

\paragraph{What marginalizing does not remove.} Marginalizing removes the per-step responder noise, not the sampling
error of $\phat$ as an estimate of what other potential responders would say; with a fixed dataset every reward shares
that error.

\paragraph{Batch scaling.} With batch scaling $\sigma_k$ is a statistic of $B=384$ rewards, one group moves it by
$O(1/B)$, and treating it as fixed is accurate, so Propositions~\ref{prop:rb} and \ref{prop:agg} hold per position. On
the same lists, $\Var_{\mathrm{batch}}(R^{\mathrm S}_k)=\Var_{\mathrm{batch}}(R^{\mathrm M}_k)+\E\,\Var(R^{\mathrm
S}_k\mid x,L)\ge\Var_{\mathrm{batch}}(R^{\mathrm M}_k)$. Each run divides by the batch standard deviation of its own
raw rewards, so the per-responder advantages are divided by the larger scale, and a fraction
$\E\Var(R^{\mathrm S}_k\mid x,L)/\Var(R^{\mathrm S}_k)$ of the per-responder reward's batch variance is responder noise.
The two runs are therefore compared at different advantage scales; Adam absorbs a global rescaling, but the balance
between the policy-gradient and KL terms differs. At the SFT policy the per-position batch standard deviation is 0.130
for the per-responder hit and 0.059 for the mean under row sampling (0.124 and 0.088 under request sampling; 0.370 and
0.249 for the bounded sum), so the batch-scaled mean takes larger effective steps; Proposition~\ref{prop:rb} compares
estimators at a common scale, not these runs. With the exact sum the batch variance is dominated by the few requests
with large $n$ (1.251 under row sampling), which is why we cap it.

\paragraph{Group scaling.} With group scaling $\sigma_k$ depends on the group's rewards, the map is nonlinear, and
item~1 of Proposition~\ref{prop:rb} fails as an identity. For 0/1 rewards the per-group factor
$\rho=(\sigma_B+\epsilon)/(\sigma_g+\epsilon)$ of every non-degenerate group lies within a factor 1.51 of a constant at
each position (Proposition~\ref{prop:norm}.3), so the group-scaled update is a reweighting of the batch-scaled one by
group weights in a fixed range, but for the per-responder hit the weight depends on the responder drawn, so $\E_r$
does not pass through it. This is
the sense in which \method{}, which uses per-group scaling, is motivated by Propositions~\ref{prop:rb}
and~\ref{prop:agg} rather than covered by them; the arms that test the propositions use batch scaling. The bounded sum
and the mean differ in more than scale only where some list hits an item named by two or more responders: 5.8\% of
group-positions at the SFT policy (23.7\% of those with a hit), rising to 26.3\% for requests with ten or more
responders.

\paragraph{Proposition~\ref{prop:norm}, full statement.} Fix a position, a group with rewards $a_1,\dots,a_G$, mean
$\bar a$ and Bessel standard deviation $\sigma_g$, and let $\sigma_B$ be the batch standard deviation at that position. Let
$\hat A^{\mathrm{grp}}_g=(a_g-\bar a)/(\sigma_g+\epsilon)$ and $\hat A^{\mathrm{bat}}_g=(a_g-\bar a)/(\sigma_B+\epsilon)$.
\begin{enumerate}
  \item For $\epsilon=0$ and $\sigma_g>0$, $\hat A^{\mathrm{grp}}$ is invariant to $a\mapsto\lambda a+\beta$, $\lambda>0$,
  and $\sum_g(\hat A^{\mathrm{grp}}_g)^2=G-1$ for every non-degenerate group; with $\epsilon>0$ the sum is
  $(G-1)\sigma_g^2/(\sigma_g+\epsilon)^2$.
  \item $\hat A^{\mathrm{grp}}=\rho\,\hat A^{\mathrm{bat}}$ with $\rho=(\sigma_B+\epsilon)/(\sigma_g+\epsilon)$. Under batch
  scaling a group's advantage mass is $(G-1)\sigma_g^2/(\sigma_B+\epsilon)^2$, proportional to its share of the reward
  variance; under group scaling it is $G-1$ regardless.
  \item If $a_g\in\{0,1\}$ with $m\in\{1,\dots,G-1\}$ hits, $\sigma_g^2=m(G-m)/(G(G-1))$, so
  $\sigma_g\in[1/\sqrt G,\sqrt{G/(4(G-1))}]$ for even $G$. For $G=8$, $\sigma_g\in[0.354,0.535]$ and $\max\rho/\min\rho\le
  G/(2\sqrt{G-1})=1.51$ within a position.
  \item If rewards lie on a grid of spacing $\delta$, the group with $G-1$ equal rewards and one reward $\delta$ higher
  has $\sigma_g=\delta/\sqrt G$ and receives the advantages $\big(\frac{G-1}{\sqrt G},-\frac1{\sqrt G},\dots\big)=(2.47,-0.35,
  \dots)$ of a full hit against misses. Relative to the smallest binary spread the amplification is $1/\delta$, which is
  $n$ for the mean.
  \item If the reward differences inside a group are of the order of the estimation noise of the weights
  ($\sqrt{p(1-p)/n}$ for $\phat$), group-normalized advantages there have magnitude $O(1)$ and a sign set by which
  responders happened to be observed; batch-normalized ones have magnitude $O(\sigma_g/\sigma_B)$.
\end{enumerate}
\begin{proof}
Items 1--2 are immediate. For 3, $\sum_g(a_g-\bar a)^2=m(G-m)/G$; the extremes are $m=1$ and $m=G/2$, and
$\sqrt{G/(4(G-1))}\cdot\sqrt G=G/(2\sqrt{G-1})$. For 4, $\sum_g(a_g-\bar a)^2=\delta^2(G-1)/G$, so $\sigma_g=\delta/\sqrt G$
and the advantages equal those of $m=1$ in item~3. Item~5 follows from item~2.
\end{proof}

\FloatBarrier
\section{Offline baselines on pooled positives}
\label{app:dpo}
\paragraph{SFT on pooled positives.} One example per training request instead of one per row: the target lists the
union of the responders' items, ordered by how many responders named each (ties by training popularity, then title),
truncated or filled to 20 from the request's own row targets, with mentioned titles excluded (21,023 requests; 12\% have
more than 20 pooled items). Recipe and selection are those of our SFT (Appendix~\ref{app:hparams}); at 4B, where validation
still rose at step 1,500, it runs 3,000 steps and is selected among the six checkpoints. On test it scores 15.94 at 2B and 17.25 at 4B, $+0.53$ $[-0.19,+1.27]$ and $+0.73$ $[+0.04,+1.37]$ above
our SFT and far below both RL arms (Table~\ref{tab:checks}).

\paragraph{DPO on pooled positives.} The DPO baseline prefers the best of eight sampled SFT lists to the worst, both
scored against all responders, so both lists are the policy's own (42,204 pairs at 2B; requests whose lists all tie give
none); its learning rate and epochs were chosen among three configurations on 2B validation, all peaking at 19.2 to
19.3; the 4B run follows the same recipe, selected on 4B validation. It improves on SFT
($+1.64$ $[+1.11,+2.15]$ at 4B) but stays 5.5 and 6.9 points below \method{}. Taking the union of the responders' items
as the preferred list instead scores 13.69 on test, $-1.71$ $[-2.27,-1.15]$ below SFT: the preferred lists are text the
policy would rarely write.

\FloatBarrier
\section{Algorithm and additional results}
\label{app:extra}

\subsection{Algorithm}\label{app:alg}

\begin{algorithm}[h]
\small
\hrule\vspace{3pt}
\textbf{Algorithm 1:} one training step. \method{} uses $a_x(i)=\min(c_i,1)$, uniform request sampling and
per-group $\sigma_k$; the ablation arms vary these three choices.\vspace{2pt}
\hrule\vspace{3pt}
\textbf{Input:} policy $\pi_\theta$; requests $x$ with responder counts $c_i(x)$ and $n_x$; aggregation
$a_x(i)\in\{\min(c_i,1),\ c_i/n_x\}$; $B'=48$ requests per step; $G=8$.
\begin{enumerate}[leftmargin=1.4em]
  \item Draw $B'$ requests, uniformly or in proportion to $n_x$.
  \item For each request sample $G$ lists at temperature 1; number the non-blank lines (a line that does not parse is a miss).
  \item Reward every position: $q^g_k=e^g_k\,a_x(L^g_k)$.
  \item Advantage: $A^g_k=(q^g_k-\bar q_k(x))/(\sigma_k+10^{-4})$, with $\sigma_k$ the standard deviation of position
  $k$'s rewards within the group of $G$ lists (\method{}, Rank-GRPO), or over all $B'G$ lists (batch scaling: the batch-scaled
  arms of Tables~\ref{tab:main} and~\ref{tab:ablation} and Figure~\ref{fig:noise}b).
  \item Put $A^g_k$ on the tokens of the $k$-th item line; add length shaping in reward units (App.~\ref{app:p0}).
  \item Update $\theta$ with the clipped surrogate and the KL penalty.
\end{enumerate}
\vspace{2pt}\hrule
\end{algorithm}

\FloatBarrier

\subsection{Robustness}\label{app:robust}

Figure~\ref{fig:main} shows the gain of every setting and seed. Table~\ref{tab:robust} recomputes it with whole
submissions as the resampling unit, without test requests whose submission also appears in training, with other
metrics and seeds, and with batch instead of per-group scaling, and Table~\ref{tab:topk} at other cutoffs.

\begin{table*}[t]
\centering\small
\setlength{\tabcolsep}{8pt}
\renewcommand{\arraystretch}{1.08}
\begin{tabular}{@{}l cccc@{}}
\toprule
 & \multicolumn{3}{c}{\textbf{Reddit-v2}} & \textbf{Reddit-Books} \\
\cmidrule(lr){2-4}\cmidrule(l){5-5}
 & 2B & 4B & 9B & 2B \\
\midrule
\multicolumn{5}{@{}l}{\grp{Resampling unit}} \\
\quad requests (primary) & \dstack{\up{+1.57}}{[1.10,2.06]} & \dstack{\up{+1.54}}{[1.10,2.00]} & \dstack{\up{+1.46}}{[1.02,1.93]} & \dstack{\up{+0.81}}{[0.19,1.46]} \\
\quad Reddit submissions & \dstack{\up{+1.57}}{[1.09,2.07]} & \dstack{\up{+1.54}}{[1.08,2.00]} & \dstack{\up{+1.46}}{[1.01,1.92]} & -- \\
\quad without the 42 train-overlap requests & \dstack{\up{+1.56}}{[1.10,2.03]} & \dstack{\up{+1.55}}{[1.09,2.05]} & \dstack{\up{+1.47}}{[1.04,1.92]} & -- \\
\addlinespace[3pt]
\multicolumn{5}{@{}l}{\grp{Metric}} \\
\quad R@20 per request & \dstack{\up{+2.29}}{[1.43,3.17]} & \dstack{\up{+2.58}}{[1.82,3.41]} & \dstack{\up{+2.40}}{[1.64,3.22]} & \dstack{\ns{+0.70}}{[$-$0.15,1.52]} \\
\quad N@20 per row & \dstack{\up{+0.76}}{[0.53,0.99]} & \dstack{\up{+0.82}}{[0.61,1.04]} & \dstack{\up{+0.84}}{[0.62,1.07]} & \dstack{\up{+0.65}}{[0.36,0.96]} \\
\addlinespace[3pt]
\multicolumn{5}{@{}l}{\grp{Seeds}} \\
\quad seed 42 pair & \dstack{\up{+1.39}}{[0.94,1.83]} & \dstack{\up{+1.91}}{[1.47,2.32]} & \dstack{\up{+1.71}}{[1.29,2.14]} & \dstack{\up{+1.31}}{[0.70,1.95]} \\
\quad seed 1234 pair & \dstack{\up{+1.40}}{[0.94,1.86]} & \dstack{\up{+1.50}}{[1.04,1.97]} & \dstack{\up{+2.13}}{[1.65,2.62]} & \dstack{\ns{+0.54}}{[$-$0.09,1.19]} \\
\quad three seeds each (Welch) & \dstack{\up{+1.45}}{[1.32,1.58]} & \dstack{\up{+1.65}}{[1.12,2.19]} & \dstack{\up{+1.77}}{[0.75,2.78]} & \dstack{\up{+0.89}}{[0.39,1.38]} \\
\addlinespace[3pt]
\multicolumn{5}{@{}l}{\grp{Advantage scale}} \\
\quad group $-$ batch, bounded sum & \dstack{\up{+0.69}}{[0.32,1.06]} & \dstack{\ns{+0.09}}{[$-$0.27,0.42]} & \dstack{\ns{$-$0.01}}{[$-$0.37,0.34]} & \dstack{\ns{$-$0.43}}{[$-$0.95,0.09]} \\
\bottomrule
\end{tabular}

\tabnote{Points; per-row R@20 unless noted. Train-overlap requests: test requests whose submission also appears in training. --: not run or not applicable.}
\caption{Robustness of the gain over the better per-responder baseline.}
\label{tab:robust}
\end{table*}

\begin{table}[htbp]
\centering\small
\setlength{\tabcolsep}{4pt}
\renewcommand{\arraystretch}{1.08}
\begin{tabular}{@{}l cccc@{}}
\toprule
 & \multicolumn{3}{c}{\textbf{Reddit-v2}} & \textbf{Reddit-Books} \\
\cmidrule(lr){2-4}\cmidrule(l){5-5}
 & 2B & 4B & 9B & 2B \\
\midrule
R@5 & \dstack{\up{+0.73}}{[0.36,1.10]} & \dstack{\up{+0.65}}{[0.24,1.06]} & \dstack{\up{+0.75}}{[0.36,1.14]} & \dstack{\up{+1.13}}{[0.62,1.67]} \\
R@10 & \dstack{\up{+1.22}}{[0.80,1.63]} & \dstack{\up{+1.11}}{[0.69,1.51]} & \dstack{\up{+1.13}}{[0.71,1.56]} & \dstack{\up{+0.78}}{[0.25,1.35]} \\
R@20 & \dstack{\up{+1.57}}{[1.10,2.06]} & \dstack{\up{+1.54}}{[1.10,2.00]} & \dstack{\up{+1.46}}{[1.02,1.93]} & \dstack{\up{+0.81}}{[0.19,1.46]} \\
N@20 & \dstack{\up{+0.76}}{[0.53,0.99]} & \dstack{\up{+0.82}}{[0.61,1.04]} & \dstack{\up{+0.84}}{[0.62,1.07]} & \dstack{\up{+0.65}}{[0.36,0.96]} \\
Hit@20 & \dstack{\up{+1.96}}{[1.37,2.56]} & \dstack{\up{+1.58}}{[1.06,2.07]} & \dstack{\up{+1.47}}{[0.96,2.03]} & \dstack{\up{+0.85}}{[0.11,1.58]} \\
\bottomrule
\end{tabular}

\tabnote{Hit@20: share of rows with at least one responder item in the top 20.}
\caption{\method{} minus the better per-responder baseline at other cutoffs (test, per row).}
\label{tab:topk}
\end{table}

\paragraph{Other seeds, KL and single-responder requests.} With the other seeds, \method{} needs 0.50 and 0.52 of
the baseline's steps at 2B (seeds 42 and 1234), 0.31 and 0.33 at 4B, 0.29 at 9B (seed 1234), 0.45 and 0.41 on
Reddit-Books 2B, 0.39 on Reddit-Books 4B and 0.45 on Llama-3.2-3B (seed 42, against Rank-GRPO): every seed lies within
the 25 to 53\% of Table~\ref{tab:main}, and \method{} is ahead (CI above zero) at 71 of these 72 validated checkpoints.
When \method{} reaches Rank-GRPO's step-2,000 validation R@20, its KL to the reference policy is 0.34, 0.31, 0.29,
0.47 and 0.42 (2B, 4B, 9B, Llama-3.2-3B, Llama-3.1-8B), against Rank-GRPO's 0.29, 0.26, 0.27, 0.39 and 0.47 at step
2,000: slightly higher in four settings (by 0.02 to 0.08), lower in one. Under per-group scaling, requests with one
responder make up 41\% of \method{}'s groups but carry 9.6\% $[9.4,9.9]$ of its squared advantage at the start of
training (Rank-GRPO: 7.7\% of groups, 5.2\% of the squared advantage), because their groups have fewer hits; the
per-group scale does not shift the update toward these requests.

\paragraph{Later test set.}\label{app:late} After model selection we built
a new test set from r/suggestmeabook threads created between 2017-10 and 2017-12 with the unchanged Books pipeline
(3,795 requests, 21,369 rows; no submission shared with the Books splits; grounding 96.6\%) and
evaluated each selected checkpoint once. For the 4B seed-3407 pair, \method{} minus Rank-GRPO is
$+0.67$ $[+0.34,+1.02]$ per row ($+0.46$ $[-0.04,+0.95]$ per request; 29.41 against 28.74; popularity 6.21). Against the per-responder batch baseline, \method{} gains $+0.60$
$[+0.27,+0.93]$ (4B, seed 3407), $+1.15$ $[+0.80,+1.50]$ (4B, seed 42), $+0.36$ $[+0.04,+0.70]$ and $+0.69$
$[+0.37,+1.02]$ (2B, seeds 42 and 1234). The 2B pair trained to a plateau is level ($+0.25$ $[-0.08,+0.57]$). Re-voting
the catalog's title forms with the new threads changed 557 of 74,474 titles; restricted to items whose form did not
change, no sign or CI changes (primary $+0.71$ $[+0.34,+1.08]$).

\paragraph{Cosine learning-rate schedule.} With the learning rate decayed from $10^{-6}$ to 0 over 4,000 steps (2B,
seeds 3407 and 42, otherwise as Table~\ref{tab:main}), Rank-GRPO's validation R@20 peaks at 24.26 (step 3,000) and
24.12 (step 4,000); \method{} reaches these scores after about 895 and 963 steps (30\% and 24\%) and Rank-GRPO's
step-2,000 scores after about 676 and 736. Tested at their best steps, \method{} leads by $+1.08$ $[+0.66,+1.48]$ and
$+1.12$ $[+0.74,+1.49]$ per row ($+2.30$ and $+1.32$ per request); over the two seeds $+1.10$ $[+0.66,+1.54]$. The
schedule helps neither arm: within 4,000 steps each arm's best validation score is 0.3 to 0.8 points below its
constant-rate run, and on test each cosine arm is within 0.6 points of its constant-rate runs.

\paragraph{Coverage and step size.} In 2,000 steps the row-sampled arms see 55\% of the training requests (25\% of those with one responder), so
part of \method{}'s speed could come from coverage. Row 8 of Table~\ref{tab:ablation} keeps the per-responder reward
but samples requests uniformly, as \method{} does. Its validation R@20 every 250 steps (2B) is 20.02, 21.78, 22.44,
22.64, 23.42, 23.77, 23.56 and 23.95, against 20.16 to 24.19 for Rank-GRPO and 21.18 to 25.57 for \method{}. Row 8
never reaches Rank-GRPO's 24.19 (78.5\% of bootstrap resamples cross nowhere); \method{} is ahead of row 8 at all eight
checkpoints (CIs above zero; $+1.62$ $[+1.08,+2.14]$ at step 2,000) and reaches row 8's step-2,000 score after 921
steps (0.46 $[0.35,0.59]$). Rank-GRPO with learning rate $2\cdot10^{-6}$ scores 21.52, 22.25, 22.97, 23.59, 23.45,
23.66, 24.12 and 24.03, and with $3\cdot10^{-6}$ 21.68, 22.23, 23.20, 23.04, 23.11, 23.10, 23.18 and 22.96: faster than
$10^{-6}$ for 750 steps, and neither reaches 24.19 within 2,000 steps (no crossing in 54\% and 99.95\% of resamples).
For \method{}, half the learning rate ($5\cdot10^{-7}$) still leads Rank-GRPO on test by $+0.69$ $[+0.29,+1.10]$, and
twice the rate scores $+0.34$ $[-0.05,+0.77]$ above $10^{-6}$ (validation 24.73, 25.57 and 26.09); selecting the rate
on validation for both methods would only widen the gap, so both keep $10^{-6}$.

\FloatBarrier

\subsection{Compute and longer training}\label{app:compute}

\paragraph{Longer training.} In the 4,000-step pairs (validation, per row), \method{} is ahead best against best
($+1.05$ $[+0.54,+1.56]$ at 2B, $+1.14$ $[+0.65,+1.64]$ at 4B), and the gap at step 4,000 itself is $+0.81$ at 2B and
$+0.72$ at 4B, down from $+1.26$ and $+1.61$ at step 2,000; Rank-GRPO is still improving at step 4,000. The plateau
pairs train both arms until 1,500 steps pass without a new validation best, up to 8,000 steps, and test each arm's best
checkpoint once (Table~\ref{tab:plateau}, Figures~\ref{fig:curves} and~\ref{fig:curves-music}). Interpolated between checkpoints as in
\S\ref{sec:exp}, \method{} reaches Rank-GRPO's best validation score with 52\% and 63\% of its GPU-hours at 2B (seeds
3407 and 42), 45\% at 4B and 18\% on Reddit-Music. At 9B and on Reddit-Books Rank-GRPO's peak stays slightly higher
(best against best $-0.33$ $[-0.83,+0.16]$ and $-0.08$ $[-0.58,+0.38]$ at 9B, $-0.36$ $[-1.02,+0.25]$ on Reddit-Books);
at 9B Rank-GRPO first passes \method{}'s best at steps 7,250 and 7,500, after 117.2 and 119.5 GPU-hours against
\method{}'s 48.0 and 47.0 to its best. On test \method{} is never significantly behind. With seed 42 at 2B Rank-GRPO
collapses after its peak (validation 3.68 at step 7,500, lists of 1.4 items on average); at 4B \method{} dips to 25.15
at step 2,500 after a KL spike, with 14\% of items written with a wrong year, and recovers by step 2,750
(Figure~\ref{fig:gpuh-raw}); on Reddit-Music \method{}'s validation falls to 12.6 to 12.8 in its last checkpoints while
its share of exact-20 lists drops to about 0.95. The Reddit-Books and Reddit-Music pairs are separate runs of the
configurations of Table~\ref{tab:main}, so their test scores differ slightly from Tables~\ref{tab:main-full-b}
and~\ref{tab:music}.

\begin{table*}[t]
\centering\footnotesize
\setlength{\tabcolsep}{3.5pt}
\begin{tabular}{@{}l cc cc ll c@{}}
\toprule
 & \multicolumn{2}{c}{best validation R@20 (step)} & \multicolumn{2}{c}{test R@20} & \multicolumn{2}{c}{$\Delta$ test R@20} & GPU-h \\
\cmidrule(lr){2-3}\cmidrule(lr){4-5}\cmidrule(lr){6-7}
pair & \method{} & Rank-GRPO & \method{} & Rank-GRPO & per row & per request & share \\
\midrule
2B & 26.15 (4,000) & 25.82 (5,250) & 22.72 & 22.58 & \ns{+0.14} [$-$0.30,0.59] & \ns{$-$0.03} [$-$0.84,0.78] & 52\% \\
2B, seed 42 & 26.23 (4,500) & 26.10 (7,250) & 23.29 & 22.87 & \ns{+0.42} [$-$0.05,0.90] & \ns{+0.43} [$-$0.45,1.29] & 63\% \\
4B & 28.33 (3,000) & 27.89 (5,250) & 25.34 & 24.51 & \up{+0.83} [0.39,1.29] & \up{+1.10} [0.33,1.90] & 45\% \\
9B & 28.89 (2,750) & 29.22 (7,250) & 25.85 & 25.77 & \ns{+0.08} [$-$0.34,0.50] & \ns{+0.50} [$-$0.27,1.25] & -- \\
9B, seed 42 & 29.00 (2,750) & 29.09 (8,000) & 25.93 & 25.88 & \ns{+0.05} [$-$0.38,0.46] & \ns{+0.20} [$-$0.62,0.98] & -- \\
Reddit-Books 2B & 31.23 (2,000) & 31.59 (3,250) & 30.77 & 30.00 & \up{+0.76} [0.12,1.47] & \ns{+0.60} [$-$0.18,1.44] & -- \\
Reddit-Music 2B & 14.41 (1,250) & 13.97 (4,750) & 14.59 & 14.45 & \ns{+0.15} [$-$0.25,0.54] & \ns{$-$0.01} [$-$0.51,0.51] & 18\% \\
\bottomrule
\end{tabular}
\caption{Plateau pairs: both arms trained until 1,500 steps pass without a new validation best (at most 8,000 steps),
each tested once at its best checkpoint; Reddit-v2 unless noted. GPU-h share: \method{}'s GPU-hours to reach Rank-GRPO's best validation score as a share of
Rank-GRPO's (interpolated; --: not reached). Seed 3407 unless noted.}
\label{tab:plateau}
\end{table*}

\begin{figure*}[tp]
\centering
\includegraphics[width=\textwidth]{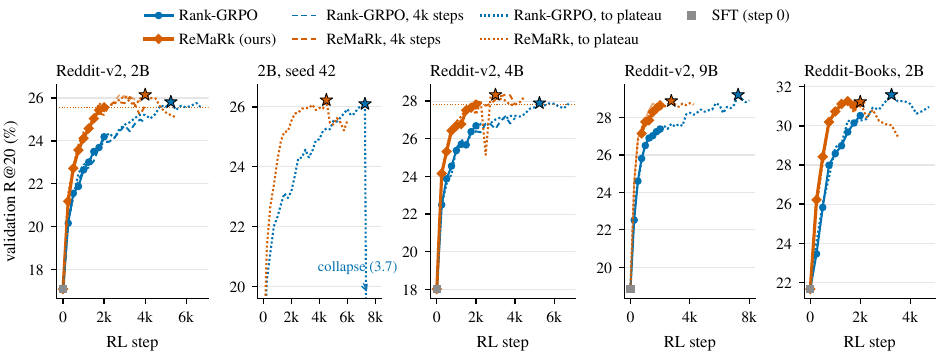}
\caption{Validation R@20 every 250 RL steps (step 0: SFT). Dashed: 4,000-step runs. Dotted curves (Reddit-v2 2B seeds 3407 and 42, 4B, 9B; Reddit-Books 2B): runs
trained until 1,500 steps pass without a new validation best or to 8,000 steps, with stars at the selected steps
(2B seed 3407: \method{} 4,000, Rank-GRPO 5,250; seed 42: 4,500 and 7,250, after which Rank-GRPO collapses to 3.68;
4B: 3,000 and 5,250, with a transient dip of \method{} at 2,500; 9B: 2,750 and 7,250; Reddit-Books: 2,000 and 3,250). Horizontal dotted line: \method{}'s best within 2,000 steps. Faint: runs of the same configuration with every checkpoint kept (Appendix~\ref{app:compute}, test at the
matching step).}
\label{fig:curves}
\end{figure*}

\begin{figure}[htbp]
\centering
\includegraphics[width=\columnwidth]{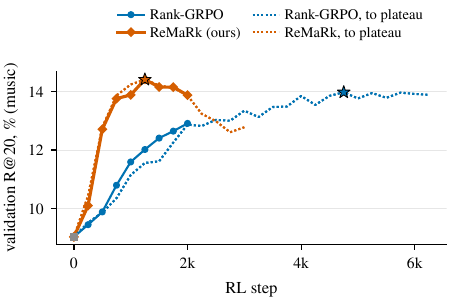}
\caption{Reddit-Music (2B): validation R@20 every 250 RL steps. Solid: 2,000-step runs; dotted: runs trained until
1,500 steps pass without a new validation best (\method{} stopped at 2,750, Rank-GRPO at 6,250), with stars at the
selected steps (1,250 and 4,750).}
\label{fig:curves-music}
\end{figure}

\begin{figure}[htbp]
\centering
\includegraphics[width=\columnwidth]{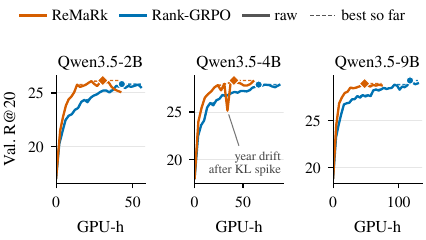}
\caption{Raw validation curves behind Figure~\ref{fig:gpuh} (seed 3407). Solid: validation R@20 of each checkpoint; dashed: best so far.}
\label{fig:gpuh-raw}
\end{figure}

\paragraph{Test at the matching step.} Validation picks both Rank-GRPO's target and the step at which \method{}
reaches it. To check the speed-up on test, we trained \method{} again at 2B, 4B and 9B (seed 3407) with every
checkpoint kept and tested the first checkpoint whose validation R@20 reaches Rank-GRPO's step-2,000 score once
against Rank-GRPO at step 2,000. The
targets are 24.19, 26.69 and 27.39; each run first reaches its target at step 1,000 (24.71, 27.24, 27.88;
interpolated crossings at steps 878, 810 and 764). On test these checkpoints score 21.31, 24.43 and 24.65 against 20.97,
23.56 and 24.05: $+0.35$ $[-0.07,+0.77]$, $+0.87$ $[+0.42,+1.33]$ and $+0.60$ $[+0.16,+1.10]$ per row; $+0.84$
$[+0.12,+1.54]$, $+1.54$ $[+0.78,+2.34]$ and $+1.01$ $[+0.33,+1.79]$ per request; N@20 $+0.16$ $[-0.03,+0.34]$,
$+0.43$ $[+0.22,+0.64]$ and $+0.26$ $[+0.06,+0.46]$ per row. After half the steps (at 2B, 7 against 13 GPU-hours,
Table~\ref{tab:compute}), \method{} is level with Rank-GRPO per row at 2B and ahead at 4B and 9B.

\subsection{Stronger baselines}\label{app:baselines}

\paragraph{Twice the requests per step.} \method{} turns groups without signal into signal, so each of its steps may
simply carry more learning signal. Rank-GRPO with 96 instead of 48 requests per step (2B, seeds 3407 and 42, otherwise as
Table~\ref{tab:main}) keeps 55\% of its groups without signal during training (training logs; \method{}: 30\%) and so sees
43 to 44 groups with signal per step, more than \method{}'s 34; a step takes 2.1 to 2.2 times as long as Rank-GRPO's. At the
GPU-hours of \method{}'s 2,000 steps (14.3 and 17.0) it reaches 23.01 and 23.34 validation R@20 (interpolated), 2.56 and
2.36 below \method{} (25.57 and 25.70), and it needs 23.4 and 20.4 GPU-hours to reach Rank-GRPO's step-2,000 score (24.19
and 23.86), against 7.5 and 8.9 for \method{} and 14.0 and 13.4 for Rank-GRPO. Its selected checkpoints (steps 2,000 and
1,750) score 21.59 and 21.35 on test: $+0.63$ $[+0.21,+1.07]$ and $+0.29$ $[-0.13,+0.69]$ relative to Rank-GRPO, and
$-0.94$ $[-1.34,-0.54]$ and $-1.10$ $[-1.51,-0.68]$ relative to \method{}, per row (per request: $+1.03$ and $-0.15$;
$-1.26$ and $-1.50$). GPU-hours here come from each run's logged
training wall-clock (seed 42: 2,000 steps at its measured 4-GPU step time, 13.1\,s), so they can differ slightly from
Table~\ref{tab:compute}.

\paragraph{Dynamic sampling.} Following DAPO \citep{yu2025dapo}, this Rank-GRPO variant (2B, seed 3407, with the
length control) generates eight lists for three times as many requests as it updates on and keeps 48 groups, groups
with signal first: 45\% of generated groups have a reward that varies at some rank position, and 99\% of the kept
ones do. Among kept groups, 0.7\% of positions have equal rewards for all eight lists, against 56\% in Rank-GRPO.
Validation R@20 every 250 steps is 20.90, 22.20, 22.71, 23.32, 23.69, 23.89, 23.76 and 24.49 (Rank-GRPO 24.19 and
\method{} 25.57 at step 2,000; \method{} passes 24.49 at step 1,250). On test it scores 21.19 against 20.97 for
Rank-GRPO ($+0.23$ $[-0.21,+0.65]$ per row, $+0.76$ $[+0.05,+1.49]$ per request); \method{} leads it by $+1.34$
$[+0.92,+1.74]$ per row and $+1.54$ $[+0.79,+2.27]$ per request. A step takes 16.3\,s against 5.65\,s for Rank-GRPO,
about 36 GPU-hours for 2,000 steps. At 4B (seed 3407, same recipe) \method{} leads it by $+1.31$ $[+0.91,+1.68]$ per row.

\paragraph{More lists per request.} Rank-GRPO with $G=16$ lists per request and 24 requests per step keeps the 384 lists
per step of $G=8$ with 48 requests, so a step costs the same (about 14 GPU-hours for 2,000 steps at 2B; 2B and 4B, seed
3407, otherwise as Table~\ref{tab:main}, selected on validation within 2,000 steps). It scores 20.72 at 2B and 23.04 at 4B
on test: $-0.25$ $[-0.68,+0.20]$ and $-0.52$ $[-0.97,-0.07]$ against Rank-GRPO with $G=8$, and $1.81$ and $2.06$ points
below \method{} (Table~\ref{tab:checks}). With $G=16$ for both arms, \method{} leads by $+1.43$ $[+0.98,+1.89]$ and
$+1.65$ $[+1.21,+2.11]$.

\paragraph{Exact Rao--Blackwellization.} Averaging Rank-GRPO's own group-scaled advantages over the responders (row
sampling, 2B) gains $+0.94$, $+0.91$ and $+0.95$ points over Rank-GRPO with seeds 3407, 42 and 1234 (each CI above
zero; $+0.93$ $[+0.83,+1.04]$ over the three), and \method{} is $+0.52$ $[+0.39,+0.64]$ above it over the three
seeds ($+0.85$ $[+0.59,+1.11]$ per request; seed 3407: $+0.63$, Table~\ref{tab:checks}). At 4B (seed 3407) the same
estimator is $+0.31$ $[-0.07,+0.70]$ below \method{} per row. With seed 3407 the remaining gap splits by metric: drawing requests uniformly instead of
by row leaves the exact average's per-row R@20 unchanged ($-0.18$ $[-0.66,+0.32]$) and raises its per-request R@20
($+1.18$ $[+0.34,+2.04]$), while the union reward adds $+0.80$ $[+0.33,+1.27]$ per row and nothing per request
($-0.09$ $[-0.90,+0.73]$); each metric favours the sampling that matches its weighting.

\paragraph{Update size and KL.} Before clipping, \method{}'s gradient norm averages 0.69, 0.64 and 0.52 at 2B, 4B and
9B against Rank-GRPO's 0.35, 0.29 and 0.24 (seed 3407, every 10th step), consistent with its larger number of groups
with signal; its KL to the SFT policy averages 1.3 to 1.5 times Rank-GRPO's over training, but where it reaches
Rank-GRPO's step-2,000 validation score its KL is only slightly higher than Rank-GRPO's at step 2,000
(Appendix~\ref{app:robust}). Larger steps alone do not reproduce the speed-up: Rank-GRPO with twice or three times the
learning rate never reaches its own step-2,000 score (Appendix~\ref{app:robust}). At 4B five logged windows of \method{} around step 1,800 have KL
above 1 (at most 263), after the step at which it reaches that score.

\FloatBarrier

\subsection{Where the gain comes from}\label{app:mech}

\begin{table}[htbp]
\centering\small
\setlength{\tabcolsep}{3pt}
\begin{adjustbox}{max width=\columnwidth}
\begin{tabular}{@{}l rrrr r@{}}
\toprule
responders & 1 & 2--4 & 5--19 & $\ge$20 & all \\
\midrule
\method{} $-$ Rank-GRPO, 2B & \up{+3.70} & \up{+1.45} & \up{+1.08} & \up{+1.88} & \up{+1.57} \\
\quad 4B & \up{+4.06} & \up{+1.82} & \up{+0.86} & \up{+1.92} & \up{+1.54} \\
\quad 9B & \up{+3.46} & \up{+2.26} & \up{+1.04} & \up{+1.31} & \up{+1.46} \\
\midrule
sampling step (2B) & \up{+2.74} & \ns{+0.45} & \ns{$-$0.23} & \ns{+0.02} & \ns{+0.16} \\
reward step (2B) & \ns{+0.96} & \ns{+1.00} & \up{+1.31} & \up{+1.86} & \up{+1.40} \\
\bottomrule
\end{tabular}
\end{adjustbox}
\tabnote{Bottom: the 2B gain split into the sampling step (row 8 $-$ row 1 of Table~\ref{tab:ablation}) and the reward step (row 7 $-$ row 8).}
\caption{Per-row test R@20 gain (points) by number of responders.}
\label{tab:buckets}
\end{table}

\paragraph{Dose-response at a fixed scale and under per-group scaling.} Under batch scaling every arm divides by its own
batch standard deviation, which is smaller for averaged rewards (Appendix~\ref{app:proofs}). With one fixed advantage
scale for every arm instead (0.130, the per-position batch standard deviation of the per-responder hit at the SFT
policy; 2B, seeds 3407 and 42, otherwise as Figure~\ref{fig:noise}b), mean validation R@20 for $m=1$, 2, 4 and all
responders is 23.52, 24.00, 24.67 and 24.86: the slope in the share of noise removed is $+1.38$ $[+0.97,+1.78]$, and
$+3.34$ $[+1.76,+4.84]$ on single-responder requests, as under batch scaling. On test the four arms score $-0.45$,
$+0.38$, $+0.66$ and $+0.94$ (seed 3407) and $-0.36$, $+0.40$, $+0.70$ and $+0.96$ (seed 42) against Rank-GRPO per row:
the fixed scale itself costs the per-responder reward about 0.4, and at the same scale the average over all responders
leads it by $+1.39$ $[+0.95,+1.84]$ and $+1.32$ $[+0.84,+1.78]$ per row ($+1.44$ and $+1.21$ per request, CIs above zero).
With per-group scaling (seeds 3407, 42 and 1234), mean validation R@20 is 24.00, 24.26, 24.65 and 23.68. The slope is
$+0.03$ $[-0.36,+0.45]$ ($-0.21$ $[-0.62,+0.22]$ at step 2,000): validation rises up to four responders and falls with
all of them, which lose 4.22 points on single-responder requests (slope there $-3.18$ $[-4.96,-1.28]$). On test, the
average over four responders is ahead of Rank-GRPO per row with every seed ($+0.29$ $[-0.12,+0.70]$, $+0.63$
$[+0.18,+1.09]$, $+0.68$ $[+0.25,+1.11]$); the average over all responders is level per row ($-0.22$ to $-0.04$) and
lower per request ($-1.80$ to $-2.45$, every CI below zero). Per-group scaling, not the averaging, removes the trend.
It does so through the item advantages, not the length shaping, whose scale grows with $m$ (19.4 on average with all
responders, 2.47 with the per-responder reward): holding it at 2.47 for the all-responder arm leaves its test R@20 unchanged ($+0.05$
$[-0.35,+0.48]$ per row; seed 3407). A likely reason is that averaging puts the reward on a finer grid, which per-group scaling inflates
(Proposition~\ref{prop:norm}), and sampling requests in proportion to their responders draws the many-responder
requests where this is largest; with uniform request sampling the averaged reward under per-group scaling is level
with \method{} (Table~\ref{tab:ablation}, row 9).

\paragraph{Sequence-level GRPO.} To test the reward outside Rank-GRPO's rank-level credit, we train ordinary GRPO in
which each list gets one reward, its recall@20 against one responder (row sampling) or against the union (request
sampling), with the group-scaled advantage on every token and the same length control (2B, seed 3407, 2,000 steps).
The union gains $+1.24$ $[+0.75,+1.70]$ per row and $+1.92$ $[+0.98,+2.85]$ per request over the per-responder
reward (test 19.74 against 18.50); 70\% of the per-responder groups and 40\% of the union groups have no signal.
Both stay below rank-level training (Rank-GRPO 20.97, \method{} 22.53).

\paragraph{Rewarding consensus.} Weighting each matched item by $1+0.25\cdot\1\{c_i\ge2\}$ instead of the union's
$\1\{c_i\ge1\}$ (2B, seed 3407, otherwise \method{}) scores 22.35 on test: $-0.18$ $[-0.49,+0.13]$ against \method{}
per row ($-0.19$ $[-0.72,+0.31]$ per request) and $+1.39$ $[+0.92,+1.85]$ against Rank-GRPO. Recall of items named by
two or more responders does not rise (45.48 against 46.71, $-1.23$ $[-3.09,+0.54]$), and tail items fall slightly
($-0.36$ $[-0.70,-0.02]$).

\paragraph{Single answers.} We split each test request's union items into those named by exactly one responder
($U_1$; 92.5\% of test union items on Reddit-v2, 95.1\% on Reddit-Books and 95.4\% on Reddit-Music)
in the top 20. \method{} minus Rank-GRPO on $U_1$: $+2.13$ $[+1.30,+2.95]$, $+2.36$ $[+1.59,+3.18]$ and $+2.27$
$[+1.49,+3.06]$ at 2B, 4B and 9B; $+0.81$ $[+0.01,+1.65]$ on Reddit-Books 2B ($+0.85$ $[+0.11,+1.60]$ against the
batch-scaled per-responder arm); $+1.31$ $[+0.74,+1.92]$ on Reddit-Music. On $U_2$: $+3.13$, $+2.97$ and $+3.61$ at 2B
to 9B (CIs above zero), $+1.18$ $[-2.31,+4.50]$ on Reddit-Books and $-0.62$ $[-3.23,+1.93]$ on Reddit-Music. Most test
requests have no item named twice (1,607 of 2,050 on Reddit-v2); on them the per-row R@20 gain is $+1.27$, $+1.81$ and
$+1.75$ at 2B, 4B and 9B (CIs above zero), $+0.62$ $[-0.19,+1.39]$ on Reddit-Books and $+1.20$ $[+0.71,+1.68]$ on
Reddit-Music, against $+1.76$, $+1.37$, $+1.27$, $+1.12$ and $+0.94$ on the other requests.

\FloatBarrier

\subsection{Judges and popularity}\label{app:judge}

\paragraph{Zero-shot, judge and popularity.} GPT-5.6 zero-shot, the strongest system without RL, is level with 2B
\method{} on Reddit-v2 and behind it at 4B (Table~\ref{tab:checks}), at 9B ($+2.88$ $[+1.89,+3.85]$) and on Reddit-Books
($+6.50$ $[+5.30,+7.80]$), although it may have seen these threads. The blind GPT-5.6 judge never sees the responders'
items; at 9B it scored 299 requests in both orders (order consistency 0.84), and on the 4B plateau pair 298 (0.866).
The second judge, Gemini 3.1 Pro (gemini-3.1-pro-preview), uses the same prompt and requests: order consistency 0.78
(9B) and 0.85 (4B); agreement with GPT-5.6, Cohen's $\kappa=0.53$ and 0.51 per call. For precision@20 each call shows
one list and the request and asks for a fit / no-fit verdict per title (500 test requests drawn with seed 42; at most
40 of 20,000 titles per judge and pair unparsed, counted as no-fit). Gemini is stricter (57.6\% of titles fit against
67.0\% for GPT-5.6; item-level $\kappa=0.71$). By popularity (cut-offs of Table~\ref{tab:popstrata}), \method{} is more
precise on head items at 4B (Gemini $+3.8$ $[+1.5,+6.0]$, GPT-5.6 $+2.2$ $[+0.8,+3.7]$) and, for GPT-5.6, at 9B
($+2.0$ $[+0.3,+3.9]$); on tail and torso items every interval includes zero.
On test, \method{}'s gain is largest on torso and head items but stays positive on the tail at every Reddit-v2 size;
on Reddit-Books it sits on the torso (Tables~\ref{tab:popstrata} and~\ref{tab:pop}).

\begin{table}[htbp]
\centering\small
\setlength{\tabcolsep}{3.5pt}
\begin{adjustbox}{max width=\columnwidth}
\begin{tabular}{@{}l cc cc@{}}
\toprule
 & tail & head & coverage & mean pop. \\
\midrule
SFT & 9.60 & 29.96 & 5,158 & 131 \\
Rank-GRPO (1) & 11.76 & 44.24 & 3,566 & 185 \\
per-responder, row (2) & 10.93 & 44.93 & 3,249 & 189 \\
per-responder, request (3) & 11.92 & 41.57 & 4,073 & 166 \\
mean, row (4) & 11.37 & 46.77 & 3,450 & 190 \\
mean, request (5) & 12.17 & 45.40 & 3,756 & 180 \\
bounded sum, batch (6) & 12.02 & 46.70 & 3,282 & 189 \\
\method{} (7) & 12.04 & 46.65 & 3,516 & 182 \\
\bottomrule
\end{tabular}
\end{adjustbox}
\tabnote{Rows of Table~\ref{tab:ablation}. Tail / head: recall@20 (\%) of items at or below the median / in the top 20\% of training popularity; coverage: distinct items recommended; mean pop.: mean training frequency of recommended items.}
\caption{Popularity and exposure (2B, Reddit-v2 validation).}
\label{tab:pop}
\end{table}

\begin{table}[htbp]
\centering\small
\setlength{\tabcolsep}{4pt}
\begin{tabular}{@{}l rrr@{}}
\toprule
 & tail & torso & head \\
\midrule
2B & \up{+0.46} & \up{+2.09} & \up{+3.36} \\
4B & \up{+0.89} & \up{+1.24} & \up{+2.66} \\
9B & \up{+0.61} & \up{+2.33} & \up{+1.40} \\
Reddit-Books 2B & \ns{+0.24} & \up{+2.02} & \ns{$-$0.09} \\
Reddit-Books 4B & \ns{+0.62} & \up{+2.07} & \ns{$-$0.11} \\
\bottomrule
\end{tabular}
\tabnote{Over Rank-GRPO (Reddit-v2) or the batch-scaled per-responder arm (Reddit-Books). Tail, torso, head: training frequency at or below the median, up to the 80th percentile, above it (quantiles over the test ground truth).}
\caption{Per-row test R@20 gain (points) of \method{} by item popularity.}
\label{tab:popstrata}
\end{table}

\FloatBarrier

\subsection{Full results}\label{app:full}

Tables~\ref{tab:main-full}, \ref{tab:main-full-llama} and~\ref{tab:main-full-b} give every metric behind Table~\ref{tab:main};
Tables~\ref{tab:ablation-val}, \ref{tab:ablation-full} and~\ref{tab:ablation-diffs} expand Table~\ref{tab:ablation}.

\begin{table*}[tp]
\centering\small
\setlength{\tabcolsep}{5pt}
\renewcommand{\arraystretch}{1.08}
\begin{tabular}{@{}ll cc cc c@{}}
\toprule
 & & \multicolumn{2}{c}{per row} & \multicolumn{2}{c}{per request} & \\
\cmidrule(lr){3-4}\cmidrule(lr){5-6}
\textbf{backbone} & \textbf{method} & R@20 & N@20 & R@20 & N@20 & exact-20 \\
\midrule
\multicolumn{7}{@{}l}{\grp{Reddit-v2, Qwen3.5} (10,972 rows, 2,050 requests)} \\
-- & Popularity (train) & \rf{2.75} & \rf{1.29} & \rf{2.68} & \rf{1.23} & -- \\
-- & kNN prior ($k{=}200$) & \rf{15.94} & \rf{7.76} & \rf{15.99} & \rf{7.90} & -- \\
-- & GPT-5.6 zero-shot & \rf{22.63} & \rf{11.61} & \rf{23.24} & \rf{12.37} & -- \\
-- & Qwen3.5-27B zero-shot & \rf{15.22} & \rf{7.58} & \rf{15.07} & \rf{7.72} & -- \\
-- & GPT-4o zero-shot$^\dagger$ & \rf{21.47} & \rf{11.97} & -- & -- & -- \\
\midrule
Qwen3.5-2B & SFT & 15.40 & 8.01 & 15.32 & 8.15 & 99.9 \\
 & SFT on pooled positives & 15.94 & 8.09 & 15.35 & 8.68 & 100.0 \\
 & DPO, pooled positives & 17.08 & 8.82 & 16.06 & 8.72 & 99.9 \\
\baserow  & Rank-GRPO & 20.97 & 10.87 & 20.94 & 11.25 & 100.0 \\
 & Rank-GRPO, $G{=}16$ & 20.72 & 10.64 & 20.80 & 11.04 & 99.7 \\
 & per-responder, batch & 20.43 & 10.63 & 19.92 & 10.89 & 99.9 \\
 & bounded sum, batch & \second{21.84} & \second{11.32} & \second{22.38} & \second{12.15} & 99.9 \\
\oursrow  & \method{} (ours) & \best{22.53} & \best{11.62} & \best{23.24} & \best{12.78} & 99.9 \\
 & \quad$\Delta$ vs.\ Rank-GRPO & \dstack{\up{+1.57}}{[1.10,2.06]} & \dstack{\up{+0.76}}{[0.53,0.99]} & \dstack{\up{+2.29}}{[1.43,3.17]} & \dstack{\up{+1.53}}{[1.04,2.04]} & \\
\midrule
Qwen3.5-4B & SFT & 16.53 & 8.92 & 16.19 & 8.86 & 52.3 \\
 & SFT on pooled positives & 17.25 & 8.83 & 17.59 & 9.91 & 98.4 \\
 & DPO, pooled positives & 18.17 & 9.55 & 18.19 & 9.90 & 96.2 \\
\baserow  & Rank-GRPO & 23.56 & 12.30 & 24.82 & 13.51 & 100.0 \\
 & Rank-GRPO, $G{=}16$ & 23.04 & 12.03 & 24.51 & 13.21 & 99.8 \\
 & per-responder, batch & 23.16 & 12.13 & 24.56 & 13.63 & 99.9 \\
 & bounded sum, batch & \second{25.02} & \second{13.08} & \second{27.07} & \second{15.09} & 99.6 \\
\oursrow  & \method{} (ours) & \best{25.10} & \best{13.12} & \best{27.40} & \best{15.14} & 99.8 \\
 & \quad$\Delta$ vs.\ Rank-GRPO & \dstack{\up{+1.54}}{[1.10,2.00]} & \dstack{\up{+0.82}}{[0.61,1.04]} & \dstack{\up{+2.58}}{[1.82,3.41]} & \dstack{\up{+1.64}}{[1.17,2.12]} & \\
\midrule
Qwen3.5-9B & SFT & 17.23 & 9.31 & 16.89 & 9.44 & 99.6 \\
\baserow  & Rank-GRPO & 24.05 & 12.72 & 25.95 & 14.66 & 99.8 \\
 & per-responder, batch & 23.95 & 12.62 & 26.06 & 14.61 & 100.0 \\
 & bounded sum, batch & \best{25.52} & \second{13.51} & \best{28.53} & \second{16.07} & 99.9 \\
\oursrow  & \method{} (ours) & \second{25.51} & \best{13.56} & \second{28.35} & \best{16.08} & 99.9 \\
 & \quad$\Delta$ vs.\ Rank-GRPO & \dstack{\up{+1.46}}{[1.02,1.93]} & \dstack{\up{+0.84}}{[0.62,1.07]} & \dstack{\up{+2.40}}{[1.64,3.22]} & \dstack{\up{+1.42}}{[0.94,1.90]} & \\
\midrule
Qwen3.5-27B & SFT$^\ddagger$ & 17.90 & 9.77 & 17.11 & 9.94 & 7.3 \\
\baserow  & Rank-GRPO & 25.17 & 13.24 & 28.38 & 15.62 & 99.9 \\
 & per-responder, batch & 23.96 & 12.51 & 25.56 & 13.61 & 99.9 \\
 & bounded sum, batch & \second{26.32} & \second{13.81} & \best{30.04} & \second{16.67} & 99.9 \\
\oursrow  & \method{} (ours) & \best{26.62} & \best{14.10} & \second{29.84} & \best{16.83} & 99.6 \\
 & \quad$\Delta$ vs.\ Rank-GRPO & \dstack{\up{+1.45}}{[0.98,1.94]} & \dstack{\up{+0.86}}{[0.64,1.09]} & \dstack{\up{+1.46}}{[0.68,2.25]} & \dstack{\up{+1.21}}{[0.78,1.66]} & \\
\midrule
Qwen3.5-35B-A3B & SFT & 17.34 & 9.47 & 17.23 & 9.78 & 99.5 \\
\baserow  & Rank-GRPO & 24.74 & 13.10 & \second{27.81} & 15.78 & 99.9 \\
 & per-responder, batch & 24.58 & 13.01 & 27.44 & 15.47 & 99.9 \\
 & bounded sum, batch & \second{26.16} & \best{14.02} & \best{29.76} & \best{17.24} & 100.0 \\
\oursrow  & \method{} (ours) & \best{26.20} & \second{13.94} & \best{29.76} & \second{17.13} & 99.9 \\
 & \quad$\Delta$ vs.\ Rank-GRPO & \dstack{\up{+1.47}}{[1.03,1.90]} & \dstack{\up{+0.85}}{[0.64,1.04]} & \dstack{\up{+1.95}}{[1.14,2.78]} & \dstack{\up{+1.35}}{[0.91,1.80]} & \\
\bottomrule
\end{tabular}

\tabnote{R@20 and N@20 (\%) per row and per request; exact-20 lists (\%). $\Delta$: \method{} minus the named arm; bounded sum, batch: \method{}'s reward with batch scaling. $^\dagger$Reported by \citet{zhu2026rankgrpo} on the same split. $^\ddagger$The 27B SFT checkpoint writes 19-line lists (10,141 of 10,972). The batch-scaled 27B and 35B-A3B rows start from a retrained SFT checkpoint (Appendix~\ref{app:repro}).}
\caption{Full test results on Reddit-v2, Qwen3.5.}
\label{tab:main-full}
\end{table*}

\begin{table*}[tp]
\centering\small
\setlength{\tabcolsep}{5pt}
\renewcommand{\arraystretch}{1.08}
\begin{tabular}{@{}ll cc cc c@{}}
\toprule
 & & \multicolumn{2}{c}{per row} & \multicolumn{2}{c}{per request} & \\
\cmidrule(lr){3-4}\cmidrule(lr){5-6}
\textbf{backbone} & \textbf{method} & R@20 & N@20 & R@20 & N@20 & exact-20 \\
\midrule
\multicolumn{7}{@{}l}{\grp{Reddit-v2, Llama} (10,972 rows, 2,050 requests)} \\
Llama-3.2-3B & Rank-GRPO, released (15.8k steps) & \rf{23.65} & \rf{12.78} & \rf{25.33} & \rf{14.50} & 47.7 \\
\midrule
Llama-3.2-3B & SFT & 15.38 & 8.37 & 14.66 & 8.28 & 99.8 \\
\baserow  & Rank-GRPO & 21.93 & 11.51 & 22.97 & 12.59 & 99.0 \\
 & per-responder, batch & 22.28 & \second{11.69} & 23.89 & 12.89 & 99.0 \\
 & bounded sum, batch & \second{23.43} & \best{12.45} & \second{25.41} & \best{14.38} & 100.0 \\
\oursrow  & \method{} (ours) & \best{23.56} & \best{12.45} & \best{25.62} & \second{14.26} & 85.2 \\
 & \quad$\Delta$ vs.\ per-responder, batch & \dstack{\up{+1.28}}{[0.86,1.71]} & \dstack{\up{+0.77}}{[0.57,0.97]} & \dstack{\up{+1.73}}{[0.92,2.52]} & \dstack{\up{+1.38}}{[0.92,1.85]} & \\
 & \quad$\Delta$ vs.\ Rank-GRPO & \dstack{\up{+1.62}}{[1.18,2.08]} & \dstack{\up{+0.95}}{[0.75,1.16]} & \dstack{\up{+2.65}}{[1.86,3.50]} & \dstack{\up{+1.67}}{[1.18,2.18]} & \\
\midrule
Llama-3.1-8B & SFT & 16.84 & 8.99 & 16.38 & 8.95 & 99.9 \\
\baserow  & Rank-GRPO & 24.22 & 12.73 & 26.51 & 14.82 & 98.8 \\
 & per-responder, batch & 23.89 & 12.57 & 26.12 & 14.53 & 97.0 \\
 & bounded sum, batch & \best{25.67} & \best{13.50} & \best{28.11} & \best{15.74} & 92.8 \\
\oursrow  & \method{} (ours) & \second{25.22} & \second{13.25} & \second{27.87} & \second{15.55} & 94.9 \\
 & \quad$\Delta$ vs.\ Rank-GRPO & \dstack{\up{+1.00}}{[0.55,1.44]} & \dstack{\up{+0.52}}{[0.32,0.74]} & \dstack{\up{+1.35}}{[0.55,2.16]} & \dstack{\up{+0.73}}{[0.35,1.14]} & \\
\bottomrule
\end{tabular}

\tabnote{As Table~\ref{tab:main-full}.}
\caption{Full test results on Reddit-v2, Llama.}
\label{tab:main-full-llama}
\end{table*}

\begin{table*}[tp]
\centering\small
\setlength{\tabcolsep}{3.8pt}
\renewcommand{\arraystretch}{1.08}
\begin{tabular}{@{}r lll cccc c@{}}
\toprule
 & \textbf{aggregation} & \textbf{scale} & \textbf{sampling} & R@20 & req.\ R@20 & R@20, $n{=}1$ & R@20, $n{\ge}5$ & exact-20 \\
\midrule
0 & \multicolumn{3}{@{}l}{SFT (no RL)} & 17.09 & 16.16 & 13.75 & 17.30 & 99.9 \\
\baserow 1 & per-responder & group & row (Rank-GRPO) & 24.19 & 24.55 & 23.75 & 24.16 & 100.0 \\
2 & per-responder & batch & row & 23.96 & 23.83 & 22.40 & 24.00 & 100.0 \\
3 & per-responder & batch & request & 23.95 & 25.83 & 27.31 & 23.54 & 99.9 \\
4 & mean & batch & row & 25.15 & 25.89 & 25.44 & 24.99 & 100.0 \\
5 & mean & batch & request & 24.94 & 26.90 & \second{28.46} & 24.49 & 99.8 \\
6 & bounded sum & batch & request & 25.31 & 26.84 & 27.34 & 24.93 & 99.7 \\
\oursrow 7 & bounded sum & group & request (\method) & \second{25.57} & \second{27.02} & 27.66 & \best{25.32} & 99.9 \\
8 & per-responder & group & request & 23.95 & 25.56 & 26.68 & 23.61 & 99.6 \\
9 & mean & group & request & \best{25.62} & \best{27.37} & \best{28.63} & \second{25.27} & 100.0 \\
\midrule
\multicolumn{4}{@{}l}{\method{} vs.\ Rank-GRPO (7$-$1)} & \dstack{\up{+1.38}}{[0.87,1.89]} & \dstack{\up{+2.46}}{[1.52,3.38]} & \dstack{\up{+3.90}}{[1.91,5.93]} & \dstack{\up{+1.16}}{[0.58,1.75]} &  \\
\multicolumn{4}{@{}l}{scale, bounded sum (7$-$6)} & \dstack{\ns{+0.26}}{[$-$0.18,0.64]} & \dstack{\ns{+0.18}}{[$-$0.62,0.95]} & \dstack{\ns{+0.32}}{[$-$1.44,2.03]} & \dstack{\ns{+0.39}}{[$-$0.08,0.88]} &  \\
\multicolumn{4}{@{}l}{bounded sum, batch scale vs.\ Rank-GRPO (6$-$1)} & \dstack{\up{+1.12}}{[0.64,1.62]} & \dstack{\up{+2.29}}{[1.42,3.18]} & \dstack{\up{+3.58}}{[1.68,5.71]} & \dstack{\up{+0.77}}{[0.21,1.32]} &  \\
\multicolumn{4}{@{}l}{reward, row sampling (4$-$2)} & \dstack{\up{+1.19}}{[0.69,1.75]} & \dstack{\up{+2.07}}{[1.00,3.13]} & \dstack{\up{+3.04}}{[0.67,5.31]} & \dstack{\up{+0.99}}{[0.42,1.56]} &  \\
\multicolumn{4}{@{}l}{reward, request sampling (5$-$3)} & \dstack{\up{+0.99}}{[0.49,1.50]} & \dstack{\up{+1.06}}{[0.24,1.93]} & \dstack{\ns{+1.15}}{[$-$0.79,2.94]} & \dstack{\up{+0.94}}{[0.38,1.50]} &  \\
\multicolumn{4}{@{}l}{reward, request sampling, bounded sum (6$-$3)} & \dstack{\up{+1.36}}{[0.82,1.93]} & \dstack{\up{+1.01}}{[0.14,1.90]} & \dstack{\ns{+0.03}}{[$-$1.95,2.01]} & \dstack{\up{+1.38}}{[0.73,2.01]} &  \\
\multicolumn{4}{@{}l}{sampling, mean (5$-$4)} & \dstack{\ns{$-$0.21}}{[$-$0.74,0.34]} & \dstack{\up{+1.00}}{[0.05,1.99]} & \dstack{\up{+3.02}}{[0.90,5.29]} & \dstack{\ns{$-$0.50}}{[$-$1.09,0.13]} &  \\
\multicolumn{4}{@{}l}{sampling, per-responder (3$-$2)} & \dstack{\ns{0.00}}{[$-$0.64,0.63]} & \dstack{\up{+2.01}}{[0.86,3.13]} & \dstack{\up{+4.91}}{[2.43,7.50]} & \dstack{\ns{$-$0.46}}{[$-$1.18,0.25]} &  \\
\multicolumn{4}{@{}l}{bounded sum vs.\ mean, request sampling (6$-$5)} & \dstack{\ns{+0.37}}{[$-$0.11,0.84]} & \dstack{\ns{$-$0.06}}{[$-$0.94,0.90]} & \dstack{\ns{$-$1.12}}{[$-$3.14,0.89]} & \dstack{\ns{+0.44}}{[$-$0.07,0.95]} &  \\
\multicolumn{4}{@{}l}{two estimators of the row objective (6$-$4)} & \dstack{\ns{+0.16}}{[$-$0.32,0.64]} & \dstack{\up{+0.94}}{[0.11,1.89]} & \dstack{\ns{+1.90}}{[$-$0.14,3.91]} & \dstack{\ns{$-$0.07}}{[$-$0.58,0.45]} &  \\
\multicolumn{4}{@{}l}{scale, per-responder (2$-$1)} & \dstack{\ns{$-$0.23}}{[$-$0.70,0.24]} & \dstack{\ns{$-$0.73}}{[$-$1.53,0.14]} & \dstack{\ns{$-$1.36}}{[$-$3.34,0.53]} & \dstack{\ns{$-$0.16}}{[$-$0.68,0.35]} &  \\
\multicolumn{4}{@{}l}{reward, group scale (7$-$8)} & \dstack{\up{+1.62}}{[1.08,2.14]} & \dstack{\up{+1.46}}{[0.62,2.23]} & \dstack{\ns{+0.98}}{[$-$0.65,2.69]} & \dstack{\up{+1.71}}{[1.13,2.34]} &  \\
\multicolumn{4}{@{}l}{sampling, group scale (8$-$1)} & \dstack{\ns{$-$0.24}}{[$-$0.83,0.36]} & \dstack{\up{+1.00}}{[0.02,2.00]} & \dstack{\up{+2.92}}{[0.86,5.07]} & \dstack{\ns{$-$0.55}}{[$-$1.22,0.09]} &  \\
\multicolumn{4}{@{}l}{bounded sum vs.\ mean, group scale (7$-$9)} & \dstack{\ns{$-$0.05}}{[$-$0.45,0.32]} & \dstack{\ns{$-$0.36}}{[$-$1.04,0.29]} & \dstack{\ns{$-$0.97}}{[$-$2.48,0.53]} & \dstack{\ns{+0.05}}{[$-$0.37,0.49]} &  \\
\multicolumn{4}{@{}l}{scale, mean (9$-$5)} & \dstack{\up{+0.68}}{[0.24,1.11]} & \dstack{\ns{+0.48}}{[$-$0.35,1.36]} & \dstack{\ns{+0.17}}{[$-$1.73,2.12]} & \dstack{\up{+0.78}}{[0.30,1.28]} &  \\
\bottomrule
\end{tabular}

\tabnote{2B, Reddit-v2 validation (1,752 requests); each arm at its best validated step; exact-20 in \%.}
\caption{Validation version of Table~\ref{tab:ablation}.}
\label{tab:ablation-val}
\end{table*}

\begin{table*}[tp]
\centering\small
\setlength{\tabcolsep}{5pt}
\renewcommand{\arraystretch}{1.08}
\begin{tabular}{@{}ll cc cc c@{}}
\toprule
 & & \multicolumn{2}{c}{per row} & \multicolumn{2}{c}{per request} & \\
\cmidrule(lr){3-4}\cmidrule(lr){5-6}
\textbf{backbone} & \textbf{method} & R@20 & N@20 & R@20 & N@20 & exact-20 \\
\midrule
\multicolumn{7}{@{}l}{\grp{Reddit-Books} (6,333 rows, 1,150 requests)} \\
-- & Popularity (train) & \rf{5.77} & \rf{2.60} & \rf{5.32} & \rf{2.39} & -- \\
-- & kNN prior ($k{=}200$) & \rf{21.60} & \rf{11.00} & \rf{21.58} & \rf{10.94} & -- \\
-- & GPT-5.6 zero-shot & \rf{23.66} & \rf{12.43} & \rf{25.41} & \rf{14.02} & -- \\
-- & Qwen3.5-27B zero-shot & \rf{13.31} & \rf{7.88} & \rf{13.36} & \rf{8.33} & -- \\
\midrule
Qwen3.5-2B & SFT & 20.67 & 10.76 & 21.34 & 11.29 & 90.5 \\
\baserow  & Rank-GRPO & 29.34 & 15.96 & 30.43 & 17.16 & 99.8 \\
 & per-responder, batch & 29.35 & 16.04 & 30.42 & 17.34 & 99.9 \\
 & bounded sum, batch & \best{30.59} & \best{16.78} & \best{31.34} & \best{17.84} & 97.8 \\
\oursrow  & \method{} (ours) & \second{30.16} & \second{16.69} & \second{31.12} & \second{17.79} & 99.4 \\
 & \quad$\Delta$ vs.\ per-responder, batch & \dstack{\up{+0.81}}{[0.19,1.46]} & \dstack{\up{+0.65}}{[0.36,0.96]} & \dstack{\ns{+0.70}}{[$-$0.15,1.52]} & \dstack{\up{+0.46}}{[0.02,0.89]} & \\
 & \quad$\Delta$ vs.\ Rank-GRPO & \dstack{\up{+0.82}}{[0.12,1.50]} & \dstack{\up{+0.73}}{[0.42,1.05]} & \dstack{\ns{+0.69}}{[$-$0.18,1.56]} & \dstack{\up{+0.64}}{[0.16,1.14]} & \\
\midrule
Qwen3.5-4B & SFT & 21.73 & 11.42 & 22.53 & 12.12 & 91.4 \\
\baserow  & Rank-GRPO & 30.40 & 16.62 & 31.72 & 17.96 & 99.8 \\
 & per-responder, batch & 30.51 & 16.76 & 31.92 & 18.44 & 99.8 \\
 & bounded sum, batch & \second{31.40} & \second{17.41} & \second{32.73} & \second{18.97} & 96.8 \\
\oursrow  & \method{} (ours) & \best{31.49} & \best{17.44} & \best{33.28} & \best{19.05} & 99.7 \\
 & \quad$\Delta$ vs.\ per-responder, batch & \dstack{\up{+0.98}}{[0.35,1.64]} & \dstack{\up{+0.68}}{[0.35,1.04]} & \dstack{\up{+1.36}}{[0.51,2.27]} & \dstack{\up{+0.61}}{[0.09,1.13]} & \\
 & \quad$\Delta$ vs.\ Rank-GRPO & \dstack{\up{+1.09}}{[0.42,1.78]} & \dstack{\up{+0.82}}{[0.48,1.14]} & \dstack{\up{+1.55}}{[0.67,2.49]} & \dstack{\up{+1.09}}{[0.53,1.68]} & \\
\midrule
\multicolumn{7}{@{}l}{\grp{Reddit-Music} (15,893 rows, 1,931 requests)} \\
-- & Popularity (train) & \rf{7.97} & \rf{3.64} & \rf{6.46} & \rf{2.90} & -- \\
-- & kNN prior ($k{=}200$) & \rf{9.17} & \rf{4.32} & \rf{8.19} & \rf{3.81} & -- \\
-- & GPT-5.6 zero-shot & \rf{7.40} & \rf{3.79} & \rf{7.16} & \rf{3.86} & -- \\
-- & GPT-5.6, year convention stated & \rf{12.13} & \rf{6.27} & \rf{12.43} & \rf{6.70} & -- \\
\midrule
Qwen3.5-2B & SFT & 9.31 & 4.67 & 8.97 & 4.68 & 60.1 \\
\baserow  & Rank-GRPO & 13.40 & 6.59 & 12.77 & 6.56 & 99.9 \\
 & per-responder, batch & 13.71 & 6.72 & 12.97 & 6.48 & 99.9 \\
 & bounded sum, batch & \best{14.50} & \best{7.16} & \second{13.63} & \second{6.87} & 99.1 \\
\oursrow  & \method{} (ours) & \second{14.48} & \second{7.10} & \best{14.05} & \best{6.97} & 98.5 \\
 & \quad$\Delta$ vs.\ per-responder, batch & \dstack{\up{+0.78}}{[0.38,1.17]} & \dstack{\up{+0.38}}{[0.22,0.54]} & \dstack{\up{+1.08}}{[0.49,1.69]} & \dstack{\up{+0.50}}{[0.25,0.75]} & \\
 & \quad$\Delta$ vs.\ Rank-GRPO & \dstack{\up{+1.08}}{[0.66,1.51]} & \dstack{\up{+0.51}}{[0.33,0.69]} & \dstack{\up{+1.28}}{[0.73,1.87]} & \dstack{\up{+0.42}}{[0.15,0.68]} & \\
\bottomrule
\end{tabular}

\tabnote{As Table~\ref{tab:main-full}. GPT-5.6 loses most music hits to the year convention.}
\caption{Full test results on Reddit-Books and Reddit-Music.}
\label{tab:main-full-b}
\end{table*}

\begin{table*}[tp]
\centering\small
\setlength{\tabcolsep}{3.8pt}
\renewcommand{\arraystretch}{1.1}
\begin{tabular}{@{}r lll cccc@{}}
\toprule
 & \textbf{aggregation} & \textbf{scale} & \textbf{sampling} & R@20 & req.\ R@20 & R@20, $n{=}1$ & R@20, $n{\ge}5$ \\
\midrule
0 & \multicolumn{3}{@{}l}{SFT (no RL)} & 15.40 & 15.32 & 14.75 & 15.57 \\
\baserow 1 & per-responder & group & row (Rank-GRPO) & 20.97 & 20.94 & 20.05 & 21.12 \\
2 & per-responder & batch & row & 20.43 & 19.92 & 18.35 & 20.71 \\
3 & per-responder & batch & request & 20.93 & 22.19 & 23.16 & 20.73 \\
4 & mean & batch & row & 21.79 & 22.25 & 22.63 & 21.90 \\
5 & mean & batch & request & \second{21.95} & \best{23.60} & \best{25.06} & 21.63 \\
6 & bounded sum & batch & request & 21.84 & 22.38 & 22.46 & 21.86 \\
\oursrow 7 & bounded sum & group & request (\method) & \best{22.53} & 23.24 & 23.76 & \best{22.49} \\
8 & per-responder & group & request & 21.13 & 22.13 & 22.79 & 20.98 \\
9 & mean & group & request & \best{22.53} & \second{23.41} & \second{24.11} & \second{22.44} \\
\bottomrule
\end{tabular}

\tabnote{Rows of Table~\ref{tab:ablation}.}
\caption{Ablations by number of responders $n$ (2B, Reddit-v2 test, R@20).}
\label{tab:ablation-full}
\end{table*}

\begin{table*}[tp]
\centering\small
\setlength{\tabcolsep}{4pt}
\renewcommand{\arraystretch}{1.08}
\begin{tabular}{@{}r lll cccc@{}}
\toprule
\multicolumn{4}{@{}l}{difference (rows of Table~\ref{tab:ablation})} & R@20 & req.\ R@20 & R@20, $n{=}1$ & R@20, $n{\ge}5$ \\
\midrule
\multicolumn{4}{@{}l}{\method{} vs.\ Rank-GRPO (7$-$1)} & \dstack{\up{+1.57}}{[1.10,2.06]} & \dstack{\up{+2.29}}{[1.43,3.17]} & \dstack{\up{+3.70}}{[1.95,5.53]} & \dstack{\up{+1.37}}{[0.86,1.89]} \\
\multicolumn{4}{@{}l}{scale, bounded sum (7$-$6)} & \dstack{\up{+0.69}}{[0.32,1.06]} & \dstack{\up{+0.86}}{[0.20,1.54]} & \dstack{\ns{+1.29}}{[$-$0.04,2.69]} & \dstack{\up{+0.63}}{[0.21,1.03]} \\
\multicolumn{4}{@{}l}{bounded sum, batch scale vs.\ Rank-GRPO (6$-$1)} & \dstack{\up{+0.87}}{[0.47,1.28]} & \dstack{\up{+1.43}}{[0.66,2.22]} & \dstack{\up{+2.41}}{[0.61,4.09]} & \dstack{\up{+0.74}}{[0.32,1.18]} \\
\multicolumn{4}{@{}l}{reward, row sampling (4$-$2)} & \dstack{\up{+1.35}}{[0.91,1.78]} & \dstack{\up{+2.32}}{[1.46,3.18]} & \dstack{\up{+4.28}}{[2.38,6.25]} & \dstack{\up{+1.19}}{[0.68,1.70]} \\
\multicolumn{4}{@{}l}{reward, request sampling (5$-$3)} & \dstack{\up{+1.02}}{[0.53,1.50]} & \dstack{\up{+1.42}}{[0.63,2.24]} & \dstack{\up{+1.90}}{[0.37,3.42]} & \dstack{\up{+0.90}}{[0.40,1.42]} \\
\multicolumn{4}{@{}l}{reward, request sampling, bounded sum (6$-$3)} & \dstack{\up{+0.91}}{[0.43,1.34]} & \dstack{\ns{+0.19}}{[$-$0.61,0.98]} & \dstack{\ns{$-$0.70}}{[$-$2.21,0.84]} & \dstack{\up{+1.13}}{[0.64,1.62]} \\
\multicolumn{4}{@{}l}{sampling, mean (5$-$4)} & \dstack{\ns{+0.16}}{[$-$0.30,0.59]} & \dstack{\up{+1.36}}{[0.51,2.21]} & \dstack{\up{+2.43}}{[0.56,4.15]} & \dstack{\ns{$-$0.27}}{[$-$0.78,0.24]} \\
\multicolumn{4}{@{}l}{sampling, per-responder (3$-$2)} & \dstack{\up{+0.50}}{[0.01,0.98]} & \dstack{\up{+2.26}}{[1.34,3.17]} & \dstack{\up{+4.81}}{[2.93,6.66]} & \dstack{\ns{+0.02}}{[$-$0.55,0.55]} \\
\multicolumn{4}{@{}l}{bounded sum vs.\ mean, request sampling (6$-$5)} & \dstack{\ns{$-$0.11}}{[$-$0.54,0.30]} & \dstack{\dn{$-$1.23}}{[$-$2.02,$-$0.46]} & \dstack{\dn{$-$2.59}}{[$-$4.20,$-$1.02]} & \dstack{\ns{+0.24}}{[$-$0.22,0.70]} \\
\multicolumn{4}{@{}l}{two estimators of the row objective (6$-$4)} & \dstack{\ns{+0.05}}{[$-$0.36,0.43]} & \dstack{\ns{+0.13}}{[$-$0.62,0.83]} & \dstack{\ns{$-$0.17}}{[$-$1.65,1.32]} & \dstack{\ns{$-$0.03}}{[$-$0.50,0.40]} \\
\multicolumn{4}{@{}l}{scale, per-responder (2$-$1)} & \dstack{\dn{$-$0.53}}{[$-$0.91,$-$0.14]} & \dstack{\dn{$-$1.02}}{[$-$1.71,$-$0.36]} & \dstack{\dn{$-$1.71}}{[$-$3.13,$-$0.40]} & \dstack{\ns{$-$0.41}}{[$-$0.86,0.01]} \\
\multicolumn{4}{@{}l}{\method{} vs.\ mean, request sampling (7$-$5)} & \dstack{\up{+0.58}}{[0.18,1.01]} & \dstack{\ns{$-$0.37}}{[$-$1.11,0.38]} & \dstack{\ns{$-$1.30}}{[$-$2.86,0.19]} & \dstack{\up{+0.86}}{[0.43,1.33]} \\
\multicolumn{4}{@{}l}{reward, group scale (7$-$8)} & \dstack{\up{+1.40}}{[0.93,1.86]} & \dstack{\up{+1.10}}{[0.33,1.88]} & \dstack{\ns{+0.96}}{[$-$0.55,2.53]} & \dstack{\up{+1.51}}{[1.01,2.03]} \\
\multicolumn{4}{@{}l}{sampling, group scale (8$-$1)} & \dstack{\ns{+0.16}}{[$-$0.33,0.66]} & \dstack{\up{+1.19}}{[0.25,2.12]} & \dstack{\up{+2.74}}{[0.94,4.51]} & \dstack{\ns{$-$0.14}}{[$-$0.66,0.39]} \\
\multicolumn{4}{@{}l}{scale, request sampling (8$-$3)} & \dstack{\ns{+0.20}}{[$-$0.22,0.60]} & \dstack{\ns{$-$0.05}}{[$-$0.68,0.59]} & \dstack{\ns{$-$0.37}}{[$-$1.71,0.90]} & \dstack{\ns{+0.25}}{[$-$0.21,0.71]} \\
\multicolumn{4}{@{}l}{bounded sum vs.\ mean, group scale (7$-$9)} & \dstack{\ns{0.00}}{[$-$0.32,0.32]} & \dstack{\ns{$-$0.18}}{[$-$0.75,0.41]} & \dstack{\ns{$-$0.35}}{[$-$1.58,0.87]} & \dstack{\ns{+0.05}}{[$-$0.31,0.41]} \\
\multicolumn{4}{@{}l}{scale, mean (9$-$5)} & \dstack{\up{+0.58}}{[0.19,0.98]} & \dstack{\ns{$-$0.19}}{[$-$0.88,0.50]} & \dstack{\ns{$-$0.95}}{[$-$2.40,0.49]} & \dstack{\up{+0.81}}{[0.38,1.25]} \\
\bottomrule
\end{tabular}

\tabnote{Rows of Table~\ref{tab:ablation}; unadjusted 95\% CIs; red: CI below zero.}
\caption{Paired differences between the 2B ablation arms (test, points).}
\label{tab:ablation-diffs}
\end{table*}

\section{Length control}
\label{app:p0}

\paragraph{Definition.} (a) The overflow ($-0.1$ per token after the 20th item line), exact-length end ($+0.1$) and
early-stop ($-0.1$) terms are divided by the mean per-position scale used for the item advantages (the mean batch
standard deviation over positions, or the mean non-zero group standard deviation under group scaling), instead of
being added to the normalized advantages as fixed constants. (b) Completions are capped at 320 tokens (well-formed
20-item movie lists take about 185--255 tokens; 384 on Reddit-Books); truncated completions stay in the loss, receive
no end bonus and pay the overflow penalty. (c) Every non-blank line takes a rank position, and a line that does not
parse as an item is a zero-reward miss at its position; the evaluation parser skips such a line, so it costs a slot in
both; the end bonus counts lines, so a malformed 20th line still earns it. We check that every arm keeps between 19.9 and 20.1 items per list and at
least 95\% exact-20 lists in every logging window through 2,000 steps, and that Rank-GRPO on Qwen3.5-2B with the
length control stays within $\pm0.3$ validation R@20 of the run without it (24.19 against 24.07, below). On
Reddit-Books and Reddit-Music the end bonus requires 20 parser-valid items, and a list with fewer pays the early-stop
penalty, so that a malformed 20th line cannot earn it.

\paragraph{Effective strength.} The shaping terms are multiplied by the run's mean advantage scale, so their effective
weight is 0.1 times that scale. Averaged over training it is 0.233 for \method{} and 0.247 for Rank-GRPO with Qwen3.5-2B
(each of three seeds), and 0.231 to 0.233 against 0.244 to 0.247 with the two Llama backbones: a ratio of 0.94 in every
pair, with a 10 to 90\% spread below 0.01 within each run.

\paragraph{Matched strength.} To rule out the 6\% gap, we retrained \method{} with Qwen3.5-2B (seed 3407) with the
shaping terms multiplied by 1.06. Their effective weight averaged 0.247 over training, as for Rank-GRPO. The checkpoint selected on validation (step 1,750) reaches 22.14
R@20 on test: $+1.18$ $[+0.73,+1.61]$ above Rank-GRPO ($+2.10$ $[+1.22,+2.94]$ per request; N@20 $+0.64$
$[+0.44,+0.85]$) and $-0.39$ $[-0.75,-0.01]$ below \method{} with the unmatched weight ($-0.19$ $[-0.83,+0.47]$ per
request).

\paragraph{Without the length control.} We also trained \method{} and Rank-GRPO with Qwen3.5-2B (seed 3407) without
the length control. \method{}'s validation R@20 peaks at 25.30 at step 1,750 and drops to 23.53 at step 2,000;
Rank-GRPO's reaches 24.07 at step 2,000. On test, the selected checkpoints score 22.27 and 20.83 R@20: \method{}
leads by $+1.45$ $[+1.01,+1.88]$ per row ($+2.20$ $[+1.39,+3.01]$ per request), against $+1.57$ with the length
control on both sides. Removing the length control changes neither method's test score detectably ($-0.26$
$[-0.62,+0.13]$ for \method{}, $-0.14$ $[-0.55,+0.27]$ for Rank-GRPO, per row).

\paragraph{Check.} On Reddit-v2, every 2B arm kept at least 98.8\% exact-20 lists in every training window and
99.8\% on validation; the 4B arms start from an SFT policy whose training samples (temperature 1) have 67\% exact-20
lists and exceed 95\% from step 30 (99\% from step 200). Every Reddit-Books arm reaches 95\% exact-20 lists by step 150,
and malformed lines stay at or below 0.046\% in every training window. Three Reddit-Books 4B arms (\method{} seed 42,
the per-responder batch arm seed 3407 and Rank-GRPO) have episodes in which most lists end one item early: a window's
exact-20 share falls to 0.34, 0.08 and 0.43, in 6 of 185, 3 of 185 and 11 of 29 logged windows after step 150. Every
arm recovers before its selected checkpoint; selected checkpoints keep 97.7--100\% exact-20 lists on validation and
97.8--99.9\% on test.
On Llama the check fails, and both arms drift. With Llama-3.2-3B, \method{}'s training exact-20 share fell from 0.99 to
0.27--0.67 from about step 450, first with 21-item and then with 19-item lists (at test, 85.2\% exact-20 and 14.8\%
with 19 items), and Rank-GRPO's ended at 0.93 (99.0\% at test). With Llama-3.1-8B both arms drifted during training
(exact-20 about 0.84 by step 600); at test 94.9\% of \method{}'s lists and 98.8\% of Rank-GRPO's have exactly 20 items. We report the Llama
pairs with this failure; a list shorter than 20 items can only lose recall.
The batch-scaled arms of Table~\ref{tab:main} drift further in training: Llama-3.1-8B lists settle at 21 items from
about step 130 (exact-20 near the selected step 0.10 per-responder, 0.41 \method{}), Llama-3.2-3B \method{} lists
grow to about 44 items by step 2,000, and Reddit-Books 4B \method{} lists hold 19 items from about step 490 (log kept
to step 590); the Reddit-Music batch arms stay at 20. At test, which keeps the first 20 parsed items, 92.8 to 100\% of
these arms' lists have 20 items.

\paragraph{Group scaling with the mean.}
On Llama-3.2-3B, group-scaled training with the mean diverged after about 850 steps: KL rose from 0.6 to 47, entropy
from 0.28 to 7.6, and completions grew from 150 to more than 400 tokens. The per-responder run with the same scaling
stayed stable (KL 0.3--0.6). With batch scaling the mean kept improving past step 850 (validation R@20 23.87, 24.24
and 25.37 at steps 750, 1,000 and 1,500), with KL spikes of 0.7--10.6 after step 550; without the length control
its lists later grew past 20 items.

\paragraph{With the length control.} The mean with request sampling and per-group scaling (Table~\ref{tab:ablation},
row 9) was trained on Qwen3.5-2B with three seeds. A run whose KL exceeds five times the same-seed \method{} run's
maximum is evaluated at its best validated checkpoint before that point. Seeds 3407 and 1234 stay stable (KL at most
0.41 and 0.84) and are evaluated at step 2,000; seed 42 has one 10-step window with KL 392.5 at step 1,800 (neighbours
0.40 and 0.375) and is evaluated at step 1,500. On test the three seeds reach 22.53, 22.20 and 22.42
R@20 against \method{}'s 22.53, 22.45 and 22.41: per seed $0.00$ $[-0.32,+0.32]$, $-0.25$ $[-0.63,+0.13]$ and $+0.02$
$[-0.37,+0.39]$ per row ($+0.18$, $-0.38$ and $+0.44$ per request, each CI covering 0), and $-0.08$ $[-0.45,+0.29]$
over the three seeds. With seed 42 at step 2,000 (22.41) the three-seed difference is $-0.01$ $[-0.15,+0.14]$. Against
the same reward with batch scaling (row 5) the seed-3407 run gains $+0.58$ $[+0.19,+0.98]$ per row ($-0.19$
$[-0.88,+0.50]$ per request).

\section{Hyperparameters}
\label{app:hparams}

\begin{table}[htbp]
\centering\small
\setlength{\tabcolsep}{4pt}
\begin{tabular}{@{}>{\raggedright\arraybackslash}p{0.43\columnwidth}>{\raggedright\arraybackslash}p{0.53\columnwidth}@{}}
\toprule
\multicolumn{2}{@{}l}{\emph{RL (all arms)}} \\
learning rate & $10^{-6}$, constant; AdamW $(0.9,0.99)$ \\
KL coefficient & $10^{-3}$ \\
clipping $\epsilon_{\mathrm{low}}/\epsilon_{\mathrm{high}}$ & 0.06 / 0.08 \\
group size $G$ & 8 \\
requests / lists per step & 48 / 384 \\
policy updates per batch & 1 (on-policy) \\
importance ratio & token level \\
loss normalization & over the batch's tokens (TRL bnpo) \\
steps & 2,000; checkpoint every 250 \\
rollout temperature & 1.0 \\
max.\ prompt / completion & 2,048 / 320 tokens (384 on Reddit-Books) \\
advantage scale & per group (\method, Rank-GRPO); batch (batch-scaled arms) \\
advantage $\epsilon$ & $10^{-4}$ \\
length shaping & end $+0.1$; early stop $-0.1$; overflow $-0.1$ per token \\
precision; seed & bf16; 3407 (second seeds 42, 1234) \\
\midrule
\multicolumn{2}{@{}l}{\emph{SFT (Qwen3.5)}} \\
learning rate & $5\cdot10^{-5}$, cosine, 5\% warm-up \\
effective batch & 384 \\
steps & 1,500 (select 500/1,000/1,500) \\
max.\ length & 2,048; loss on completion only \\
\midrule
\multicolumn{2}{@{}l}{\emph{Evaluation}} \\
decoding & greedy, $\le$1,024 tokens (vLLM) \\
\midrule
\multicolumn{2}{@{}l}{\emph{Software}} \\
Qwen3.5 and Llama pairs & TRL 1.14.0, vLLM 0.17.1, transformers 5.3.0, PyTorch 2.10 \\
bridge runs (App.~\ref{app:repro}) & released trainer; TRL 0.21, vLLM 0.10, transformers 4.55.4, PyTorch 2.7.1 \\
\bottomrule
\end{tabular}
\tabnote{RL follows Rank-GRPO's Llama-3.2-3B setting.}
\caption{Hyperparameters.}
\label{tab:hparams}
\end{table}

Table~\ref{tab:hparams} lists the hyperparameters. Rank positions are assigned to the non-blank lines, in order; a line that does not match Rank-GRPO's item pattern is
a zero-reward miss at its position, and every token after the 20th line counts as overflow. We use TRL's token-level
importance ratios with the loss normalized over the batch's tokens. The released trainer uses a rank-level ratio, the
geometric mean over a line's tokens, but copies it to every token of the line and sums the loss over tokens; on-policy
(ratio 1, no clipping) both trainers therefore give each line the sum of its tokens' score functions, as in GSPO's
analysis of sequence-level ratios \citep{zheng2025gspo}. The released trainer excludes newline tokens from the loss;
ours includes them. Rank positions, rewards and advantages are unit-tested against the released trainer. Every arm, the baselines included, uses the same implementation. Drawing requests in proportion to
their responders is implemented by repeating each request's example once per responder. The Qwen3.5 base models are
fine-tuned on the released Reddit-v2 demonstrations (95,753 lists); for Llama-3.2-3B we use the SFT checkpoint released
with Rank-GRPO, and Llama-3.1-8B-Instruct is fine-tuned with Rank-GRPO's SFT recipe (selected at step 1,000). SFT runs
for 1,500 steps and
keeps the step among 500/1,000/1,500 with the best validation R@20. Training prompts longer than 2,048 tokens are dropped (72,233 to 72,125 requests), because the TRL version we use no
longer truncates prompts and Rank-GRPO's maximum prompt length was 2,048. There is no correction for the mismatch
between vLLM and training-time token probabilities, as in the released trainer. Each Qwen3.5 run uses 4 B200 GPUs; total compute is in the
Ethics statement.

\section{Reddit-Books}
\label{app:books}

\paragraph{Construction.} Source: r/suggestmeabook submissions and top-level
comments from the Arctic Shift archive.\footnote{\url{https://github.com/ArthurHeitmann/arctic_shift}} Responders are
distinct top-level commenters, excluding the submitter, deleted content and bot accounts. An LLM extracts the books
each comment recommends; mentions are grounded to the UCSD Goodreads book graph and written as ``Title (Year)'' with the
first-publication year, so the trainer, parser and year-tolerant evaluation are unchanged. Books named in the
submission are the request's mentioned titles. The split is temporal: training, then one validation month, then one
test month. SFT targets are 20-item lists from an LLM grounded to the catalog, the role of Rank-GRPO's demonstrations.
\paragraph{Statistics.} 32,539 training requests (May 2014 to
July 2017; the Goodreads graph ends in 2017) with 4.60 responders on average and 18.1\% single-responder requests;
validation (August 2017) 6,259 rows from 1,254 requests; test (September 2017) 6,333 rows from 1,150 requests.
Extraction precision is 0.997 (388 of 389 extracted books; LLM judge on 200 comments, confirmed by a manual check of 30
comments), 96.9\% of 319,262 mentions are grounded, and year agreement with Open Library is 0.903 $[0.852,0.938]$ on
185 mentions (13 of the 18 disagreements are Open Library errors).
Extraction recall was audited on 100 (submission, responder) pairs drawn at random (seed 42) from all 186,511, including
pairs with no extracted book: an independent LLM (Gemini 2.5 Pro, temperature 0) listed the books each comment
recommends, and our extraction covers 0.877 $[0.818,0.919]$ of them under strict title matching and 0.969
$[0.929,0.986]$ after dropping non-specific items (``books by Ursula K.\ Le Guin''), mapping series to their first
book and allowing fuzzy titles; most strict misses are such matching artefacts. On 30 pairs read by hand, the extraction
covers all 45 specific books recommended. Books
first published before AD 1000 (0.55\% of ground truth) are dropped. The SFT targets are one LLM-written list per training
request (97.0\% of lines grounded; lists keep 15--20 books), and the catalog has 74,474 items. The popularity list (the
20 most-named training books) reaches R@20 7.0 on validation and 5.77 on test.
\paragraph{Prompts.} Mention extraction (GPT-5.6). The user message is the prompt below followed by the thread as
\texttt{REQUEST TITLE}, \texttt{REQUEST BODY} and \texttt{RESPONSES} (\texttt{[r1]}, \texttt{[r2]}, \dots, one per
responder, with that responder's comments joined); threads with many responders are split into chunks, and several
threads can share one call, in which case a note asks for one JSON object per thread.
\begin{quote}\footnotesize
You extract book mentions from a Reddit r/suggestmeabook thread. The thread has a REQUEST (the original post) and
RESPONSES (top-level replies by other users; each response id is one user, with that user's replies merged).
Return strict JSON of this shape:
\texttt{\{"request\_books": [\{"title": str, "author": str or null, "year": int or null, "read": true or false\}],
"responses": \{"r1": [\{"title": str, "author": str or null, "year": int or null, "series": str or null\}], "r2": [],
...\}\}}
Rules:
1. request\_books: every specific book the requester names (books they read, liked, disliked, want to avoid, or have
not read yet). read = true if the requester says or clearly implies they have read it (including disliked or
unfinished books); otherwise false.
2. responses: for EVERY response id, the books that response recommends to the requester, in order of appearance.
Include a book when the responder suggests it as something to read, including replies that are only a title, "+1 for
X", or a list. Exclude books mentioned only for comparison or context (e.g. the requester's own examples, "it's like X
but darker"), books the responder advises against, and books merely mentioned without being suggested. Use [] when a
response recommends no specific book.
3. Only books: novels, non-fiction, short-story collections, poetry, plays, graphic novels, comics, manga. Not films,
TV shows, games, podcasts, websites, magazines, genres, or authors without a book.
4. Series: if a response recommends a series without naming a specific book, give the first book of the series as
"title" and the series name as "series". If it names a specific book, give that book (series may still be filled).
5. A recommendation of an author in general without naming any book ("anything by Terry Pratchett") yields nothing.
6. title: the book's standard published English title, correctly spelled and capitalized, without series numbering,
quotes or markdown (fix obvious typos and abbreviations, e.g. "HHGTTG" -> "The Hitchhiker's Guide to the Galaxy").
author: the author's full name as usually credited, from the text or from your knowledge if you are confident, else
null. year: the year of first publication (original language) if you know it, else null.
7. Never add books that the thread does not mention. Output the JSON object only.
\end{quote}
SFT target lists (GPT-5.6). The user message is Rank-GRPO's Reddit-v2 instruction with ``movie'' replaced by ``book''
and the conversation appended; the system message adds the author and the date:
\begin{quote}\footnotesize
\emph{System:} You are a book recommender. Follow the user's instructions, but end every line with " by <author>"
after the year, e.g. "The Hobbit (1937) by J.R.R. Tolkien". Give exactly 20 lines, one book per line, each a real,
specific book (not a series), different from the books the user says they have read. The conversation took place in
\{month year\}: recommend only books published before \{month year\}.\\
\emph{User:} Pretend you are a book recommender system. I will give you a conversation between a user and you (a
recommender system). Based on the conversation, you need to reply with 20 recommendations. List the standardized
English title of each book in each line in the form of "book title" (first\_publication\_year) with NO extra words or
sentences. Here is the conversation: \{conversation\}
\end{quote}
The author suffix is used for grounding and removed from the training targets.

\paragraph{Validation.} Table~\ref{tab:books} gives the 2B validation scores of the ablation arms on Reddit-Books.
The batch-scaled per-responder arm scores above Rank-GRPO (30.65 against 30.53) and is therefore the $\Delta$ baseline
of Table~\ref{tab:main} there; \method{} (31.27) is level with the best batch-scaled request-level arm (31.50; row 7
minus row 6: $-0.16$ $[-0.70,+0.37]$).

\begin{table}[htbp]
\centering\small
\setlength{\tabcolsep}{2.5pt}
\renewcommand{\arraystretch}{1.08}
\begin{tabular}{@{}rl ccc@{}}
\toprule
 & & R@20 & req.\ R@20 & R@20, $n{\ge}5$ \\
\midrule
0 & SFT & 21.65 & 21.11 & 21.50 \\
\baserow 1 & Rank-GRPO & 30.53 & 29.86 & 30.45 \\
2 & per-resp., batch & 30.65 & 30.06 & 30.53 \\
4 & mean, $\propto n$ & \best{31.50} & \second{30.99} & \second{31.37} \\
5 & mean, request & 31.33 & \best{31.13} & 31.01 \\
6 & bounded sum, batch & \second{31.43} & 30.62 & \best{31.56} \\
\oursrow 7 & \method{} & 31.27 & 30.65 & 31.15 \\
\midrule
\multicolumn{2}{@{}l}{6 $-$ 2} & \dstack{\up{+0.78}}{[0.11,1.46]} & \dstack{\ns{+0.55}}{[$-$0.31,1.41]} & \dstack{\up{+1.03}}{[0.22,1.82]} \\
\multicolumn{2}{@{}l}{6 $-$ 1} & \dstack{\up{+0.90}}{[0.24,1.58]} & \dstack{\ns{+0.76}}{[$-$0.10,1.64]} & \dstack{\up{+1.11}}{[0.28,1.90]} \\
\multicolumn{2}{@{}l}{5 $-$ 4} & \dstack{\ns{$-$0.17}}{[$-$0.71,0.36]} & \dstack{\ns{+0.14}}{[$-$0.47,0.81]} & \dstack{\ns{$-$0.35}}{[$-$1.02,0.27]} \\
\multicolumn{2}{@{}l}{6 $-$ 5} & \dstack{\ns{+0.10}}{[$-$0.49,0.69]} & \dstack{\ns{$-$0.51}}{[$-$1.16,0.12]} & \dstack{\ns{+0.54}}{[$-$0.18,1.26]} \\
\multicolumn{2}{@{}l}{7 $-$ 2} & \dstack{\up{+0.62}}{[0.01,1.21]} & \dstack{\ns{+0.59}}{[$-$0.26,1.42]} & \dstack{\ns{+0.62}}{[$-$0.12,1.33]} \\
\multicolumn{2}{@{}l}{7 $-$ 1} & \dstack{\up{+0.74}}{[0.09,1.37]} & \dstack{\ns{+0.79}}{[$-$0.02,1.59]} & \dstack{\ns{+0.70}}{[$-$0.13,1.47]} \\
\multicolumn{2}{@{}l}{7 $-$ 6} & \dstack{\ns{$-$0.16}}{[$-$0.70,0.37]} & \dstack{\ns{+0.04}}{[$-$0.66,0.69]} & \dstack{\ns{$-$0.40}}{[$-$1.06,0.24]} \\
\bottomrule
\end{tabular}

\tabnote{384-token cap; rows numbered as in Table~\ref{tab:ablation}, row 3 not run. Bottom: paired differences (points).}
\caption{Reddit-Books validation (2B).}
\label{tab:books}
\end{table}

\FloatBarrier
\section{Reddit-Music}
\label{app:music}

\paragraph{GPT-5.6 and the year convention.} 43\% of GPT-5.6's music lines give a catalog artist a year other than its
birth or formation year (often the first release). Told this convention in a system message, GPT-5.6 reaches 12.13,
and \method{} leads by $+2.35$ $[+1.63,+3.08]$; matching artist names only, the two are level (14.24 or 14.12 against
14.49). Matching names only leaves \method{}'s gain over Rank-GRPO unchanged: $+1.09$ $[+0.67,+1.51]$ per row and
$+1.28$ $[+0.74,+1.87]$ per request (with the year: $+1.08$ and $+1.28$).

\paragraph{Construction.} Source: r/musicsuggestions submissions and top-level comments from the Arctic Shift archive,
with responders defined as for Reddit-Books and capped at the first 20 per thread (38 on average before the cap).
Training covers July 2023 to June 2024, validation July 2024 and test August 2024. The item is the artist: replies
usually name a song (78.6\% of mentions), which counts as its artist. GPT-5.6 extracts each responder's recommended
artists with the Reddit-Books prompt adapted to music, and a mention is grounded to a MusicBrainz artist (CC0) by its
normalized name or an alias: a unique match (65.8\% of mentions), else the candidate that another mention in the thread
matches uniquely, else the most active candidate if it is at least five times as active as the next (together 25.2\%).
Ambiguous names (6.5\%) and names without a match (2.6\%) are dropped. Items are written ``Artist (Year)'' with the
MusicBrainz begin year and matched within two years as on Reddit-v2; mentions of artists without a begin year are
dropped (6.3\%). Matching keeps the year because only 4.07\% of the teacher's lines give a year more than two off. Some accounts name the same artist in thread after thread regardless of the
request, so an (author, artist) pair seen in at least 20 threads counts as promotion and all its mentions are dropped
(796 pairs, 10.3\% of grounded mentions). Artists the requester
names are not positives for that request. An audit of 100 random (request, responder) pairs with a Gemini 2.5 Pro judge
gives precision 0.966 $[0.922,0.985]$ and recall 0.921 $[0.867,0.954]$.
\paragraph{Statistics.} 19,159 training requests (142,396 rows, 7.43 responders per request; 95.9\% of a
multi-responder request's union items are named by one responder), validation 1,966 requests (15,060 rows), test 1,951
threads (15,893 rows) with 1,931 distinct prompts, over which per-request metrics and the bootstrap run. The catalog holds the 38,881 artists of all splits and teacher lists plus the 20,000 most active
MusicBrainz artists with a begin year.
\paragraph{Training.} SFT targets are GPT-5.6 lists grounded to the catalog and kept with at least 15 items (18,645
requests). The 2B SFT runs for 1,500 steps and is selected at step 1,500 (validation R@20 9.04). Both RL arms use the
Reddit-Books length rule (Appendix~\ref{app:p0}) with a 320-token cap.
\paragraph{Results.} \method{} leads at every validation checkpoint (Table~\ref{tab:music}); the per-responder arm is
still rising at step 2,000 but stays below \method{}'s step 1,250. On test, per-row N@20 gains $+0.51$
$[+0.33,+0.69]$ and R@10 $+1.04$ $[+0.66,+1.39]$; no list fails to parse, and at the end of training 96.5\% (\method{})
and 99.6\% of lists have exactly 20 items.

\begin{table}[htbp]
\centering\footnotesize
\setlength{\tabcolsep}{2.5pt}
\begin{tabular}{@{}l rrl@{}}
\toprule
RL step & Rank-GRPO & \method{} & $\Delta$ \\
\midrule
250 & 9.45 & 10.11 & $+0.65$ \\
500 & 9.89 & 12.71 & $+2.82$ \\
750 & 10.80 & 13.76 & $+2.96$ \\
1,000 & 11.60 & 13.88 & $+2.29$ \\
1,250 & 12.02 & \best{14.41} & $+2.39$ \\
1,500 & 12.41 & 14.15 & $+1.74$ \\
1,750 & 12.65 & 14.15 & $+1.50$ \\
2,000 & \best{12.91} & 13.88 & $+0.96$ \\
\midrule
test, per row & 13.40 & 14.48 & \up{+1.08} [0.66,1.51] \\
test, per request & 12.77 & 14.05 & \up{+1.28} [0.73,1.87] \\
\bottomrule
\end{tabular}
\tabnote{Validation per-row R@20 (\%) every 250 RL steps (SFT 9.04; bold: selected step) and test R@20 of the selected checkpoints. $\Delta$: \method{} minus Rank-GRPO, paired 95\% CI on test. Table~\ref{tab:main} compares with the batch-scaled per-responder arm, better on validation (test 13.71; $+0.78$ $[+0.38,+1.17]$).}
\caption{Reddit-Music (2B): validation and test R@20.}
\label{tab:music}
\end{table}

\FloatBarrier
\section{Reproducibility notes}
\label{app:repro}

\paragraph{Protocol.} Every run is validated every 250 steps, the checkpoint is chosen on validation and each chosen
checkpoint is tested once. Some runs were interrupted and restarted from their latest
checkpoint $k$ (fresh optimizer state, data order seeded with seed${}+k$); checkpoints not validated before an
interruption were not selection candidates.
On the released Rank-GRPO checkpoint our per-row test R@20 is 23.65 (reported 23.68).

\paragraph{Step time and validation cost.} Validation runs every 250 steps on one GPU and takes 4, 5 and 7 minutes at 2B, 4B and 9B,
which adds 2 to 4\% to the training GPU-hours of either arm. A training step of \method{} takes 4 to 11\% longer
than Rank-GRPO's (5.89 against 5.65\,s at 2B, 12.24 against 11.50\,s at 4B, 16.18 against 14.61\,s at 9B), because request sampling draws requests with
longer threads (86k against 77k prompt tokens per step; completions are equally long). Including it, reaching the selected checkpoint costs
\method{} 31.5, 41.6 and 49.3 GPU-hours on the plateau pairs at 2B (seed 3407), 4B and 9B, against 44.2, 67.9 and 120.6
for Rank-GRPO.

\paragraph{Table and figure conventions.} In tables, bold and underline mark the best and second-best trained arm,
and green differences have CIs above zero. \emph{Exact-20} is the share of lists with exactly 20 items. Table~\ref{tab:data}: union items named once are the share of union items
named by one responder, over requests with at least two responders; the reward-noise block is defined below (95\% CIs of
the responder share: $[84,87]$, $[61,70]$, $[60,91]$). Table~\ref{tab:main}: $\Delta$ is \method{} minus the better of the
two per-responder arms, chosen on validation; steps to match is the share of the baseline's 2,000 steps \method{} needs
to reach the baseline's step-2,000 validation R@20 (interpolated between checkpoints; CI from a bootstrap over
requests); ``$\Delta$, batch scaling'' is \method{} with batch scaling minus the same baseline (on Llama-3.1-8B it is
$+0.45$ $[+0.08,+0.84]$ points above \method{}); at 27B and 35B-A3B the batch-scaled arms start from a retrained SFT checkpoint (test 17.62 and 17.19, against 17.90
and 17.34 for the original: $-0.28$ $[-0.79,+0.25]$ and $-0.16$ $[-0.61,+0.32]$), so the per-responder batch arm is not
used as the $\Delta$ baseline there; within that checkpoint the union leads it by $+2.36$ $[+1.91,+2.88]$ and $+1.59$
$[+1.11,+2.08]$;
 GPT-5.6's music score
uses the year convention discussed in Appendix~\ref{app:music}; all metrics are in Tables~\ref{tab:main-full},
\ref{tab:main-full-llama} and~\ref{tab:main-full-b}.
Figure~\ref{fig:gpuh}: each line is the best validation score among the checkpoints so far;
2B is the mean of two seeds (band: range); GPU-hours are 4 B200 times the logged training step time, validation
excluded; raw curves are in Figure~\ref{fig:gpuh-raw}. Table~\ref{tab:compute}: test R@20 per row with 95\% CIs; steps
are \method{}'s against Rank-GRPO's; the 9B 4,000-step Rank-GRPO run was not trained, and the 9B plateau pair covers
that size; ``best'' compares the checkpoints selected on validation in the runs of
Figure~\ref{fig:gpuh}, with the GPU-hours each arm took to reach its own best; the 18 to 63\% of the abstract are
the GPU-hours \method{} takes to reach Rank-GRPO's best (Appendix~\ref{app:compute}). Table~\ref{tab:ablation}: $\Delta$ is the per-row difference to row 1, req.\ the per-request
score. Table~\ref{tab:judges}: preference is the share of about 300 test requests on which the judge prefers
\method{}'s list, judged in both orders; precision is the share of listed titles the judge marks as fitting the
request, over 500 requests.

\paragraph{Table~\ref{tab:data} statistics.} For 500 training requests with at least two responders we sample eight
lists each at temperature 1 from the 2B SFT policy. Spread is the standard deviation of a fixed list's NDCG@20 across
responders and across lists; a group has no signal when all eight lists score zero against the row's responder (or
against all responders). The responder share is the share of the variance of Rank-GRPO's group-scaled advantage due to
the responder draw, with rows weighted as Rank-GRPO samples them.

\paragraph{Reference systems.} The popularity list holds the 20 items most often named in training, excluding those
mentioned in the request. The kNN prior lists the 20 items with the largest $\mu_i(x)$, the similarity-weighted frequency of item $i$
among the $k=200$ nearest training requests by bge-base-en-v1.5 embeddings of the request text, also excluding mentioned
items. Neighbours exclude every request from the same submission: Reddit-v2 splits one submission into several requests
(72,233 from 48,618 submissions), and without the exclusion a held-out responder's exact item set reappears among the
200 neighbours for 12.2\% of sampled requests. GPT-5.6 and Qwen3.5-27B (post-trained, thinking off) answer zero-shot with the
instruction and parser of the trained policies.

\paragraph{Aggregated rewards versus the per-responder expectation.} For 2,953 of 72,233 training requests (4.1\%) the
rows carry different mentioned-title sets; the aggregated rewards use their union, so for these requests the mean is
not exactly the expectation of the per-row reward. Responder counts are keyed by the exact (title, year) strings, while
matching uses normalized titles and a two-year tolerance; the two differ for 93 requests (0.13\%), holding 0.01\% of the
reward mass.

\paragraph{Bridge runs.} Some runs use the released trainer (rank-level importance ratios, newline-based positions):
the continuations of the released checkpoint and a Llama-3.2-3B pair without our length control. The Llama pairs of
Table~\ref{tab:main} and all Qwen3.5 runs use our port to a newer TRL, whose rank positions, rewards and advantages match the
released trainer on well-formed lists (unit test) and whose importance ratio is token level (Appendix~\ref{app:hparams}).
The continuation of the released 15.8k-step checkpoint ran in the released trainer, without our
length control, from that checkpoint, with the learning rate at its final value ($10^{-7}$), 1,000 steps, seed 3407; validation
R@20 declined with both rewards (bounded sum 26.72 to 25.66, per-responder 26.94 to 26.01 from step 250 to 1,000),
both were selected at step 250 and tested once (23.52 and 23.58; $-0.06$ $[-0.26,+0.12]$): the reward changes the
trajectory at a fixed budget, not a converged policy. \method{} in the released trainer on Llama-3.2-3B,
from the released SFT checkpoint without our length control (2,000 steps, seed 3407), is stable to about step 1,000 and then
drifts to over-long lists (mean 457 tokens and 63 items at step 1,080, clipping 0.22), the failure the length control of
Appendix~\ref{app:p0} prevents. Its best validation checkpoint (step 1,000, before the drift) scores 21.84 on test
against 21.70 for Rank-GRPO retrained in the same trainer: $+0.13$ $[-0.32,+0.54]$ per row, with a CI that
includes zero, and $+0.90$ $[+0.13,+1.66]$ per request. Without the length control the
bounded sum gives no per-row gain on this backbone.

\FloatBarrier
\section{Memorization}
\label{app:memo}
Each Qwen3.5 base model (before SFT) sees three training requests with their responders' items as a fixed few-shot
prompt (seed 42), then a Reddit-v2 test request, and continues greedily for at most 400 tokens; the evaluation parser
reads its list. A test request counts as recalled if the list contains an item of any of its responders.
Table~\ref{tab:memo} gives the share of recalled requests and, within each stratum, \method{} minus Rank-GRPO (seed
3407, per-row R@20, bootstrap over requests). Recall grows with size and concentrates on popular items (9B: item
recall 6.0 for the 20\% most-named training items, 1.5 for items never named in training). On Reddit-v2 the
gain of \method{} has its CI above zero in both strata and is smaller on recalled requests. On Reddit-Books and
Reddit-Music the base models recall fewer requests (2 to 13\%); the gain on the others has its CI above zero, and on the
few recalled ones it is not significant (Reddit-Books) or negative (Reddit-Music, 37 requests). The gain therefore does
not come from threads the model can reproduce. The probe covers the Qwen3.5 sizes we train at on each dataset; it does
not cover the Llama backbones, the 27B and 35B-A3B models, or GPT-5.6, which extracted the Reddit-Books and Reddit-Music labels.

\begin{table}[htbp]
\centering\small
\begin{adjustbox}{max width=\columnwidth}
\begin{tabular}{@{}lcccc@{}}
\toprule
 & & Probe & \multicolumn{2}{c}{$\Delta$ R@20 per row (points)} \\
\cmidrule(l){4-5}
Size & Recalled & R@20 (\%) & recalled & not recalled \\
\midrule
\multicolumn{5}{@{}l}{\grp{Reddit-v2}} \\
2B & 5.8\% & 0.8 & \dstack{\up{+1.35}}{[0.17,2.43]} & \dstack{\up{+1.60}}{[1.10,2.11]} \\
4B & 12.6\% & 1.8 & \dstack{\up{+1.07}}{[0.20,1.93]} & \dstack{\up{+1.73}}{[1.19,2.29]} \\
9B & 18.6\% & 3.2 & \dstack{\up{+1.04}}{[0.29,1.76]} & \dstack{\up{+1.72}}{[1.16,2.29]} \\
\multicolumn{5}{@{}l}{\grp{Reddit-Books}} \\
2B & 6.7\% & 1.4 & \dstack{\ns{+0.31}}{[$-$1.43,2.02]} & \dstack{\up{+0.89}}{[0.15,1.61]} \\
4B & 12.8\% & 2.4 & \dstack{\ns{+0.82}}{[$-$0.37,2.23]} & \dstack{\up{+1.16}}{[0.39,1.98]} \\
\multicolumn{5}{@{}l}{\grp{Reddit-Music}} \\
2B & 1.9\% & 0.2 & \dstack{\dn{$-$2.89}}{[$-$5.71,$-$0.31]} & \dstack{\up{+1.19}}{[0.74,1.62]} \\
\bottomrule
\end{tabular}
\end{adjustbox}
\tabnote{Recalled: share of test requests for which the base model names a responder item from the request alone (Reddit-v2: 14\%, 28\% and 38\% of rows; Reddit-Books: 12\% and 21\%; Reddit-Music: 3\%). Probe R@20: recall of the base model's few-shot list.}
\caption{Memorization probe (test sets; $\Delta$: \method{} minus Rank-GRPO).}
\label{tab:memo}
\end{table}

\FloatBarrier
\section{ProtoQA}
\label{app:protoqa}

\begin{table}[htbp]
\centering\footnotesize
\setlength{\tabcolsep}{2.2pt}
\begin{tabular}{@{}l@{\hspace{3pt}}rrr cc@{}}
\toprule
 & \multicolumn{3}{c}{R@10 by seed} & \multicolumn{2}{c}{$\Delta$} \\
\cmidrule(lr){2-4}\cmidrule(l){5-6}
arm (2B) & 3407 & 42 & 1234 & seed 3407 & 3 seeds \\
\midrule
(a) Rank-GRPO & 42.73 & 41.81 & 42.54 & & \\
(b) exact RB & 44.39 & 44.33 & 44.17 & \dstack{\up{+1.65}}{[0.24,3.08]} & \dstack{\up{+1.94}}{[0.80,3.07]} \\
(c) mean, batch & 42.37 & 43.86 & 44.37 & \dstack{\ns{$-$0.36}}{[$-$2.02,1.15]} & \dstack{\ns{+1.17}}{[$-$1.01,3.35]} \\
(d) \method{} & \best{45.77} & \best{46.15} & \best{46.17} & \dstack{\up{+3.04}}{[1.46,4.64]} & \dstack{\up{+3.67}}{[2.64,4.69]} \\
\bottomrule
\end{tabular}
\tabnote{R@10: share of a question's respondents whose answer is among the ten listed. (a): Rank-GRPO's reward and sampling; RB: Rao--Blackwellization of (a); (d): the union reward with request sampling. $\Delta$: minus (a); over 3 seeds, difference of seed means with a Welch CI.}
\caption{ProtoQA test results (558 questions, per-respondent recall@10).}
\label{tab:protoqa}
\end{table}

\paragraph{Data.} ProtoQA \citep{boratko-etal-2020-protoqa}, released under CC BY 4.0, gives each training question the answer
clusters of a survey of about 100 people with their counts (median 94 respondents per question). Each respondent becomes
one row holding a single cluster: 8,581 training questions give 765,701 rows. Validation (400 questions) and
test (558) are drawn from the scraped development set (seed 42); the 52 crowdsourced development questions, with about
100 raw answers each, form a second test set. We drop 71 training questions that nearly duplicate a held-out question
(word Jaccard $\geq 0.8$). Answers are matched to clusters by a re-implementation of the official WordNet matcher that
agrees with it on all 184,420 answer--cluster pairs we checked.
\paragraph{Training.} SFT targets are lists of ten answers from GPT-5.6 given only the question; the 683 training
questions whose list has fewer than ten distinct answers are left out of SFT. Qwen3.5-2B base is trained for 540
steps, and the checkpoint at step 360 has the best validation recall@10 (41.72). All four
arms start from it with the GRPO settings of the main runs and lists of ten, for 250 steps (about 1.4 passes over the
questions, close to Reddit-v2's 1.33), and the best of ten validation checkpoints is tested once. A position scores a hit when
it matches a cluster that no earlier position matched. (a) gives each (question, respondent) row the respondent's
hits, with group scaling and rows drawn uniformly. (b) averages (a)'s group-scaled per-position advantages over the
question's respondents. (c) rewards a hit by its cluster's share of respondents, with batch scaling and rows drawn as in (a). (d) gives a hit
for any cluster of the question, with group scaling and questions drawn uniformly.
\paragraph{Results.} Validation peaks are 44.95, 45.87, 45.50 and 47.45 for (a)--(d), and (b) and (d) are still
rising at step 250. Lists have exactly ten lines in 98.4, 99.5, 92.3 and 96.6\% of test outputs.
The union minus exact Rao--Blackwellization is $+1.39$ $[-0.04,+2.76]$. On the crowdsourced set, scored with the
official metric, Max Answers@10 moves by $-0.03$ $[-5.57,+4.93]$ for (b), $-1.27$ $[-6.54,+3.74]$ for (c) and $+4.92$
$[+0.13,+9.90]$ for (d), each relative to (a). Max Answers@1 moves by $+4.70$ $[+0.33,+11.07]$ for (b).
\paragraph{Training signal and speed.} Groups whose eight lists all receive the same reward make up 81\% of the
logged groups for (a), 82\% for (b), 41\% for (c) and 45\% for (d) (67, 67, 28 and 28\% of positions). Interpolating the
validation curves, (d) reaches (a)'s step-250 recall (44.49) after 81 steps (32\% of the steps; 95\% CI from a
bootstrap over the 400 questions $[6,70]\%$), and (b) after 98 (39\%, $[13,79]\%$).

\paragraph{Longer training.} With 500 steps (seed 3407, 20 validated checkpoints each), the best validation recall
is 46.79 for (a) at step 475 (43.73 at step 250), 47.75 for (b) at step 450 and 48.58 for (d) at step 350; (d) first
exceeds 46.79 at step 200. On test, (a) scores 45.49, (b) 46.76 ($+1.27$ $[-0.45,+3.04]$) and (d) 46.46 ($+0.96$
$[-0.89,+2.72]$); on the crowdsourced set, Max Answers@10 is 49.36, 54.21 and 54.15.

\paragraph{Memorization.} The questions come from a public game show. On validation, the zero-shot Qwen3.5-2B base
model reaches recall@10 of 27.87 and GPT-5.6, which wrote the SFT targets, 67.05; our 2B models stay below 48. All four
arms start from the same SFT checkpoint, so what the teacher memorized enters every arm alike.

\end{document}